\newcommand{\pyfence}{\texttt{\detokenize{```python}}}
\newcommand{\codefence}{\texttt{\detokenize{```}}}
\crefname{algorithm}{Algorithm}{Algorithms}
\Crefname{algorithm}{Algorithm}{Algorithms}
\theoremstyle{plain}
\newtheorem{theorem}{Theorem}[section]
\newtheorem{proposition}[theorem]{Proposition}
\newtheorem{corollary}[theorem]{Corollary}
\theoremstyle{definition}
\newtheorem{definition}[theorem]{Definition}
\newtheorem{assumption}[theorem]{Assumption}
\theoremstyle{remark}
\definecolor{darkgreen}{rgb}{0.0, 0.5, 0.0}
\newcommand{\greencirc}[1]{\textcolor{darkgreen}{\textcircled{\raisebox{-0.5pt}{\textbf{#1}}}}}
\newcommand{\redcirc}[1]{\textcolor{red}{\textcircled{\raisebox{-0.5pt}{\textbf{#1}}}}}
\title{Do Not Waste Your Rollouts: Recycling Search Experience for Efficient Test-Time Scaling}
\begin{document}

\renewcommand{\qedsymbol}{}
% The \author macro works with any number of authors. There are two commands
% used to separate the names and addresses of multiple authors: \And and \AND.
%
% Using \And between authors leaves it to LaTeX to determine where to break the
% lines. Using \AND forces a line break at that point. So, if LaTeX puts 3 of 4
% authors names on the first line, and the last on the second line, try using
% \AND instead of \And before the third author name.

\renewcommand{\thefootnote}{\fnsymbol{footnote}}

\author{
    \textbf{Xinglin Wang}\textsuperscript{\rm 1}\footnotemark[1], \hspace{0cm}
    \textbf{Jiayi Shi}\textsuperscript{\rm 1}\footnotemark[1], \hspace{0cm}
    \textbf{Shaoxiong Feng}\textsuperscript{\rm 2}\footnotemark[2], \hspace{0cm}
    \textbf{Peiwen Yuan}\textsuperscript{\rm 1}, \hspace{0cm}
    \textbf{Yiwei Li}\textsuperscript{\rm 1}, \hspace{0cm}
    \textbf{Yueqi Zhang}\textsuperscript{\rm 1}, \hspace{0cm} \\
    \textbf{Chuyi Tan}\textsuperscript{\rm 1}, \hspace{0cm}
    \textbf{Ji Zhang}\textsuperscript{\rm 1}, \hspace{0cm}
    \textbf{Boyuan Pan}\textsuperscript{\rm 2}, \hspace{0cm}
    \textbf{Yao Hu}\textsuperscript{\rm 2}, \hspace{0cm}
    \textbf{Kan Li}\textsuperscript{\rm 1}\footnotemark[2] \\
    \textsuperscript{\rm 1} School of Computer Science, Beijing Institute of Technology \\
    \textsuperscript{\rm 2} Xiaohongshu Inc \\
    \texttt{\{wangxinglin,shijiayi,peiwenyuan,liyiwei\}@bit.edu.cn} \\
    \texttt{\{zhangyq,tanchuyi,zhangji,likan\}@bit.edu.cn} \\
    \texttt{shaoxiongfeng2023@gmail.com} \quad
    \texttt{\{panboyuan,xiahou\}@xiaohongshu.com}
}

\footnotetext[1]{Equal contribution.}
\footnotetext[2]{Corresponding author.}

\renewcommand{\thefootnote}{\arabic{footnote}}

\maketitle

\begin{abstract}

Test-Time Scaling enhances the reasoning capabilities of Large Language Models by allocating additional inference compute to broaden the exploration of the solution space. However, existing search strategies typically treat rollouts as disposable samples, where valuable intermediate insights are effectively discarded after each trial. This wasted rollout-level experience leads to substantial computational redundancy, as models repeatedly re-derive discovered conclusions and revisit known dead ends across extensive attempts. To bridge this gap, we propose \textbf{Recycling Search Experience (RSE)}, a self-guided, training-free strategy that turns test-time search from a series of isolated trials into a cumulative, experience-guided process. By actively distilling raw trajectories into a shared experience bank, RSE enables positive recycling of intermediate conclusions to shortcut redundant derivations and negative recycling of failure patterns to prune encountered dead ends. Theoretically, we provide an analysis that formalizes the efficiency gains of RSE over independent sampling in solving complex reasoning tasks. Empirically, extensive experiments on HMMT24, HMMT25, IMO-Bench, and HLE show that RSE consistently outperforms strong baselines under comparable computational budgets, establishing a strong compute-efficiency frontier for test-time scaling.\footnote{Our code and data have been released on \url{https://github.com/WangXinglin/RSE}.}

\end{abstract}
% Those who cannot remember the past are condemned to repeat it.  
% --- George Santayana 

% I have not failed. I've just found 10,000 ways that won't work.
% --- Thomas Edison

% As the challenges of scaling up computation and data resources for pretraining continue to grow, scaling test-time computation has emerged as a critical paradigm for enhancing model performance \citep{brown2024large, snell2024scaling, wu2025inference}. By allocating additional computation at inference time, Test-Time Scaling (TTS) improves the performance of LLMs on complex tasks such as mathematical reasoning by enabling deeper exploration of possible solutions \citep{QwQ, Kimi-k1.5, DeepSeek-R1}. One prominent approach to scaling test-time computation is through search, where diverse candidate solutions are proposed and filtered using a Process Reward Model (PRM) to guide the procedure \citep{chen2024more, snell2024scaling, beeching2024scaling, wu2025inference, liu2025can}. By pruning low-quality paths early and focusing computation on more promising ones, these strategies help steer the search process toward trajectories that are more likely to yield correct answers \citep{setlur2025scaling}.

\section{Introduction}

\begin{flushright}
\begin{minipage}{0.72\linewidth}
\raggedleft
\small\itshape
``Wisdom comes from experience. Experience is often a result of lack of wisdom.''\\[-0.2em]
\small\normalfont --- Terry Pratchett
\end{minipage}
\end{flushright}
\vspace{-0.8em}

% Scaling parameters and data during pre-training has long been the primary driver of LLM performance, yet recent breakthroughs highlight a paradigm shift towards Test-Time Scaling (TTS) \citep{brown2024large, snell2024scaling}. By allocating additional computational budget at inference, models can effectively tackle complex reasoning tasks that were previously out of reach \citep{QwQ, Kimi-k1.5, DeepSeek-R1}. A prominent way to scale test-time computation is through search, where the model samples extensive rollouts to broaden the exploration of the solution space \citep{zhang2025survey, liu2025can}. 

Scaling parameters and data during pre-training has long been the primary driver of LLM performance, yet recent breakthroughs highlight a paradigm shift towards Test-Time Scaling (TTS) \citep{brown2024large}. By leveraging extended computation at test time, models can effectively tackle complex reasoning tasks that were previously out of reach \citep{o1, DeepSeek-R1}.
TTS approaches can be broadly divided into two categories: \emph{Internal} TTS, which trains the LLMs
to “think” slowly with long Chain-of-Thought (CoT)~\citep{QwQ, Kimi-k1.5}, and \emph{External} TTS, which improves performance by allocating additional inference-time compute with a fixed LLM~\citep{snell2024scaling, wu2025inference}. 
Focusing on the latter, test-time search has emerged as a prominent paradigm, where the model samples extensive rollouts to broaden the exploration of the solution space \citep{zhang2025survey, liu2025can}.

% In this work, we focus on \emph{external} TTS: by spending additional computational budget at inference, models can effectively tackle complex reasoning tasks that were previously out of reach \citep{}. 

% self-consistency, beeching2024scaling,

% However, this approach treats rollouts as disposable: valuable  experience discovered in one rollout is rarely reused to guide the others. 
% As a result, these rollouts often repeat the same intermediate steps, revisit the same dead ends, and re-explore reasoning patterns that have effectively already been tried, leading to significant inefficiency and wasted test-time compute.

% While yielding promising performance gains, existing search strategies make only limited use of the rich information produced during exploration.  Standard parallel search strategies treat rollouts as independent trials: reasoning trajectories are generated in isolation, and their intermediate steps are effectively discarded after the final vote. Even structured look-ahead approaches typically rely on \textit{coarse-grained} signals, utilizing only scalar feedback to guide the search.
% Crucially, these approaches overlook the \textit{semantic content} of the rollouts, meaning that a proven lemma in one branch or a logical fallacy identified in another is not treated as reusable experience. As a result, the search process suffers from severe information isolation, where valuable intermediate progress remains trapped within individual rollouts rather than being shared to inform the global search.

\begin{figure*}[th]
\begin{center}
\includegraphics[width=1.0\textwidth]{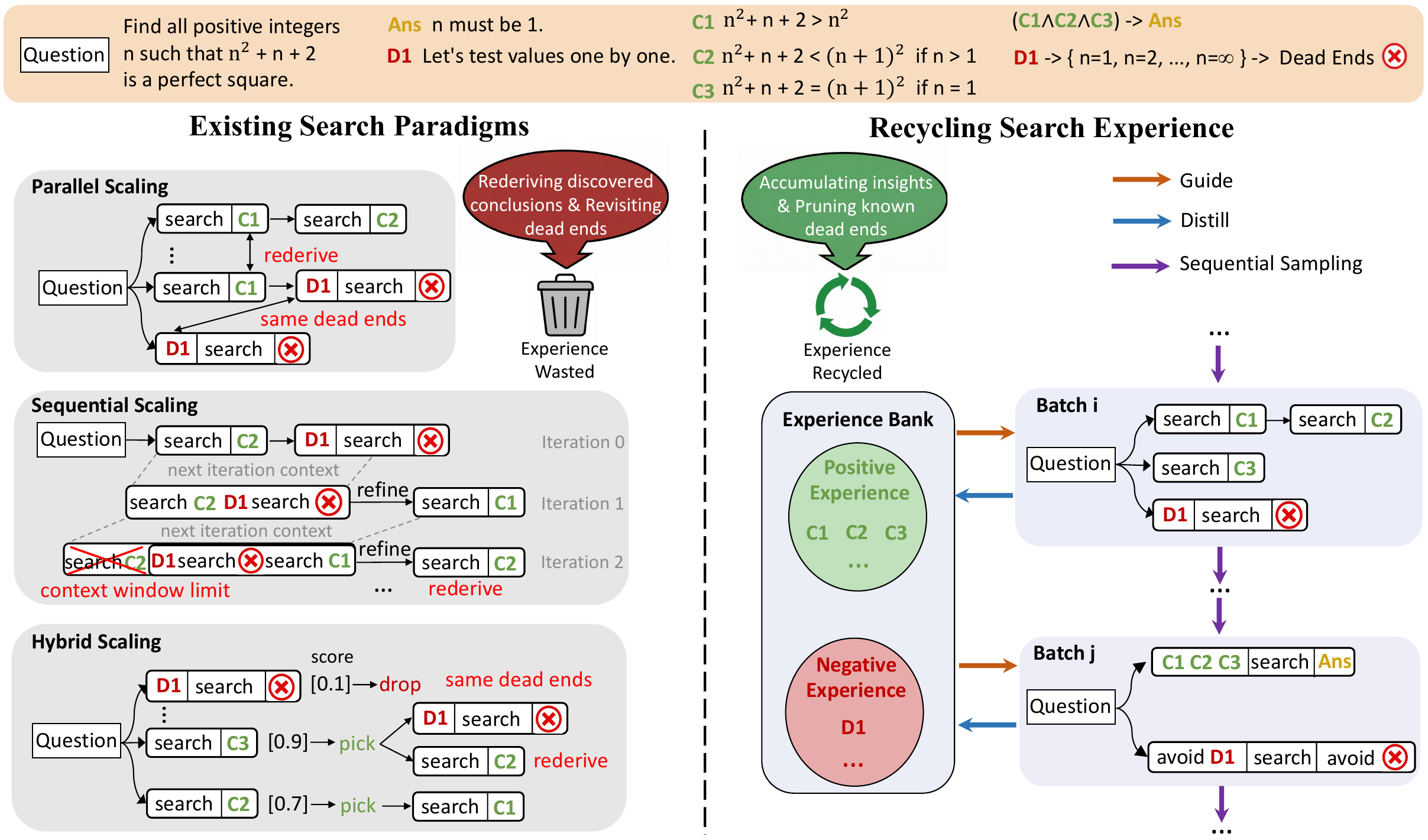}
\end{center}
\caption{\textbf{From disposable rollouts to experience-guided search.} 
\textbf{(Left)} Existing test-time scaling paradigms (parallel, sequential, and hybrid) largely treat rollouts as disposable: intermediate conclusions are repeatedly re-derived and dead ends are revisited across rollouts. 
\textbf{(Right)} Recycling Search Experience (RSE) runs rollouts in batches, distills reusable trajectory information into a shared Experience Bank, and conditions subsequent exploration on it. Positive recycling shares \textbf{intermediate conclusions} (e.g., $\mathbf{C_1}$--$\mathbf{C_3}$) to shortcut redundant derivations, while negative recycling records \textbf{failure patterns} (e.g., $\mathbf{D_1}$) to prune known dead ends.}
\label{fig:overview}
\end{figure*}

% \caption{Overview of Existing search strategies and proposed RSE. (A) Existing strategies treat rollouts as disposable samples, leading to redundant computation as models repeatedly revisit dead ends and re-derive known steps. (B) Recycling Search Experience (RSE) transforms search into a cumulative process. By maintaining a shared experience bank, RSE enables positive recycling to shortcut redundant derivations and negative recycling to prune encountered dead ends. }

While yielding promising performance gains, existing test-time search strategies share a common bottleneck in information utilization: rollouts are often treated as \textit{disposable attempts} rather than \textit{experience to be distilled and reused} (see Figure~\ref{fig:overview}).
Specifically, \textit{parallel scaling} expands breadth via extensive independent rollouts, but typically offers limited cross-branch sharing of intermediate insights \citep{self-consistency, lightman2023let, ESC, wang2025make, zheng2026parallel}; \textit{sequential scaling} iteratively improves a single draft, yet the information it accumulates is confined to unstructured in-context history: finite context windows force truncation or compression of earlier revisions, and the search remains a path-dependent local refinement around the current draft rather than a reusable pool of experience \citep{Self-Refine, shinn2023reflexion, chenteaching}; \textit{hybrid scaling} further introduces tree-structured search control over candidate prefixes, using external process reward models \citep{snell2024scaling, beeching2024scaling, wu2025inference, liu2025can, Wang2025EveryRC} or look-ahead evaluation \citep{yao2023tree, wan2024alphazero, park2025ensembling} to estimate prefix quality and allocate rollout budgets, expanding promising branches while pruning low-value ones.
However, the information retained by such search control is mainly branch-local prefixes and their associated value signals, rather than what has been discovered along completed trajectories or pruned paths.
Intermediate conclusions, useful facts, and failure causes discovered in one branch are rarely distilled into reusable guidance for other branches or later rollouts. 
Despite these different forms of information reuse, existing test-time search methods still lack a systematic mechanism for accumulating rollout-level experience across the overall search process.
As a result, models can repeatedly re-derive insights already found in other rollouts and revisit similar dead ends, leading to substantial redundant computation.

To address this inefficiency, we propose \textbf{Recycling Search Experience (RSE)}, a self-guided, training-free strategy that turns rollouts from disposable samples into reusable search experience to actively guide the search process. Instead of viewing rollouts as isolated trials, we treat search as a cumulative process, where valuable insights from prior trajectories are fed back to guide later exploration. Leveraging the model's self-assessment capability \citep{weng2023large,dhuliawala2024chain}, RSE distills valuable insights from rollouts into a shared experience bank without relying on external supervision, and explicitly conditions subsequent exploration on this bank. This mechanism enables two forms of experience reuse: \textit{positive recycling}, where intermediate conclusions are shared to shortcut redundant derivations, and \textit{negative recycling}, where encountered dead ends serve as constraints to prune the search space. By iterating this distill-and-guide loop, RSE turns test-time scaling from isolated trials into a cumulative, experience-guided search that concentrates compute on promising, unexplored regions of the solution space.

% To theoretically ground the efficiency gains of experience recycling, we model test-time search as collecting a set of required intermediate conclusions. Under mild assumptions, RSE is guaranteed to be no worse than independent sampling in terms of the probability of finding at least one correct solution, thereby avoiding performance regression. Moreover, under a simplified independent coverage model, experience recycling fundamentally changes the scaling of search difficulty: independent sampling can require exponentially many rollouts in the number of required conclusions, whereas recycling reduces this dependence to logarithmic (see Appendix~\ref{sec:theory} for full proofs and analysis).

% Intuitively, solving complex reasoning problems via independent sampling requires a single rollout to sequentially derive all necessary intermediate conclusions—a probability that decays exponentially with problem complexity. In contrast, RSE effectively ``checkpoints'' valid intermediate conclusions, allowing the model to piece together the solution from fragmented partial successes rather than starting from scratch. We provide a theoretical analysis that formalizes and validates this intuition (see Appendix~\ref{sec:theory}).

Intuitively, solving complex reasoning problems via independent sampling requires at least one rollout to independently derive a complete chain of necessary intermediate conclusions, whose probability can decay exponentially with problem complexity. In contrast, RSE effectively ``checkpoints'' valid intermediate conclusions, allowing later rollouts to build on fragmented partial successes rather than starting from scratch. We provide a theoretical analysis that formalizes and supports this intuition, including a bounded-degradation result for imperfect experience extraction (see Appendix~\ref{sec:theory}).

% To validate the effectiveness of RSE, we evaluate it on challenging mathematical reasoning benchmarks, including HMMT24, HMMT25 \citep{balunovic_srimatharena_2025}, IMO-Bench \citep{
% luong2025towards}, and HLE \citep{phan2025humanity}, across a broad range of rollout budgets and policy models. The empirical results show that RSE consistently outperforms strong baseline strategies under comparable computational budgets, highlighting its ability to improve the compute efficiency of TTS.

% To evaluate the effectiveness of RSE, we conduct experiments on challenging mathematical reasoning benchmarks, including HMMT24, HMMT25~\citep{balunovic_srimatharena_2025}, IMO-Bench~\citep{luong2025towards}, and HLE-Math-text~\citep{phan2025humanity}, across a broad range of rollout budgets and model families. The empirical results show that RSE consistently outperforms strong baseline strategies under comparable computational budgets, highlighting its ability to improve the compute efficiency of test-time search.

To evaluate the effectiveness of RSE, we conduct extensive experiments on challenging mathematical reasoning benchmarks, including HMMT24, HMMT25~\citep{balunovic_srimatharena_2025}, IMO-Bench~\citep{luong2025towards}, and HLE-Math-text~\citep{phan2025humanity}, across a broad range of rollout budgets and model families. We further evaluate RSE beyond mathematical reasoning on coding, expert-level QA, and long-horizon planning tasks. The empirical results show that RSE consistently outperforms strong baseline strategies under comparable computational budgets, highlighting both its compute efficiency and cross-domain generalizability.

To summarize, our main contributions are:

\begin{itemize}
\item We propose Recycling Search Experience (RSE), a training-free inference strategy that turns test-time search into a cumulative, experience-guided process by recycling rollout-level experience.
\item We provide a theoretical analysis that formalizes the sample-efficiency advantage of experience recycling over independent sampling, with a bounded-degradation result under imperfect experience extraction.
% \item We empirically evaluate RSE on challenging mathematical reasoning benchmarks (HMMT24, HMMT25, IMO-Bench, HLE-Math-text), demonstrating consistent gains over strong baselines across diverse models and rollout budgets.
\item We empirically evaluate RSE on challenging mathematical reasoning benchmarks, demonstrating consistent gains over strong baselines across diverse models and rollout budgets, and further show its effectiveness beyond mathematical reasoning on coding, expert-level QA, and long-horizon planning tasks.
\end{itemize}

\section{Related Work}

\paragraph{Test-Time Scaling.}
The paradigm of scaling test-time compute has emerged as a critical avenue for enhancing reasoning capabilities~\citep{o1, DeepSeek-R1}.
Early approaches explored two primary dimensions: scaling depth via sequential refinement~\citep{wei2022chain, Self-Refine, shinn2023reflexion}, which iteratively improves a single chain of thought; and scaling width via ensemble sampling~\citep{self-consistency}, which leverages the diversity of independent rollouts to marginalize out errors.
To add structure beyond depth/width scaling, prior work explores tree-based search such as Tree of Thoughts~\citep{yao2023tree} and MCTS~\citep{liu2023don, wan2024alphazero, inouewider, li2025rethinkmcts}, which build a search tree over candidate prefixes and use lookahead evaluation to estimate node quality and guide branch expansion. More recently, Process Reward Models (PRMs) provide step-level feedback for pruning and budget allocation, reducing the reliance on expensive lookahead simulation at inference~\citep{lightman2023let, snell2024scaling, beeching2024scaling, wu2025inference}. Together, these methods make test-time search selective rather than exhaustive, using scores or value estimates of intermediate prefixes to prioritize promising branches, prune low-value ones, and allocate compute more adaptively.
However, these signals are mainly used to control where the search proceeds, deciding which prefixes to expand, prune, or allocate additional compute to, rather than preserving the reusable experience discovered in previous rollouts.
In contrast, RSE focuses on self-guided experience recycling: it distills completed rollouts into positive and negative textual experience and reuses this experience across later rollouts, without requiring ground-truth labels or external evaluators.

Within this line of research, the most relevant concurrent study is PaCoRe~\citep{pacore2025}, which similarly integrates search history into the context. PaCoRe concatenates final answers from prior rollouts for consistency calibration, while discarding the intermediate thinking content, which could be mined as reusable experience. In contrast, RSE actively mines reusable experience from the entire exploration trajectory to guide subsequent exploration.

\paragraph{Agentic Memory.}
Memory mechanisms empower agents to persist information beyond limited context windows~\citep{packer2023memgpt, park2023generative, hu2025memory, zhang2026praise}. In the reasoning domain, existing methods primarily utilize memory for cross-query transfer, accumulating experience across different tasks. Prominent approaches include building skill libraries for code generation~\citep{wang2023voyager}, retrieving relevant trajectories from historical databases~\citep{ouyang2025reasoningbank}, or distilling domain-general heuristics via semantic group advantages~\citep{cai2025training} to aid similar future queries. Similarly, Batch-of-Thought~\citep{yang2026batch} facilitates cross-instance learning within a batch of questions. Distinct from these approaches that seek generalizable patterns from external history or parallel instances, our work targets intra-query memory reuse. RSE specifically recycles the instance-specific exploration experience generated within the current search process to guide subsequent exploration and avoid redundant errors.

% Memory mechanisms empower agents to persist information beyond limited context windows~\citep{packer2023memgpt, park2023generative, hu2025memory}.
% In the reasoning domain, existing methods primarily utilize memory for cross-query transfer, accumulating experience across different tasks.
% Prominent approaches include building skill libraries for code generation~\citep{wang2023voyager} or retrieving relevant trajectories from historical databases~\citep{ouyang2025reasoningbank} to aid similar future queries.
% Similarly, Batch-of-Thought~\citep{yang2026batch} facilitates cross-instance learning within a batch of questions.
% Distinct from these approaches that seek analogies from external history or parallel instances, our work targets intra-query memory reuse.
% RSE specifically recycles the exploration experience generated within the current search process to guide subsequent exploration and avoid redundant errors.

\paragraph{LLM Context Engineering.}
Context engineering studies how to systematically construct, process, and manage the information payload provided to LLMs under a limited context window, beyond prompt wording alone~\citep{mei2025survey}.
A common direction is to improve information density by selecting or compressing long inputs while preserving task-relevant semantics, e.g., LLMLingua~\citep{jiang2023llmlingua, pan2024llmlingua} and Selective Context~\citep{li2023compressing, Yuan2025KardiaR1UL}, as well as long-horizon context summarization for agent settings~\citep{wu2025resum, kang2025acon}.
RSE can be viewed through the same lens in a search setting: instead of retaining verbose, low-signal rollout traces, RSE distills the search process into compact, structured experience and maintains it via deduplication, maximizing the utility of prior search experience within a fixed context budget.

\section{Methodology}
\label{sec:method}

Recycling Search Experience (RSE) is motivated by a simple principle: test-time search should not discard the valuable information contained in rollouts. We therefore treat search as a cumulative process, where rollouts from earlier rounds are distilled into reusable experience and fed back to guide later exploration under the same compute budget.
The overall workflow of RSE is summarized in Algorithm \ref{alg:rse}, comprising three coordinated components: Batched Experience-Guided Search, Self-Guided Distillation, and Semantic Experience Deduplication.

\subsection{Batched Experience-Guided Search}
\label{sec:batched_search}
Standard parallel sampling ensures breadth but suffers from \textit{isolation}, where rollouts cannot share intermediate discoveries. Conversely, purely sequential refinement enables reuse but is hindered by limited parallelism and context limits, often forcing truncation or compression of earlier attempts and keeping exploration local around the current draft. A natural question is \textit{whether we can share experience during a fully parallel rollout}. In practice, mid-trajectory synchronization is difficult to define and implement robustly:
evaluating partial reasoning states is unreliable (e.g., distinguishing an actual error from a complex intermediate step) \citep{pandit2025hard2verify}, and injecting global updates changes the conditioning context, which can disrupt the current reasoning trajectory and increase output variance \citep{laban2025llms}.

These issues make real-time experience sharing impractical without specific architectural changes.
RSE therefore adopts a \textbf{batched experience-guided} protocol that offers a simple and stable coordination interface. We partition the rollout budget into $R$ rounds; in each round $r$, the model generates a batch of $K_r$ trajectories in parallel. Across rounds, the system maintains a global \textbf{Experience Bank} 
$(\mathcal{E}_{r}^{pos}, \mathcal{E}_{r}^{neg})$, which stores the accumulated experience entries from previous rounds. 
Before launching round $r$, we serialize this bank into the prompt (see Appendix~\ref{app:prompts} for details), so all rollouts in the batch start from a synchronized state. This design preserves within-round diversity while enabling cross-round information reuse, thereby balancing information reuse with practical inference latency.

\subsection{Self-Guided Experience Distillation}
\label{sec:distill}

Simply concatenating all previous trajectories into the context window is impractical due to length constraints and the low signal-to-noise ratio of raw reasoning chains. We leverage the model's intrinsic capability to critique its own reasoning and identify valuable experience without external supervision, avoiding the need for ground-truth labels or external reward models, thereby making RSE applicable to general search environments.

Based on this, we propose \textbf{Self-Guided Experience Distillation}.
Instead of retaining entire trajectories, we employ a lightweight prompting step to distill each rollout $\omega$ into discrete, structured experience items (see Appendix~\ref{app:prompts} for details). Specifically, for every generated trajectory, the model extracts:
(1) \textbf{Positive Experience ($\mathcal{E}^{pos}$):} Verified propositions or lemmas that serve as ``truth anchors'' for future batches;
(2) \textbf{Negative Experience ($\mathcal{E}^{neg}$):} Critical pitfalls or strategic dead ends that act as ``negative constraints.''
This explicit extraction transforms unstructured exploration logs into compact, actionable guidance, facilitating high-quality experience reuse for future exploration.

\begin{wrapfigure}[26]{r}{0.575\textwidth}
\vspace{-0.8em}
\hrule height 0.8pt
\vspace{2pt}
\refstepcounter{algorithm}
\label{alg:rse}
\noindent\textbf{Algorithm~\thealgorithm} Recycling Search Experience (RSE)
\vspace{2pt}
\hrule height 0.4pt
\vspace{3pt}

\small
\begin{algorithmic}[1]
\REQUIRE Problem $x$, model $\pi$, rounds $R$, batch sizes $\{K_r\}_{r=1}^R$, similarity threshold $\tau$.
\ENSURE Final batch of trajectories $\Omega_R$.
\STATE Initialize Experience Bank: $\mathcal{E}_0^{pos}\gets\emptyset$, $\mathcal{E}_0^{neg}\gets\emptyset$.
\FOR{$r = 1$ \textbf{to} $R$}
    \STATE \textcolor{gray}{\small \textbf{// Step 1: Batched Experience-Guided Search (Sec.~\ref{sec:batched_search})}}
    \STATE $u_r \gets \mathsf{Prompt}(x,(\mathcal{E}_{r-1}^{pos},\mathcal{E}_{r-1}^{neg}))$.
    \STATE Sample in parallel: $\Omega_r \gets \{\omega_r^{(i)} \sim \pi(\cdot \mid u_r)\}_{i=1}^{K_r}$.

    \STATE \textcolor{gray}{\small \textbf{// Step 2: Self-Guided Experience Distillation (Sec.~\ref{sec:distill})}}
    \STATE Initialize batch experience: $\Delta_r^{pos}\gets\emptyset$, $\Delta_r^{neg}\gets\emptyset$.
    \FOR{\textbf{each} rollout $\omega \in \Omega_r$}
        \STATE $(\delta^{pos}, \delta^{neg}) \gets \mathsf{Distill}(x,\omega)$ 
        \STATE $\Delta_r^{pos} \gets \Delta_r^{pos} \cup \delta^{pos}$; \quad $\Delta_r^{neg} \gets \Delta_r^{neg} \cup \delta^{neg}$.
    \ENDFOR

    \STATE \textcolor{gray}{\small \textbf{// Step 3: Semantic Experience Deduplication (Sec.~\ref{sec:dedup})}}
    \FOR{type $\in \{pos, neg\}$}
        \STATE $\mathcal{E}_r^{\text{type}} \gets \mathcal{E}_{r-1}^{\text{type}}$
        \FOR{\textbf{each} $\delta \in \Delta_r^{\text{type}}$}
            \IF{$\mathcal{E}_r^{\text{type}}=\emptyset$ \textbf{or} $\max_{e \in \mathcal{E}_r^{\text{type}}} \mathsf{Sim}(\delta, e) < \tau$}
                \STATE $\mathcal{E}_r^{\text{type}} \gets \mathcal{E}_r^{\text{type}} \cup \{\delta\}$
            \ENDIF
        \ENDFOR
    \ENDFOR
\ENDFOR
\STATE \textbf{return} $\Omega_R$.
\end{algorithmic}

\vspace{2pt}
\hrule height 0.4pt
\vspace{-0.8em}
\end{wrapfigure}

\subsection{Semantic Experience Deduplication}
\label{sec:dedup}

Even with distilled items, the accumulated experience can rapidly grow over multiple rounds, risking context overflow. Moreover, parallel rollouts within a batch often exhibit high redundancy: simple steps or common errors tend to be discovered repeatedly by multiple trajectories. Without mitigation, these repetitive items could dominate the prompt, crowding out rarer, high-value insights (Table~\ref{tab:ablation_tau_combined}).

To address this, we employ a \textbf{Semantic Experience Deduplication} strategy to maintain the Experience Bank as a diverse set rather than a simple list.
As detailed in Algorithm \ref{alg:rse}, we employ an incremental greedy selection strategy. By evaluating candidates against the dynamically updated Experience Bank, we ensure that each new entry is semantically distinct from all previously admitted items, effectively filtering out redundancy both from historical rounds and within the current batch.
This mechanism prevents context explosion by filtering out repetitive experiences, thereby maintaining high information density within the limited context window.
\section{Experiments}
\label{sec:experiments}

In this section, we present the experimental evaluation of RSE. We begin by outlining the experimental setup, followed by the main results comparing RSE against baselines, and an in-depth analysis focusing on scalability, efficiency, reasoning dynamic and the impact of experience context construction strategies. Additionally, a concrete case analysis is provided in Appendix~\ref{app:rse_case}.

% Define a helper for gray text to keep table clean
\newcommand{\g}[1]{\textcolor{gray}{#1}}
\newcommand{\na}{--} % Assuming \na was defined as a dash or empty

\subsection{Experimental Setup}

\paragraph{Benchmarks and Models.}
We primarily evaluate the proposed RSE strategy on mathematical reasoning with four challenging benchmarks: HMMT24, HMMT25, IMO-AnswerBench, and a 100-sample math subset from Humanity's Last Exam (HLE-Math-text). These datasets serve as reliable proxies for advanced problem-solving capabilities requiring complex multi-step reasoning. 
We evaluate RSE primarily on models specialized for complex reasoning tasks, spanning diverse scales and architectures: \textsc{Qwen3-30B-A3B-Thinking-2507}, \textsc{Qwen3-4B-Thinking-2507} \citep{llm-qwen3}, \textsc{Phi-4-reasoning} \citep{llm-phi4}, and \textsc{Deepseek-V3.2} \citep{llm-deepseek}\footnote{The results of \textsc{Deepseek-V3.2} are detailed in Appendix~\ref{app:table-ds-v3.2}.}. 
To further verify the cross-task generalizability of RSE, we extend our empirical evaluation to three additional tasks: coding (LiveCodeBench-v6 \citep{jainlivecodebenchcus}), expert-level general reasoning (GPQA-Diamond \citep{rein2024gpqa}), and complex long-horizon planning (TravelPlanner \citep{xie2024travelplanner}).
We additionally extend our analysis to general-purpose instruction-tuned models to demonstrate the universality of RSE across training paradigms.

\begin{table*}[t]
    \centering
    \small
    \renewcommand{\arraystretch}{0.85}
    % \scriptsize
    \caption{Performance comparison across different models and iterations on mathematical reasoning benchmarks. Values are reported as Pass@1 (\%). ``It0'' denotes the base/initial performance, while ``It1-3'' represent subsequent iterative refinements. Gray values in It0 indicate the Base performance carried over for comparison. Best performance in each iteration is \textbf{bolded}.}
    % \resizebox{\textwidth}{!}{%
    \setlength{\tabcolsep}{3.2pt}
    \begin{tabular}{lcccccccccccccccc}
        \toprule
        \multirow{2}{*}{\textbf{Method}} & \multicolumn{4}{c}{\textbf{HMMT24}} & \multicolumn{4}{c}{\textbf{HMMT25}} & \multicolumn{4}{c}{\textbf{IMO-AnswerBench}} & \multicolumn{4}{c}{\textbf{HLE-Math-text}} \\
        \cmidrule(lr){2-5} \cmidrule(lr){6-9} \cmidrule(lr){10-13} \cmidrule(lr){14-17}
         & It0 & It1 & It2 & It3 & It0 & It1 & It2 & It3 & It0 & It1 & It2 & It3 & It0 & It1 & It2 & It3 \\
        \midrule
        % --- Qwen3-30B ---
        \multicolumn{17}{c}{\textit{\textbf{Qwen3-30B-A3B-Thinking-2507}}} \\
        \midrule
        Base & 57.4 & \na & \na & \na & 69.0 & \na & \na & \na & 50.5 & \na & \na & \na & 24.0 & \na & \na & \na \\
        MV@128 & 68.9 & \na & \na & \na & 74.8 & \na & \na & \na & 56.3 & \na & \na & \na & 33.4 & \na & \na & \na \\
        Self-Ref & \g{57.4} & 62.7 & 64.5 & 65.6 & \g{69.0} & 71.4 & 72.5 & 72.6 & \g{50.5} & 49.5 & 49.9 & 50.1 & \g{24.0} & 26.1 & 26.0 & 26.2 \\
        PaCoRe & \g{57.4} & 70.6 & 72.2 & 73.1 & \g{69.0} & 78.4 & 79.7 & 80.2 & \g{50.5} & 57.3 & 57.8 & 58.0 & \g{24.0} & 38.0 & 40.2 & 40.7 \\
        RSE & \g{57.4} & \textbf{71.9} & \textbf{73.7} & \textbf{74.4} & \g{69.0} & \textbf{81.3} & \textbf{82.9} & \textbf{83.9} & \g{50.5} & \textbf{59.2} & \textbf{60.1} & \textbf{60.3} & \g{24.0} & \textbf{39.9} & \textbf{43.0} & \textbf{44.8} \\
        \midrule
        % --- Qwen3-4B ---
        \multicolumn{17}{c}{\textit{\textbf{Qwen3-4B-Thinking-2507}}} \\
        \midrule
        Base & 42.6 & \na & \na & \na & 54.0 & \na & \na & \na & 42.1 & \na & \na & \na & 14.2 & \na & \na & \na \\
        MV@128 & 53.5 & \na & \na & \na & 61.8 & \na & \na & \na & 48.5 & \na & \na & \na & 17.1 & \na & \na & \na \\
        Self-Ref & \g{42.6} & 47.0 & 48.8 & 49.7 & \g{54.0} & 58.2 & 58.6 & 59.4 & \g{42.1} & 43.5 & 46.2 & 46.3 & \g{14.2} & 14.8 & 14.8 & 15.0 \\
        PaCoRe & \g{42.6} & \textbf{56.7} & 54.6 & 54.9 & \g{54.0} & \textbf{69.6} & 70.5 & 70.0 & \g{42.1} & 46.0 & 48.2 & 49.0 & \g{14.2} & \textbf{19.6} & 19.4 & 19.5 \\
        RSE & \g{42.6} & 54.3 & \textbf{55.1} & \textbf{56.0} & \g{54.0} & 68.6 & \textbf{72.2} & \textbf{73.5} & \g{42.1} & \textbf{48.6} & \textbf{49.6} & \textbf{49.6} & \g{14.2} & 19.5 & \textbf{20.2} & \textbf{20.8} \\
        \midrule
        % --- Phi-4 ---
        \multicolumn{17}{c}{\textit{\textbf{Phi-4-Reasoning}}} \\
        \midrule
        Base & 40.3 & \na & \na & \na & 43.9 & \na & \na & \na & 34.5 & \na & \na & \na & 8.5 & \na & \na & \na \\
        MV@128 & 53.3 & \na & \na & \na & 56.7 & \na & \na & \na & 46.0 & \na & \na & \na & 9.0 & \na & \na & \na \\
        Self-Ref & \g{40.3} & 41.4 & 42.3 & 43.0 & \g{43.9} & 46.8 & 48.5 & 49.9 & \g{34.5} & 35.5 & 35.1 & 34.7 & \g{8.5} & 8.1 & 8.0 & 8.0 \\
        PaCoRe & \g{40.3} & 49.0 & 52.4 & 52.3 & \g{43.9} & 59.0 & 62.0 & 63.8 & \g{34.5} & 39.2 & 40.1 & 40.6 & \g{8.5} & 8.1 & 9.3 & 9.6 \\
        RSE & \g{40.3} & \textbf{51.0} & \textbf{55.9} & \textbf{56.5} & \g{43.9} & \textbf{60.2} & \textbf{66.0} & \textbf{67.5} & \g{34.5} & \textbf{40.5} & \textbf{40.3} & \textbf{42.3} & \g{8.5} & \textbf{9.0} & \textbf{10.9} & \textbf{11.4} \\
        \bottomrule
    \end{tabular}%
    % }
    \label{tab:benchmark_results}
\end{table*}

\paragraph{Baselines.} We compare RSE against three distinct categories of inference strategies: 
(1) \textbf{Standard Sampling}, evaluating the model's intrinsic performance; 
(2) \textbf{Majority Voting} \citep{self-consistency}, mitigating stochasticity by aggregating consensus across multiple reasoning paths;
(3) \textbf{Self-Refine} \citep{Self-Refine}, performing iterative refinement on a single reasoning trajectory; and
(4) \textbf{PaCoRe} \citep{pacore2025}, embedding historical information via direct concatenation of past outputs.
we exclude structured search methods such as Tree-of-Thought, MCTS-style search, and PRM-guided search, as they typically introduce additional external guidance.
Additionally, we provide a supplementary comparison with confidence-based strategies, specifically \textbf{Self-Certainty} \citep{baseline-self-c} and \textbf{Deep-Confidence} \citep{baseline-deep-c}. Detailed results for these methods are discussed in Appendix~\ref{app:add_baselines}.

\paragraph{Evaluation Protocol.}
Unless otherwise specified, we adhere to the following configurations to ensure rigorous comparison. 
For Standard Sampling, we conduct a single large-scale experiment consisting of $1,024$ independent stochastic rollouts to estimate the intrinsic pass@1 accuracy. 
For Majority Voting, results are derived from a budget of $128$ rollouts. 
For both RSE and Sequential Optimization Baselines, we standardize the process into a common multi-round search framework. By default, this consists of $1$ reference initialization iteration $3$ subsequent optimization iterations. 
Specifically, to ensure statistical reliability, we conduct $8$ independent runs for all experiments, each initialized with a population of $32$ distinct reference responses. In each iteration, the method generates $32$ new rollouts per run, which serve as reference responses for the next round. 
To monitor the optimization progress, we report the Pass@1 accuracy of the newly generated rollouts at each iteration, averaged over the $8$ independent runs.
Further implementation details are provided in Appendix~\ref{app:exp_details}. The full set of prompt templates is detailed in Appendix~\ref{app:prompts}.

\paragraph{Model-Specific Configurations.}
We evaluate \textsc{Deepseek-V3.2} on the 100-sample HLE-Math-text subset exclusively. 
For \textsc{Phi-4-reasoning}, we implement specific context truncation to accommodate PaCoRe within the model's 32k window limit. More details are provided in Appendix~\ref{app:model_adaptations}.

\subsection{Main Results}
\label{sec:main_results}

The main experimental results across four mathematical reasoning benchmarks and three model architectures are summarized in Table~\ref{tab:benchmark_results}. We observe consistent trends demonstrating the superiority of RSE over both standard sampling and sequential optimization baselines.

\paragraph{RSE Achieves Best Performance among Baselines.}
As shown in Table~\ref{tab:benchmark_results}, RSE consistently outperforms the selected training-free and self-guided baselines under the same multi-round search protocol.
Crucially, this superiority demonstrates robustness across both model scales and training paradigms. 
On smaller models like \textsc{Qwen3-4B-Thinking}, RSE unlocks capabilities that were previously inaccessible, enabling it to reach 73.5\% on HMMT25, competitively matching the baseline performance of the much larger 30B model (69.0\%). This indicates that effective test-time search can compensate for parametric limitations to a significant extent. 
Furthermore, extended evaluations on instruction-tuned models (Table~\ref{tab:instruct_results}) confirm that RSE generalizes beyond reasoning-specialized architectures. 
Moreover, we provide a significance analysis in Appendix~\ref{app:significance_analysis}, showing that the final-iteration improvements of RSE over the strongest iterative baselines are statistically reliable across repeated runs.

\paragraph{Scaling Dynamics: Higher Ceilings and Delayed Convergence.}

Beyond cumulative gains, RSE exhibits a more favorable empirical scaling trajectory than the compared baselines.
First, RSE reaches a higher final performance level than Self-Refine and PaCoRe within the tested iteration budget, indicating a higher empirical ceiling in our experiments.
Second, RSE shows delayed convergence compared with these sequential baselines.
While Self-Refine and PaCoRe typically exhibit early saturation, RSE continues to obtain gains through Iteration 3, suggesting that experience recycling can sustain the marginal utility of later search rounds.

\paragraph{Efficacy on Hard Reasoning Tasks.}

On challenging benchmarks like HLE-Math-text, while Majority Voting offers significant gains, RSE consistently achieves a much better performance. This divergence highlights a critical distinction: whereas MV remains constrained by the statistical dominance of the model's initial distribution, RSE effectively navigates the reasoning search space to extract valid solutions initially assigned low probabilities. By iteratively refining these paths, RSE can partially mitigate recurring reasoning errors that consensus-based methods may fail to correct.

\begin{table}[ht]
\small
\centering
\renewcommand{\arraystretch}{0.85}
\caption{Performance evaluation of RSE and baselines beyond mathematical reasoning. MV@128 is omitted for TravelPlanner as consensus voting is not directly applicable to open-ended planning tasks. All experiments are conducted with \textsc{Qwen3-30B-A3B-Thinking-2507}.}
\label{tab:generality_evaluation}
\begin{tabular}{llccccc}
\toprule
\textbf{Benchmark} & \textbf{Metric} & \textbf{Base} & \textbf{Self-Refine} & \textbf{PaCoRe} & \textbf{MV@128} & \textbf{RSE } \\
\midrule
LiveCodeBench-v6 & pass@1 & 61.19 & 64.13 & 60.26 & 64.63 & \textbf{68.55} \\
GPQA-Diamond & pass@1 & 69.17 & 70.77 & 69.91 & 70.72 & \textbf{72.67} \\
TravelPlanner & Final Pass Rate & 0.76 & 28.12 & 0.00 & -- & \textbf{34.44} \\
\bottomrule
\end{tabular}
\end{table}

\subsection{Generality Beyond Mathematical Reasoning}
\label{app:add_benchmarks}

As illustrated in Table \ref{tab:generality_evaluation}, RSE consistently outperforms all baselines across coding, expert-level general reasoning and complex long-horizon planning tasks. Notably, while Majority Voting (MV@128) achieves competitive results on closed-form QA tasks, it is inherently inapplicable to open-ended planning tasks like TravelPlanner due to the lack of a well-defined answer space for consensus aggregation. Furthermore, PaCoRe experiences a complete performance collapse on TravelPlanner (0.00 Final Pass Rate), which indicates that directly concatenating verbose planning trajectories fails to provide effective guidance and likely overwhelms the model's reasoning process. In contrast, RSE's structured extraction of intermediate insights and failure patterns proves highly effective across all domains, yielding a pass@1 of 68.55 on LiveCodeBench-v6 and a Final Pass Rate of 34.44 on TravelPlanner. These results suggest that RSE is not limited to mathematical reasoning and can generalize to broader reasoning domains.

\subsection{Analysis}
\label{sec:ablation}

We analyze the effectiveness of RSE from three key perspectives: (1) Scalability and Efficiency, assessing performance gains across search depth, width, and compute cost; (2) Reasoning Dynamics, investigating how experience recycling influences exploration behavior; and (3) Experience Composition, dissecting the impact of distinct experience types and selection strategies.
Additionally, Appendix~\ref{app:additional_analyis} provides supplementary analyses covering significance analysis, comparisons with confidence-based baselines, robustness and configuration sensitivity studies and quality analysis of distilled experiences.
Unless otherwise specified, all experiments in analysis experiments are conducted using \textsc{Qwen3-30B-A3B-Thinking-2507}.

\begin{figure*}[t]
    \centering
    \includegraphics[width=1.0\textwidth]{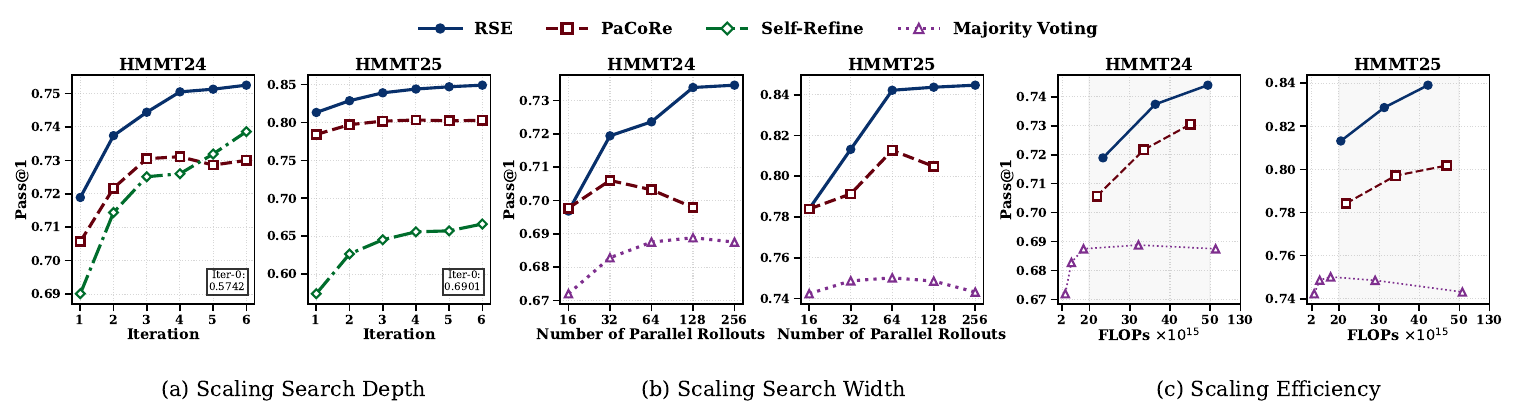}
    \caption{
    \textbf{Scalability and Efficiency Analysis of Test-Time Search.} We evaluate the scaling behaviors of different search strategies across three dimensions: (a) search depth, (b) search width, and (c) computational efficiency .
    }
    \label{fig:analysis_scaling}
\end{figure*}

\subsubsection{Scalability and Efficiency}

We systematically dissect the scalability of RSE across three fundamental dimensions: search depth, search width, and compute flops. The results are summarized in Figure~\ref{fig:analysis_scaling}. 
Furthermore, beyond structural scaling, RSE demonstrates robust scaling behavior across different configurations. 
It exhibits predictable and stable performance trends with respect to both experience bank capacity and per-iteration rollout counts, while also tolerating noise interference within the experience bank. Detailed analyses are provided in Appendix~\ref{sec:robustness_sensitivity}.

\paragraph{Scaling Search Depth.}
Figure~\ref{fig:analysis_scaling}(a) illustrates a distinct divergence in scaling behaviors (Majority Voting is excluded from this comparison, as its non-iterative nature lacks the mechanism to refine prior rollouts).
PaCoRe hits a premature ceiling by Iteration 3, driven by the verification-centric bottleneck where repeated validation against fixed references yields rapid diminishing returns (detailed in Appendix~\ref{app:wordcloud_analysis}).
In contrast, RSE sustains significant gains through Iteration 6, effectively raising the performance upper bound.
Crucially, although Self-Refine also benefits from increased depth without immediate saturation, it exhibits a severe efficiency deficit, consistently trailing RSE by a significant margin in absolute pass@1.
This indicates that RSE optimizes the search trajectory more effectively: by synthesizing diverse population-level experiences rather than iterating on isolated traces or checking against a static consensus, RSE significantly enhances the marginal utility of each search step.

\paragraph{Scaling Search Width.}
Figure~\ref{fig:analysis_scaling}(b) investigates the impact of scaling the reference number ($N_{ref}$).
The results underscore a critical divergence in scaling behaviors.
Majority Voting exhibits a flat performance trajectory, indicating that simply increasing rollout width yields negligible marginal utility without a mechanism to aggregate inter-path experiences.
While PaCoRe attempts to utilize such experience via concatenation, it faces severe scalability bottlenecks. First, PaCoRe is fundamentally constrained by the model's context window. This limitation imposes a hard ceiling on usable references, compelling us to omit the evaluation at $N_{ref}=256$ as the concatenated input exceeds the 256k context window of \textsc{Qwen3-30B-A3B-Thinking-2507}.
Furthermore, we observe that performance actively regresses beyond a certain threshold. We hypothesize that this stems from attention dispersion arising from the excessively long context \citep{lost-in-the-mid}.
RSE circumvents these limitations by decoupling experience accumulation from context length. It effectively assimilates heterogeneous insights from a growing reference pool without inducing context saturation, thereby maintaining robust performance gains and continuous reasoning refinement as width scales.

\begin{wrapfigure}{r}{0.4\textwidth}
    \vspace{-1.5em}
    \centering
    \includegraphics[width=\linewidth]{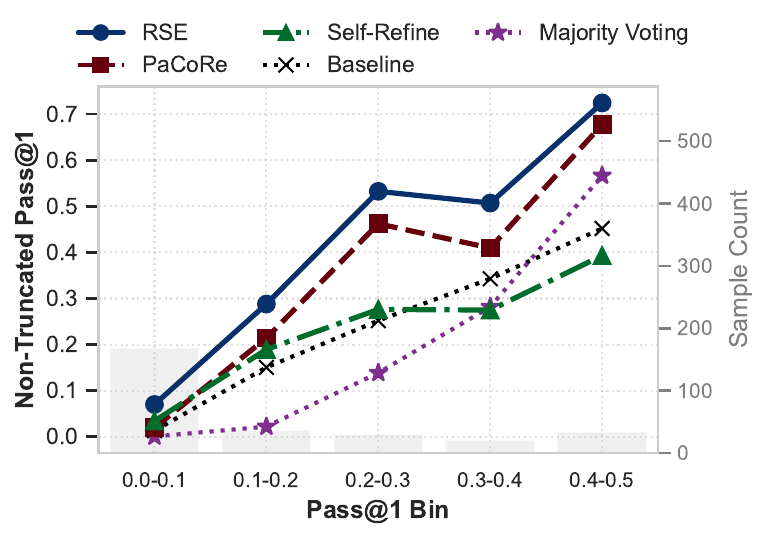}
    \caption{Non-truncated Pass@1 across varying difficulty problems. Samples are stratified by baseline Non-truncated Pass@1, with gray bars indicating the sample count distribution per bin.}
    \label{fig:acc_without_null_text}
    \vspace{-1em}
\end{wrapfigure}

\paragraph{Scaling Efficiency.}
Finally, to rigorously assess the cost-effectiveness of our method, we benchmark performance against the total computational cost (measured in FLOPs; detailed calculation protocols are provided in Appendix~\ref{app:flops_calculation}).
As shown in Figure~\ref{fig:analysis_scaling}(c), RSE establishes a superior Pareto frontier compared to both PaCoRe and Majority Voting. 
RSE demonstrates a steep ascent, delivering the highest accuracy gains per unit of compute budget. This confirms that structured experience recycling represents a significantly more compute-optimal strategy for scaling test-time inference compared to simple aggregation or verification-based concatenation.
Additionally, we report stage-wise overhead and wall-time analysis in Appendix~\ref{sec:overhead_analysis} and Appendix~\ref{sec:wall_clock_analysis}.

\subsubsection{Reasoning Dynamics on Hard Problems.}

Figure~\ref{fig:null_text_rate} reveals that the issue of output truncation is effectively mitigated from the first iteration onwards. 
To decouple genuine reasoning improvements from mere format-level restoration (i.e., the resolution of truncation errors), we employ a specialized metric for this analysis: \textit{Non-Truncated Pass@1}. 
Calculated strictly on completed rollouts, this metric isolates the model's logical derivation capabilities. 
Leveraging this metric, we conduct an aggregated analysis across four benchmarks, specifically filtering for ``Hard Problems'', defined as instances where the baseline (i.e., \textsc{Qwen3-30B-A3B-Thinking-2507} Standard Sampling) \textit{Non-Truncated Pass@1} falls within the $[0, 0.5]$ interval \citep{snell2024scaling}.

Figure~\ref{fig:acc_without_null_text} highlights RSE's superior robustness in hard samples, and this advantage is particularly pronounced in the extremely-hard bracket ($[0, 0.1]$) where valid solutions are scarce. In the extremely-hard bracket, while PaCoRe stagnates due to a behavioral collapse towards passive reference verification, RSE sustains performance gains by preserving independent exploration capacity. 
Analysis of reasoning dynamics confirms that RSE achieves superior efficiency by pruning redundancy, as evidenced by the sharp reduction in computational overhead from $23\text{k}$ to $8.4\text{k}$ tokens in Table~\ref{tab:generation_length}.
Crucially, despite this condensation, lexical analysis in Figure~\ref{fig:wordcloud} demonstrates the persistence of cognitive markers like ``\textit{wait}'', indicating that RSE leverages experience to guide the search while keeping the deductive chain intact. Unlike PaCoRe which abandons deep cognition in favor of checking against a potentially flawed consensus, RSE preserves active deduction, enabling it to discover valid solutions even when initial reference pool lacks high-quality answers.

\subsubsection{Analysis of Experience Composition and Utilization}

\begin{table}[ht]
    \centering
    \small
    \renewcommand{\arraystretch}{0.85}
    \caption{Impact of the de-duplication threshold $\tau$ on experience retention and performance. We compare the number of retained positive experiences ($N_{pos}$) and negative experiences ($N_{neg}$) across iterations on HMMT24 and HMMT25. The best Pass@1 scores for each iteration are bolded. }
    \label{tab:ablation_tau_combined}
    \begin{tabular}{cccccccccccc}
        \toprule
        \multirow{2}{*}{\textbf{$\tau$}} & \multicolumn{3}{c}{\textbf{Iteration 1}} & & \multicolumn{3}{c}{\textbf{Iteration 2}} & & \multicolumn{3}{c}{\textbf{Iteration 3}} \\
        \cmidrule{2-4} \cmidrule{6-8} \cmidrule{10-12}
         & $N_{pos}$ & $N_{neg}$ & Pass@1 & & $N_{pos}$ & $N_{neg}$ & Pass@1 & & $N_{pos}$ & $N_{neg}$ & Pass@1 \\
        \midrule
        \multicolumn{12}{c}{\textbf{HMMT24}} \\
        \midrule
        0.6 & 19.4 & 14.9 & 70.8 & & 19.4 & 15.5 & \textbf{74.1} & & 19.2 & 14.6 & 74.1 \\
        0.7 & 36.6 & 30.1 & 71.1 & & 40.4 & 34.3 & 73.3 & & 41.8 & 34.6 & 74.2 \\
        0.8 & 66.1 & 55.8 & \textbf{71.9} & & 85.2 & 72.5 & 73.7 & & 95.3 & 80.9 & \textbf{74.4} \\
        0.9 & 113.5 & 89.2 & 70.8 & & 177.7 & 123.2 & 72.4 & & 225.8 & 153.2 & 72.9 \\
        1.0 & 161.7 & 94.5 & 70.3 & & 339.8 & 149.9 & 72.4 & & 526.4 & 202.2 & 73.3 \\
        \midrule
        \multicolumn{12}{c}{\textbf{HMMT25}} \\
        \midrule
        0.6 & 18.4 & 15.5 & 80.2 & & 20.7 & 15.4 & 82.9 & & 20.9 & 14.9 & 83.0 \\
        0.7 & 34.6 & 30.6 & 81.5 & & 42.7 & 34.9 & 83.4 & & 45.0 & 36.5 & 84.4 \\
        0.8 & 62.7 & 55.1 & 81.3 & & 89.5 & 74.6 & 82.9 & & 101.4 & 85.1 & 83.9 \\
        0.9 & 107.7 & 80.4 & \textbf{82.5} & & 185.7 & 135.7 & \textbf{84.4} & & 237.7 & 173.1 & \textbf{85.1} \\
        1.0 & 158.4 & 89.8 & \textbf{82.5} & & 330.9 & 165.1 & 84.1 & & 505.1 & 230.0 & 84.4 \\
        \bottomrule
    \end{tabular}
    % }
\end{table}

We systematically investigate the impact of context construction across three dimensions: the granularity of experience filtering (deduplication threshold), the mechanism of information retention (diversity vs. frequency), and the ablation of specific experience components (positive vs. negative).

\paragraph{Sensitivity to Deduplication Threshold.}
Table~\ref{tab:ablation_tau_combined} exhibits a non-monotonic performance trajectory peaking around $\tau=0.8$.
This trend highlights a clear trade-off: lower thresholds ($\tau < 0.7$) cause information loss by over-merging distinct paths, while higher thresholds ($\tau \rightarrow 1.0$) introduce redundancy-induced bias.
Thus, a moderate threshold is essential to optimally balance reasoning diversity and context efficiency.

\paragraph{Distinctness vs. Consistency.}
To isolate the impact of information distinctness versus consistency reinforcement, we aligned the experience budget $N$ with the yield of our distinctness-driven selection.
The baseline employs random sampling from the raw pool to explicitly introduce consistency information (i.e., high-frequency experiences).
The sustained superiority of the distinctness-driven approach (Table~\ref{tab:consist_vs_distin}) confirms that information richness facilitates reasoning more effectively than pure consistency, as diverse contexts provide broader logical references to accelerate solution derivation.

\begin{table}[htbp]
    \centering
    \small
    \caption{\textbf{Component-wise Ablation Study.} We evaluate the impact of different experience types on search performance. ``Positive Only'' and ``Negative Only'' denote using exclusively positive experience or negative experience, respectively. Performance is reported as Pass@1 (\%).}
    \label{tab:ablation_components}
    % \setlength{\tabcolsep}{3.5pt} % 调整列间距
    % \resizebox{\columnwidth}{!}{
    \begin{tabular}{lcccccc}
        \toprule
        \multirow{2}{*}{\textbf{Configuration}} & \multicolumn{3}{c}{\textbf{HMMT-24}} & \multicolumn{3}{c}{\textbf{HMMT-25}} \\
        \cmidrule(lr){2-4} \cmidrule(lr){5-7}
         & \textbf{Iter 1} & \textbf{Iter 2} & \textbf{Iter 3} & \textbf{Iter 1} & \textbf{Iter 2} & \textbf{Iter 3} \\
        \midrule
        Baseline (No Exp.)      & 57.4 & 57.4 & 57.4 & 69.0 & 69.0 & 69.0 \\
        + Positive Exp. Only    & 71.5 & 72.9 & 73.2 & 79.8 & 81.6 & 83.3 \\
        + Negative Exp. Only    & 71.8 & 73.2 & 73.9 & 79.7 & 81.0 & 81.6 \\
        \textbf{+ Both (Full RSE)} & \textbf{71.9} & \textbf{73.7} & \textbf{74.4} & \textbf{81.3} & \textbf{82.9} & \textbf{83.9} \\
        \bottomrule
    \end{tabular}
    % }
\end{table}

\paragraph{Ablation of Experience Components.}
Finally, we isolate the contributions of Positive Experiences and Negative Experiences to verify their individual necessity and combined synergy. 
The results, detailed in Table~\ref{tab:ablation_components}, reveal three key insights.
First, both components independently yield substantial improvements over the baseline. 
Second, the two types of experience exhibit strong complementarity. Combining them (Full RSE) consistently outperforms using either in isolation across all iterations and benchmarks. 
This suggests that Positive Experiences and Negative Experiences address distinct reasoning failures: the former guides the model towards verified paths, while the latter explicitly blocks known dead ends, effectively pruning the search space from both directions.

\section{Conclusions}

In this work, we identify and address the systemic inefficiency of current ``memoryless'' test-time search, where valuable intermediate insights are largely discarded after every rollout. To bridge this gap, we introduce Recycling Search Experience (RSE), a self-guided, training-free strategy that turns test-time search from a series of isolated trials into a cumulative process. By actively distilling raw trajectories into structured intermediate conclusions and negative constraints, RSE enables models to prune explored dead ends and accelerate valid derivations without external supervision.
% Theoretically, we provide an analysis that formalizes the efficiency gains of RSE, validating its advantage over independent sampling in solving complex reasoning tasks. Empirically, extensive evaluations on challenging mathematical benchmarks demonstrate that RSE consistently establishes a superior Pareto frontier compared to strong baselines. 
Theoretically, we provide an analysis that formalizes the efficiency gains of RSE, validating its advantage over independent sampling in solving complex reasoning tasks. Empirically, extensive evaluations on challenging mathematical benchmarks demonstrate that RSE consistently establishes a superior Pareto frontier compared to strong baselines. Additional experiments on coding, expert-level scientific QA, and long-horizon planning further show its effectiveness beyond mathematical reasoning. Ultimately, our findings suggest a critical paradigm shift: maximizing the potential of test-time compute requires not merely increasing the volume of rollouts, but optimizing the quality of exploration by transforming disposable trials into cumulative experience.

\paragraph{Limitations and future directions.}
RSE realizes experience recycling through a batched protocol, offering a practical trade-off among sharing stability, synchronization cost, and inference latency. Since batch-level updates cannot influence ongoing trajectories and introduce inter-batch waiting, future work could reduce this latency through online or asynchronous low-bandwidth exchange, enabling finer-grained collaboration among parallel rollouts. Another promising direction is experience-quality control, where conflicting or noisy experience entries are detected, calibrated, or resolved before being reused in later search.

% \paragraph{Limitations and future directions.}
% RSE improves the utility of previous rollouts by turning them into reusable search guidance, but its batched design introduces a natural latency--efficiency trade-off: later batches must wait for experience distillation and bank updates before launching new rollouts. Future work could reduce this coordination overhead through asynchronous updates or adaptive batching. Another promising direction is conflict-aware experience management, where inconsistent or low-confidence experiences are detected, ranked, or resolved before being injected into subsequent rollouts.

\bibliography{anthology,custom}
\bibliographystyle{acl_natbib}
% \clearpage

\clearpage
\appendix

\section{Theory: Recycling Search Experience Improves the Probability of Finding a Correct Solution}
\label{sec:theory}

We provide a theoretical analysis to formalize the efficiency gain of recycling search experience.
The clean analysis abstracts the distillation step as extracting sound reusable experience from rollouts, and studies how accumulating such experience can improve the probability of finding a complete solution.
For analytical clarity, we analyze a sequential version of experience recycling, where the Experience Bank is updated after each rollout.
This abstraction isolates the core recycling mechanism; the batched implementation in Algorithm~\ref{alg:rse} follows the same principle but delays bank updates to round boundaries for practical coordination.
Since self-guided distillation may imperfectly extract experience in practice, Section~\ref{sec:theory_incorrect_experience} further characterizes how the guarantee degrades when harmful incorrect experience enters the bank.

\subsection{Setup: required conclusions and a verified-rollout oracle}

\begin{definition}[Required intermediate conclusions (analysis device)]
Fix a problem instance $x$ and a prompting template. Let $\mathcal{C}=\{c_1,\dots,c_L\}$ be a finite set of
\emph{required intermediate conclusions} such that a fully correct solution can be produced whenever all
elements of $\mathcal{C}$ are available.\footnote{This is an abstract analysis device: we do not assume $\mathcal{C}$
is observable. It captures the idea that solving the problem may require collecting multiple key facts or lemmas.}
A rollout output is considered \emph{correct} if its verified content contains all required conclusions:
\[
\mathrm{Correct}(R)\;:=\;\mathbf{1}\{\mathcal{C}\subseteq R\}.
\]
\end{definition}

For clarity, the closed-form analysis below represents reusable experience through required conclusions in $\mathcal{C}$. Sound negative experiences can be viewed as constraints that prune invalid reasoning regions without ruling out valid solution paths. Harmful negative constraints, such as incorrectly labeling a valid path as a dead end, are handled in Section~\ref{sec:theory_incorrect_experience} as incorrectly extracted experience.

\begin{definition}[Verified rollout oracle]
Let $\mathcal{M}\subseteq 2^{\mathcal{C}}$ be a family of \emph{experience states} that contains
$\varnothing$ and $\mathcal{C}$, and is closed under union.
A single model call, together with a verifier abstraction for the extracted content, is modeled as a stochastic set-valued oracle
\[
F:\ \mathcal{M}\times \Omega \to \mathcal{M},\qquad R = F(S;\omega),
\]
where $S\in\mathcal{M}$ is the current verified \emph{experience} injected into the prompt, and
$R\in\mathcal{M}$ is the post-rollout verified state, possibly including injected items.
This verifier abstraction captures the clean setting where extracted experiences are sound; deviations caused by imperfect extraction are handled in Section~\ref{sec:theory_incorrect_experience}.
\end{definition}

\subsection{Two procedures and the same success notion}

\begin{definition}[Baseline vs.\ Recycling Search Experience (RSE)]
Fix a rollout budget $N\in\mathbb{N}$ and randomness $\omega_1,\dots,\omega_N$.

\noindent\textbf{Baseline (independent-from-scratch sampling).}
For $t=1,\dots,N$, sample
\[
B_t := F(\varnothing;\omega_t).
\]
Define the probability of finding at least one correct solution within budget $N$ as
\[
P_{\mathrm{succ}}^{\mathrm{base}}(N)
\;:=\;
\mathbb{P}\Big(\exists\, t\le N:\ \mathcal{C}\subseteq B_t\Big).
\]

\noindent\textbf{Recycling Search Experience (RSE).}
Initialize the Experience Bank $E_0=\varnothing$. For $t=1,\dots,N$, sample
\[
R_t := F(E_{t-1};\omega_t),\qquad
E_t := E_{t-1}\cup R_t.
\]
Define
\[
P_{\mathrm{succ}}^{\mathrm{rse}}(N)
\;:=\;
\mathbb{P}\Big(\exists\, t\le N:\ \mathcal{C}\subseteq R_t\Big).
\]

\noindent Both procedures declare success under the same predicate:
\emph{``there exists a rollout whose verified output contains all required conclusions''};
the only difference is whether previously verified experience may be injected into later rollouts.
\end{definition}

\subsection{Assumptions}

\paragraph{State convention.}
In the clean analysis, $F(S;\omega)$ denotes the verified state available after running one rollout with injected bank state $S$, rather than the raw text generated by the model. Thus, carried-over bank items are part of the post-rollout verified state by construction.

\begin{assumption}[Sound-state persistence]
\label{ass:persistence}
For all $S\in\mathcal{M}$ and all $\omega\in\Omega$,
\[
S\subseteq F(S;\omega).
\]
This is a bookkeeping convention for the clean process: previously verified bank entries remain available to later rollouts. It does not require the model to explicitly regenerate every injected item in its raw output.
\end{assumption}

\begin{assumption}[No-harm under sound experience injection]
\label{ass:monotone}
For any $S,S'\in\mathcal{M}$ with $S\subseteq S'$ and any $\omega\in\Omega$,
\[
F(S;\omega)\subseteq F(S';\omega).
\]
That is, injecting additional sound experience does not suppress conclusions that would otherwise be verified in the rollout.
\end{assumption}

\paragraph{Remark on the no-harm condition.}
Assumption~\ref{ass:monotone} is a sufficient condition for the clean analysis, not a universal claim about LLM prompting. In practice, additional context or incorrectly extracted experience may change the model's behavior in harmful ways. Section~\ref{sec:theory_incorrect_experience} models such deviations through a bounded harmful-contamination probability.

\subsection{Main result: distribution-free dominance of success probability}

\begin{theorem}[RSE is no worse than independent sampling]
\label{thm:succ-dominance}
Under Assumption~\ref{ass:monotone}, for every budget $N\ge 1$,
\[
P_{\mathrm{succ}}^{\mathrm{rse}}(N)\ \ge\ P_{\mathrm{succ}}^{\mathrm{base}}(N).
\]
Moreover, this dominance holds without any independence assumption on $\omega_1,\dots,\omega_N$.
\end{theorem}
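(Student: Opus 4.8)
The plan is to exploit a feature that is built directly into the definition of the two procedures: they are driven by the \emph{same} randomness $\omega_1,\dots,\omega_N$. This converts what looks like a comparison of two distributions into a pathwise (coupling) argument, which is precisely why no independence assumption on the $\omega_t$ is needed. First I would fix an arbitrary realization $\omega=(\omega_1,\dots,\omega_N)$ and reason deterministically for that realization, then integrate over $\omega$ at the very end.

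The key lemma I would establish is the pathwise domination $B_t \subseteq R_t$ for every $t\in\{1,\dots,N\}$. This is immediate from monotonicity (Assumption~\ref{ass:monotone}): since the empty set is contained in every experience state, in particular $\varnothing\subseteq E_{t-1}$, applying the oracle under the \emph{same} randomness $\omega_t$ to both arguments gives
\[
B_t \;=\; F(\varnothing;\omega_t)\;\subseteq\; F(E_{t-1};\omega_t)\;=\;R_t .
\]
Before invoking this, I would first check that each $E_{t-1}$ is a legitimate element of $\mathcal{M}$, so that $F(E_{t-1};\cdot)$ is well defined: this holds because $\mathcal{M}$ contains $\varnothing$ and is closed under union, so the recursion $E_t=E_{t-1}\cup R_t$ never leaves $\mathcal{M}$. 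Here Assumption~\ref{ass:persistence} plays a complementary role, forcing $E_{t-1}\subseteq R_t$ and hence $E_t=R_t$, so the bank forms an increasing chain; this extra structure is reassuring but monotonicity alone already suffices for the weak dominance.

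From $B_t\subseteq R_t$ the conclusion follows by set inclusion of the success events. For the fixed $\omega$, if the baseline succeeds then some $t\le N$ satisfies $\mathcal{C}\subseteq B_t$; chaining with $B_t\subseteq R_t$ yields $\mathcal{C}\subseteq R_t$, so RSE succeeds for the same $\omega$. Thus the baseline-success event is contained in the RSE-success event as subsets of the underlying probability space, and monotonicity of probability immediately gives $P_{\mathrm{succ}}^{\mathrm{rse}}(N)\ge P_{\mathrm{succ}}^{\mathrm{base}}(N)$. Because every step is executed for a fixed $\omega$, the joint law of $(\omega_1,\dots,\omega_N)$ is never used, which is exactly the claimed distribution-free, independence-free dominance.

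I expect the only real delicacy — rather than a genuine obstacle — to be conceptual: recognizing that the shared-randomness convention in the definition is what licenses the realization-by-realization comparison. Were the two procedures allowed independent randomness, the inclusion $B_t\subseteq R_t$ would fail pathwise and one would be forced into a harder step-by-step comparison of marginal hitting probabilities. I would therefore make the shared-$\omega$ coupling explicit at the outset, so that the set inclusion of the two success events is manifestly valid before taking probabilities.
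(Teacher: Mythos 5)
Your proposal is correct and follows essentially the same route as the paper's proof: a shared-randomness coupling, the pointwise inclusion $B_t = F(\varnothing;\omega_t) \subseteq F(E_{t-1};\omega_t) = R_t$ via monotonicity, and then inclusion of the success events before taking probabilities. Your additional checks (that $E_{t-1}$ stays in $\mathcal{M}$, and that persistence is not actually needed for the weak dominance) are sound refinements of the same argument rather than a different approach.
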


\begin{proof}
Couple the two procedures on the same randomness sequence $(\omega_t)_{t=1}^N$.
For each $t$, since $E_{t-1}\supseteq \varnothing$, Assumption~\ref{ass:monotone} implies the pointwise inclusion
\[
B_t \;=\; F(\varnothing;\omega_t)\ \subseteq\ F(E_{t-1};\omega_t)\;=\;R_t.
\]
Hence whenever $\mathcal{C}\subseteq B_t$ holds for some $t$, we also have $\mathcal{C}\subseteq R_t$ for the same $t$.
Therefore,
\[
\Big\{\exists\, t\le N:\ \mathcal{C}\subseteq B_t\Big\}\ \subseteq\
\Big\{\exists\, t\le N:\ \mathcal{C}\subseteq R_t\Big\}.
\]
Taking probabilities yields $P_{\mathrm{succ}}^{\mathrm{rse}}(N)\ge P_{\mathrm{succ}}^{\mathrm{base}}(N)$.
\end{proof}

\subsection{Independent-coverage model: closed form and sample-efficiency intuition}

This subsection is illustrative. The dominance result in Theorem~\ref{thm:succ-dominance} does not require independence across conclusions or rollouts. The independent-coverage model below is used only to expose the sample-efficiency intuition in closed form: independent sampling must obtain all required conclusions in one rollout, whereas RSE can accumulate them across rollouts.

For this subsection, take $\mathcal{M}=2^{\mathcal{C}}$.

\begin{assumption}[Additive experience model]
\label{ass:additive}
Experience injection guarantees the persistence of previously verified items and does not affect the distribution
of newly discovered verified conclusions:
\[
F(S;\omega)\;=\; S\cup F(\varnothing;\omega),
\qquad \forall S\in\mathcal{M},\ \omega\in\Omega.
\]
\end{assumption}

\begin{corollary}[Closed-form success probabilities under independent coverage]
\label{cor:closed-form}
Assume the from-scratch oracle follows an \emph{independent coverage model}: for each $j\in[L]$ and each rollout $t$, the indicator
$\mathbf{1}\{c_j\in F(\varnothing;\omega_t)\}$ is $\mathrm{Bernoulli}(p_j)$ with $p_j\in(0,1)$, independent across $t$ and $j$.
Under Assumption~\ref{ass:additive},
\begin{equation}
\begin{aligned}
P_{\mathrm{succ}}^{\mathrm{base}}(N)
&= 1-\Big(1-\prod_{j=1}^L p_j\Big)^N,\\
P_{\mathrm{succ}}^{\mathrm{rse}}(N)
&= \prod_{j=1}^L \Big(1-(1-p_j)^N\Big).
\end{aligned}
\end{equation}
In particular,
\[
P_{\mathrm{succ}}^{\mathrm{rse}}(N)\ge P_{\mathrm{succ}}^{\mathrm{base}}(N),
\qquad \forall N\ge 1.
\]

Moreover, let $p_{\min}:=\min_j p_j$ and let the target failure probability be $\delta\in(0,1)$.
It suffices for RSE to take
\[
N\ \ge\ \frac{1}{p_{\min}}\log\frac{L}{\delta}
\quad\Rightarrow\quad
P_{\mathrm{succ}}^{\mathrm{rse}}(N)\ge 1-\delta.
\]
For the baseline, let $q:=\prod_{j=1}^L p_j$. The exact budget needed to achieve
$P_{\mathrm{succ}}^{\mathrm{base}}(N)\ge 1-\delta$ is
\[
N_{\mathrm{base}}(\delta)
=
\left\lceil
\frac{\log(1/\delta)}{-\log(1-q)}
\right\rceil .
\]
In the homogeneous case $p_j=p\in(0,1)$, we have $q=p^L$, and hence, as $L$ grows,
\[
N_{\mathrm{base}}(\delta)
=
(1+o(1))\,p^{-L}\log\frac{1}{\delta},
\]
up to the integer ceiling. In contrast, a sufficient RSE budget is
\[
N_{\mathrm{rse}}^{\mathrm{suff}}(\delta)
=
O\!\left(p^{-1}\log\frac{L}{\delta}\right).
\]
Thus, in this illustrative coverage model, independent sampling requires a rollout that simultaneously covers all $L$ conclusions, while RSE only requires each conclusion to appear at least once across rollouts, yielding an exponential separation in $L$.
\end{corollary}

\begin{proof}
Under the independent coverage model, a single from-scratch rollout contains all conclusions with probability
\[
q:=\prod_{j=1}^L p_j.
\]
Therefore,
\[
P_{\mathrm{succ}}^{\mathrm{base}}(N)=1-(1-q)^N.
\]

Under Assumption~\ref{ass:additive}, the $t$-th RSE rollout satisfies
\[
R_t = F(E_{t-1};\omega_t) = E_{t-1}\cup F(\varnothing;\omega_t).
\]
By induction,
\[
E_t=\bigcup_{s=1}^t F(\varnothing;\omega_s).
\]
Thus RSE succeeds within budget $N$ iff every required conclusion $c_j$ appears at least once in the from-scratch components
$\{F(\varnothing;\omega_t)\}_{t=1}^N$.
For each $j$, the probability that $c_j$ never appears in $N$ trials is $(1-p_j)^N$, so it appears at least once with probability
$1-(1-p_j)^N$. Independence across $j$ yields
\[
P_{\mathrm{succ}}^{\mathrm{rse}}(N)
=
\prod_{j=1}^L \Big(1-(1-p_j)^N\Big).
\]

The inequality
$P_{\mathrm{succ}}^{\mathrm{rse}}(N)\ge P_{\mathrm{succ}}^{\mathrm{base}}(N)$
also follows directly from event inclusion. If independent sampling succeeds, then some rollout contains all conclusions in $\mathcal{C}$; in particular, every conclusion appears at least once among the $N$ from-scratch components. Under the additive experience model, this latter event is exactly the RSE success event.

For the RSE sample-complexity bound,
\begin{equation}
\begin{aligned}
1 - P_{\mathrm{succ}}^{\mathrm{rse}}(N)
&=
\mathbb{P}\Big(\exists j\in[L]:\ c_j\ \text{never appears in $N$ trials}\Big) \\
&\le
\sum_{j=1}^L (1-p_j)^N \\
&\le
L e^{-N p_{\min}}.
\end{aligned}
\end{equation}
Thus choosing
$N\ge \frac{1}{p_{\min}}\log\frac{L}{\delta}$
makes the failure probability at most $\delta$.

For the baseline,
\[
P_{\mathrm{succ}}^{\mathrm{base}}(N)\ge 1-\delta
\quad\Longleftrightarrow\quad
(1-q)^N\le \delta
\quad\Longleftrightarrow\quad
N\ge \frac{\log(1/\delta)}{-\log(1-q)}.
\]
In the homogeneous case $p_j=p$, we have $q=p^L$. Since $q\to 0$ as $L\to\infty$ and
\[
-\log(1-q)=q(1+o(1)),
\]
it follows that, up to the integer ceiling,
\[
N_{\mathrm{base}}(\delta)
=
(1+o(1))\,p^{-L}\log\frac{1}{\delta}.
\]
This gives the stated exponential separation against the sufficient RSE budget.
\end{proof}

\subsection{Effect of Incorrectly Extracted Experience}
\label{sec:theory_incorrect_experience}

The clean analysis above assumes that extracted experiences are sound. We now discuss the effect of incorrectly extracted experience. This type of noise is qualitatively different from missing a useful experience: a false intermediate conclusion or an erroneous negative constraint can bias later rollouts and may reduce the probability of finding a correct solution. Without any control on such harmful errors, no dominance guarantee over independent sampling is possible, since a misleading experience could in principle steer all subsequent rollouts away from correct solutions. We therefore characterize the degradation in terms of a bounded failure probability.

\paragraph{A general failure-event view.}
Let $\widetilde{R}_t\subseteq\mathcal{C}$ denote the set of required conclusions verified in the output produced by the actual RSE pipeline with imperfect experience extraction under the same randomness $\omega_t$ as the baseline rollout
\[
B_t=F(\varnothing;\omega_t).
\]
Define the coupled monotonicity-failure event
\[
G_N
=
\left\{
\exists t\le N:\ B_t \nsubseteq \widetilde{R}_t
\right\}.
\]
This event captures implementation-level deviations that break the clean no-harm coupling, including harmful incorrect experience, context distraction, or anchoring effects.

\begin{proposition}[Approximate dominance under bounded monotonicity failure]
\label{thm:failure_event}
If $\mathbb{P}(G_N)\le \gamma_N$, then
\[
P_{\mathrm{succ}}^{\mathrm{imp}}(N)
\ge
P_{\mathrm{succ}}^{\mathrm{base}}(N)-\gamma_N,
\]
where
\[
P_{\mathrm{succ}}^{\mathrm{imp}}(N)
:=
\mathbb{P}\Big(\exists t\le N:\mathcal{C}\subseteq \widetilde{R}_t\Big)
\]
is the success probability of RSE with imperfect experience extraction.
\end{proposition}

\begin{proof}
Let
\[
\mathcal{B}_N
=
\{\exists t\le N:\mathcal{C}\subseteq B_t\}
\]
be the baseline success event, and let
\[
\widetilde{\mathcal{R}}_N
=
\{\exists t\le N:\mathcal{C}\subseteq \widetilde{R}_t\}
\]
be the success event of imperfect RSE.
On $G_N^c$, we have $B_t\subseteq \widetilde{R}_t$ for every $t\le N$. Hence baseline success implies imperfect RSE success on $G_N^c$:
\[
\mathcal{B}_N\cap G_N^c
\subseteq
\widetilde{\mathcal{R}}_N.
\]
Equivalently,
\[
\mathcal{B}_N
\subseteq
\widetilde{\mathcal{R}}_N \cup G_N.
\]
Taking probabilities gives
\[
P_{\mathrm{succ}}^{\mathrm{base}}(N)
\le
P_{\mathrm{succ}}^{\mathrm{imp}}(N)
+
\mathbb{P}(G_N)
\le
P_{\mathrm{succ}}^{\mathrm{imp}}(N)
+
\gamma_N,
\]
which proves the claim.
\end{proof}

\paragraph{Incorrect experience as a source of failure.}
We now specialize the failure event to incorrect experience extraction. Define the harmful-contamination event
\[
\mathcal{H}_N
=
\{\text{within the first $N$ rollouts, the bank accepts at least one harmful incorrect experience item}\}.
\]
A harmful incorrect item refers to an extracted experience that changes the effective injected state in a way that can invalidate sound experience injection, such as a false intermediate conclusion or an erroneous negative constraint that rules out a valid reasoning path. Incorrect entries that are ignored by the verifier or have no effect on the clean coupling are not counted in the effective state. We make no assumption on the behavior of the system on $\mathcal{H}_N$: in the worst case, harmful contamination may cause all later rollouts to fail. On the complement $\mathcal{H}_N^c$, all effective injected experiences remain sound, so the clean no-harm coupling applies. Thus, in this incorrect-extraction model,
\[
G_N\subseteq \mathcal{H}_N.
\]
Consequently, if $\mathbb{P}(\mathcal{H}_N)\le \epsilon_N$, then Theorem~\ref{thm:failure_event} gives
\[
P_{\mathrm{succ}}^{\mathrm{imp}}(N)
\ge
P_{\mathrm{succ}}^{\mathrm{base}}(N)-\epsilon_N.
\]

\begin{corollary}[Independent coverage with harmful contamination]
\label{cor:incorrect_coverage}
Under the independent coverage model in Corollary~\ref{cor:closed-form}, suppose the only deviation from the sound additive model is the possible acceptance of harmful incorrect experience, captured by $\mathcal{H}_N$, with
$\mathbb{P}(\mathcal{H}_N)\le \epsilon_N$. Equivalently, conditional on $\mathcal{H}_N^c$, the imperfect process coincides with the sound RSE process in Corollary~\ref{cor:closed-form}. Then
\[
P_{\mathrm{succ}}^{\mathrm{imp}}(N)
\ge
\prod_{j=1}^L \left(1-(1-p_j)^N\right)-\epsilon_N
\ge
1-L e^{-N p_{\min}}-\epsilon_N.
\]
Consequently, if
\[
N\ge \frac{1}{p_{\min}}\log\frac{2L}{\delta}
\qquad\text{and}\qquad
\epsilon_N\le \frac{\delta}{2},
\]
then
\[
P_{\mathrm{succ}}^{\mathrm{imp}}(N)\ge 1-\delta.
\]

Moreover, let
\[
q:=\prod_{j=1}^L p_j
\]
and define the clean recycling advantage under the independent coverage model as
\[
\Delta_N
:=
\prod_{j=1}^L \left(1-(1-p_j)^N\right)
-
\left[1-(1-q)^N\right].
\]
Then
\[
P_{\mathrm{succ}}^{\mathrm{imp}}(N)
\ge
P_{\mathrm{succ}}^{\mathrm{base}}(N)+\Delta_N-\epsilon_N.
\]
In particular, imperfect RSE still dominates independent sampling whenever
\[
\epsilon_N\le \Delta_N.
\]
\end{corollary}

\begin{proof}
By Corollary~\ref{cor:closed-form}, in the absence of harmful contamination, the RSE success probability under independent coverage is
\[
P_{\mathrm{succ}}^{\mathrm{rse}}(N)
=
\prod_{j=1}^L \left(1-(1-p_j)^N\right).
\]
By assumption, conditional on $\mathcal{H}_N^c$, the imperfect process coincides with the sound RSE process, while on $\mathcal{H}_N$ we conservatively allow arbitrary failure. Therefore,
\[
P_{\mathrm{succ}}^{\mathrm{imp}}(N)
\ge
P_{\mathrm{succ}}^{\mathrm{rse}}(N)-\mathbb{P}(\mathcal{H}_N)
\ge
\prod_{j=1}^L \left(1-(1-p_j)^N\right)-\epsilon_N.
\]
Using the union bound,
\[
1-
\prod_{j=1}^L \left(1-(1-p_j)^N\right)
\le
\sum_{j=1}^L (1-p_j)^N
\le
L e^{-N p_{\min}},
\]
which yields
\[
P_{\mathrm{succ}}^{\mathrm{imp}}(N)
\ge
1-L e^{-N p_{\min}}-\epsilon_N.
\]
The stated sufficient condition follows by choosing
$N\ge p_{\min}^{-1}\log(2L/\delta)$
and $\epsilon_N\le \delta/2$.

Finally, under the independent coverage model,
\[
P_{\mathrm{succ}}^{\mathrm{base}}(N)
=
1-(1-q)^N.
\]
Thus
\[
P_{\mathrm{succ}}^{\mathrm{imp}}(N)
\ge
P_{\mathrm{succ}}^{\mathrm{rse}}(N)-\epsilon_N
=
P_{\mathrm{succ}}^{\mathrm{base}}(N)+\Delta_N-\epsilon_N.
\]
If $\epsilon_N\le \Delta_N$, then
$P_{\mathrm{succ}}^{\mathrm{imp}}(N)\ge P_{\mathrm{succ}}^{\mathrm{base}}(N)$.
\end{proof}

\paragraph{Interpreting $\epsilon_N$.}
The quantity $\epsilon_N$ captures the cumulative probability that harmful incorrect experience survives the distillation and bank-maintenance pipeline. Let $A_t$ denote the event that the update after rollout $t$ accepts at least one harmful incorrect entry. Let $\mathcal{F}_{t-1}$ denote the history before this update, and let $\mathcal{H}_{t-1}$ be the harmful-contamination event restricted to the first $t-1$ rollouts. If, conditioned on the previous history and no earlier contamination,
\[
\mathbb{P}(A_t\mid \mathcal{F}_{t-1}, \mathcal{H}_{t-1}^c)\le \eta_t,
\]
then a union bound over the first contamination time gives
\[
\epsilon_N
=
\mathbb{P}(\mathcal{H}_N)
\le
\sum_{t=1}^N \eta_t.
\]
In particular, if $\eta_t\le \eta$ for all $t$, then
\[
P_{\mathrm{succ}}^{\mathrm{imp}}(N)
\ge
1-L e^{-N p_{\min}}-N\eta.
\]
This bound is deliberately conservative: it treats any harmful accepted entry as potentially catastrophic, even though in practice later rollouts or deduplication may recover from some erroneous experiences. It therefore highlights the role of experience-quality control: reducing the probability that false conclusions or erroneous negative constraints enter the bank directly preserves the sample-efficiency advantage of experience recycling.

\section{Implementation Details}
\label{app:exp_details}

\subsection{Sampling Configuration}
All experiments are executed in parallel on a cluster with 512 NVIDIA H800 GPUs, where each individual run is allocated to 8 GPUs (except 2 GPUs for Phi-4-Reasoning). To ensure reproducibility and fair comparison, we strictly adhered to the official recommended sampling hyperparameters for each model family:
\begin{itemize}
    \item Qwen3-Thinking-Series: Temperature $T=0.6$, top-$p=0.95$, top-$k=20$, with a maximum generation length of 38k tokens.
    \item Qwen3-Instruct-Series: Temperature $T=0.7$, top-$p=0.8$, top-$k=20$, with a maximum generation length of 38k tokens.
    \item Phi-4-reasoning: Temperature $T=0.8$, top-$p=0.95$, top-$k=50$, with a maximum generation length of 32k tokens.
    \item DeepSeek-V3.2: Temperature $T=1.0$, top-$p=0.95$, with a maximum generation length of 64k tokens. All DeepSeek evaluations were performed via the official API.\footnote{\url{https://platform.deepseek.com/usage}}
\end{itemize}

\noindent \textbf{De-duplication Settings.} To maintain diversity within the retrieved context, we implemented a semantic de-duplication step. We utilized the \texttt{all-MiniLM-L6-v2}\footnote{\url{https://huggingface.co/sentence-transformers/all-MiniLM-L6-v2}} model to encode response candidates and applied a cosine similarity threshold of $0.8$ to filter out redundant samples.

\subsection{Model-Specific Adaptations}
\label{app:model_adaptations}

\paragraph{Evaluation Scope for Deepseek-V3.2.} 
For \textsc{Deepseek-V3.2}, we omit evaluations on HMMT24 and HMMT25 given its reported near-saturation performance (90.2\% pass@1 accuracy on HMMT25) \citep{llm-deepseek}. Additionally, constrained by computational costs, we exclude IMO-AnswerBench and restrict our assessment of \textsc{Deepseek-V3.2} exclusively to the most demanding HLE-Math-text 100-sample subset.

\paragraph{Context Window Adaptation for Phi-4.} 
The limited context window of \textsc{Phi-4-reasoning} (32k) poses a distinct challenge for PaCoRe, which fundamentally relies on aggregating multiple historical reasoning traces. 
As detailed in Table~\ref{tab:token_stats}, our analysis reveals an average reasoning length of $\approx$12.9k tokens), with the 95th percentile (P95) reaching 31.6k. 
Given that the prompt overhead must also be accommodated, these generation lengths effectively saturate the model's entire 32k capacity.
To guarantee sufficient headroom for the generation phase, we reserve a 12k token generation buffer, capping the total input prompt at 20k tokens. 
Leveraging the prior that reasoning paths tend to become more concise and convergent in subsequent iterations, we aggressively truncate individual reference responses to a maximum of 4k tokens. 
This adaptation highlights the critical dependency of context-augmented consistency strategies on the underlying model's context capacity.
Additionally, as Phi-4 implies a fixed system prompt structure, we prepended system instructions to the user prompt.

\begin{table}[h]
    \centering
    \small
    % \scriptsize
    \caption{Token Length Statistics for Phi-4 Reasoning Content. The 95th percentile (P95) length ($\approx$31.6k) nearly exhausts the 32k context window, leaving negligible space for input prompts and necessitating the truncation strategy.}
    \label{tab:token_stats}
    \begin{tabular}{lccc}
        \toprule
        \textbf{Field} & \textbf{Average} & \textbf{Median} & \textbf{P95} \\
        \midrule
        Reasoning Content & 12.9k & 10.7k & 31.6k \\
        Final Text Answer & 1.1k & 1k & 2k \\
        \bottomrule
    \end{tabular}
\end{table}

\subsection{HLE Benchmark Subset Construction}
We utilized the text-only math subset of the Humanity's Last Exam (HLE) benchmark. 
Due to the prohibitive computational cost of performing iterative reasoning evaluations on the full subset (976 samples), we constructed a representative subset using a difficulty-stratified sampling strategy.
First, we assessed the difficulty of all samples using \textsc{Qwen3-30B-A3B-Thinking-2507}. We performed 128 standard sampling rollouts per question and calculated the Pass@1 score.
To focus on problems with significant headroom for improvement, we divided the samples into 10 difficulty bins based on Pass@1, ranging from $0.0$ to $0.5$ with a step size of $0.05$. 
From each bin, we randomly selected 10 samples, resulting in a balanced and tractable subset of 100 samples that preserves the difficulty distribution of the challenging regime.

\subsection{Computational Cost Estimation}
\label{app:flops_calculation}

To explicitly quantify the computational efficiency of different inference strategies, we estimate the Floating Point Operations (FLOPs) following the methodology used in the NVIDIA NeMo framework~\citep{nemo-rl}. 

\paragraph{Token Consumption Dynamics.}
We calculate the cumulative FLOPs by tracking the exact number of tokens processed at each search iteration $t$. Let $n$ be the reference width (number of parallel rollouts). Let $L_{\text{reason}}^{t}$ and $L_{\text{response}}^{t}$ denote the length of the reasoning content and final response generated at iteration $t$, respectively.

\textbf{PaCoRe} 
PaCoRe conditions each generation on the full set of $n$ references from the preceding iteration. For iteration $t+1$:
\begin{align}
    \mathcal{T}_{\text{in}}^{(t+1)} &\approx n \cdot \left( n \cdot L_{\text{response}}^{t} \right) \\
    \mathcal{T}_{\text{out}}^{(t+1)} &= n \cdot \left( L_{\text{reason}}^{t+1} + L_{\text{response}}^{t+1} \right)
\end{align}
Here, the term $n \cdot n$ reflects that each of the $n$ new rollouts essentially processes the concatenation of $n$ historical references.

\textbf{RSE}  
RSE incorporates an intermediate \textit{Experience Distillation} phase while maintaining linear input complexity for generation. 
Let $L_{\text{distill\_prompt}}$ denote the instruction length for experience extraction. 
Crucially, we define $L_{\text{context}}^{(t+1)}$ as the \textit{comprehensive} input length for the reasoning phase, encompassing the system prompt, problem description, and the aggregated experience context constructed from previous iterations.
Let $L_{\text{distill\_out}}^{t}$ represent the total output tokens generated during the distillation process (including both extraction reasoning and the formalized experience).
The total token consumption for iteration $t+1$ is calculated as:

\begin{align}
    \mathcal{T}_{\text{in}}^{(t+1)} &= \underbrace{n \cdot (L_{\text{distill\_prompt}} + L_{\text{reason}}^{t} + L_{\text{response}}^{t})}_{\text{Distillation Input}} \notag \\
    &\quad + \underbrace{n \cdot L_{\text{context}}^{(t+1)}}_{\text{Reasoning Input}} \\
    \mathcal{T}_{\text{out}}^{(t+1)} &= \underbrace{n \cdot L_{\text{distill\_out}}^{t}}_{\text{Distillation Output}} \notag \\
    &\quad + \underbrace{n \cdot (L_{\text{reason}}^{t+1} + L_{\text{response}}^{t+1})}_{\text{Reasoning Output}}
\end{align}

The detailed token statistics used for these calculations are provided in Table~\ref{tab:token_stats_flops}.

\begin{table}[ht]
    \centering
    \small
    \caption{\textbf{Token Statistics for FLOPs Estimation.} We detail the token counts for each phase. For PaCoRe, we separate Thinking ($L_{\text{think}}$) and Response ($L_{\text{resp}}$) lengths, as its input complexity scales quadratically with $L_{\text{resp}}$. For RSE, Iter-0 represents standard sampling (no distillation/context). Note the stark contrast in reasoning length ($L_{\text{think}}$ vs $L_{\text{gen}}$) starting from Iter-1.}
    \label{tab:token_stats_flops}
    % \setlength{\tabcolsep}{3.5pt}
    % \resizebox{\columnwidth}{!}{
    \begin{tabular}{lcccccc}
        \toprule
        \multirow{2}{*}{\textbf{Dataset}} & \multirow{2}{*}{\textbf{Iter}} & \multicolumn{2}{c}{\textbf{PaCoRe}} & \multicolumn{3}{c}{\textbf{RSE (Ours)}} \\
        \cmidrule(lr){3-4} \cmidrule(lr){5-7}
         & & $L_{\text{think}}$ & $L_{\text{resp}}$ & $L_{\text{distill\_out}}$ & $L_{\text{context}}$ & $L_{\text{gen}}$ \\
        \midrule
        \multirow{4}{*}{HMMT-24} 
         & 0 & 23,965 & 1,013 & -- & -- & 24,978 \\
         & 1 & 2,157 & 926 & 2,638 & 8,137 & 13,760 \\
         & 2 & 1,729 & 924 & 3,403 & 12,860 & 12,860 \\
         & 3 & 1,522 & 941 & 3,734 & 12,440 & 12,303 \\
        \midrule
        \multirow{4}{*}{HMMT-25} 
         & 0 & 22,264 & 1,038 & -- & -- & 23,302 \\
         & 1 & 1,951 & 960 & 2,701 & 7,973 & 11,228 \\
         & 2 & 2,017 & 994 & 3,611 & 10,313 & 10,313 \\
         & 3 & 2,031 & 1,031 & 4,061 & 11,692 & 9,786 \\
        \bottomrule
    \end{tabular}
    % }
\end{table}

\section{Additional Analysis}
\label{app:additional_analyis}

\subsection{Significance Analysis}
\label{app:significance_analysis}

To further assess the reliability of the observed improvements, we conduct a significance analysis on the final iteration results. Specifically, we focus on \textsc{Qwen3-30B-A3B-Thinking-2507} and evaluate the Iteration 3 performance across the considered benchmarks. For each method and benchmark, the reported value is computed over 8 repeated runs and presented as mean performance $\pm$ standard deviation.
In each repeated run, we first average the performance across these four benchmarks and then compute the mean and standard deviation over the 8 run-level averages. This avoids mixing benchmark-level difficulty differences into the run-level variance estimate.

We perform significance analysis between RSE and the strongest iterative baseline for each benchmark. Specifically, the strongest iterative baseline is selected from Self-Ref and PaCoRe according to the average Iteration 3 performance under the same benchmark. The significance marker indicates that the improvement of RSE over the strongest iterative baseline is statistically reliable under the repeated-run evaluation, rather than being attributable to run-level random variation.
As shown in Table~\ref{tab:significance_analysis}, RSE achieves higher final-iteration performance than the strongest iterative baseline across all evaluated benchmarks. The significance results further suggest that the advantage of RSE is consistent across repeated runs, supporting the effectiveness of structured experience recycling over single-trajectory refinement and direct history concatenation.

\begin{table}[ht]
\centering
\small
\caption{
Significance analysis of Iteration 3 results using \textsc{Qwen3-30B-A3B-Thinking-2507}. Values are reported as mean performance $\pm$ standard deviation over 8 repeated runs. The mathematical reasoning result is the macro-average over HMMT24, HMMT25, IMO-AnswerBench, and HLE-Math-text. Significance is assessed between RSE and the strongest iterative baseline selected from Self-Ref and PaCoRe under each benchmark. $^{\dagger}$ indicates that RSE significantly outperforms the strongest iterative baseline with $p < 0.05$ under the repeated-run significance analysis.
}
\label{tab:significance_analysis}
\begin{tabular}{lccc}
\toprule
\textbf{Benchmark} & \textbf{Self-Ref} & \textbf{PaCoRe} & \textbf{RSE} \\
\midrule
Math Benchmarks Avg. & $53.62 \pm 0.24$ & $63.00 \pm 1.48$ & $\mathbf{65.85 \pm 1.07}^{\dagger}$ \\
LiveCodeBench-v6 & $64.13 \pm 0.28$ & $60.26 \pm 0.30$ & $\mathbf{68.55 \pm 0.76}^{\dagger}$ \\
GPQA-Diamond & $70.77 \pm 1.50$ & $69.91 \pm 1.20$ & $\mathbf{72.67 \pm 1.00}^{\dagger}$ \\
TravelPlanner & $28.12 \pm 2.01$ & $0.00 \pm 0.00$ & $\mathbf{34.44 \pm 4.30}^{\dagger}$ \\
\bottomrule
\end{tabular}
\end{table}

\subsection{Comparison with Confidence-Based Strategies}
\label{app:add_baselines}

While RSE optimizes the test-time search process via active experience reuse, methods such as Self-Certainty (Self-Certainty) \citep{baseline-self-c} and Deep Think with Confidence (Deep-Confidence) \citep{baseline-deep-c} primarily focus on post-hoc answer aggregation. We also evaluated these two training-free baselines as a supplementary comparison.

Following our main experimental protocol, we utilized \textsc{Qwen3-30B-A3B-Thinking-2507} on \textsc{HMMT24} and \textsc{HMMT25}. Specifically, we scaled the rollout budgets for Self-Certainty and Deep-Confidence from 64 to 512 to observe their asymptotic behaviors. We report the best-performing tested variant for each benchmark. As shown in Table~\ref{tab:aggregation_comparison}, RSE consistently outperforms these baselines across both datasets. The results suggest that the gains of RSE are not simply due to better answer aggregation or confidence-based compute allocation, but instead come from effectively distilling and reusing search experience across rollouts.

\begin{table}[ht]
\small
\centering
\caption{Comparison with confidence-based strategies.}
\label{tab:aggregation_comparison}
\begin{tabular}{llcccc}
\toprule
\multirow{2}{*}{\textbf{Dataset}} & \multirow{2}{*}{\textbf{Method}} & \multicolumn{4}{c}{\textbf{Rollout Budget}} \\
\cmidrule(lr){3-6}
& & \textbf{@64} & \textbf{@128} & \textbf{@256} & \textbf{@512} \\
\midrule
\multirow{3}{*}{HMMT24} 
& Self-Certainty & 68.3 & 68.1 & 67.8 & 67.3 \\
& Deep-Confidence & 69.3 & 69.3 & 69.6 & 69.7 \\
& RSE (Ours) & \multicolumn{4}{c}{\textbf{74.4}} \\
\midrule
\multirow{3}{*}{HMMT25} 
& Self-Certainty & 76.0 & 76.4 & 76.7 & 76.7 \\
& Deep-Confidence & 75.9 & 77.0 & 78.9 & 80.5 \\
& RSE (Ours) & \multicolumn{4}{c}{\textbf{83.9}} \\
\bottomrule
\end{tabular}
\end{table}

\subsection{Robustness and Configuration Sensitivity Analysis}
\label{sec:robustness_sensitivity}

To comprehensively understand the robustness of RSE and the specific impact of operational configurations, we conduct three targeted evaluations with \textsc{Qwen3-30B-A3B-Thinking-2507}.

\textbf{Robustness against noise interference.} We investigate the model's intrinsic ability to tolerate noise (i.e., misleading experiences) by performing a controlled one-step noise experiment on HMMT24. While maintaining a fixed Experience Bank capacity, we artificially replaced valid experiences with invalid ones at varying proportions (0\%, 20\%, 40\%, 60\%). The method to judge whether an experience is valid follows Appendix~\ref{app:exp_quality_verification}.  The empirical results in Table \ref{tab:noise_robustness} demonstrate that the completely noise-free bank (0\% noise) yields a pass@1 of 73.8, outperforming the automatically distilled Original Bank (71.9). This indicates that further cleaning and filtering of noisy experiences from the initial distillation can effectively improve RSE's performance. 

\begin{table}[ht]
\centering
\caption{Impact of controlled noise injection on HMMT24 pass@1 performance.}
\label{tab:noise_robustness}
\begin{tabular}{lcccccc}
\toprule
\textbf{Setting} & \textbf{Base} & \textbf{Original Bank} & \textbf{0\%} & \textbf{20\%} & \textbf{40\%} & \textbf{60\%} \\
\midrule
HMMT24 & 57.4 & 71.9 & 73.8 & 70.1 & 63.5 & 59.5 \\
\bottomrule
\end{tabular}
\end{table}

\textbf{Sensitivity to experience bank capacity.} To systematically study the performance sensitivity to the Experience Bank size, we conduct a controlled one-step experiment where the bank is constructed from the Iter-0 rollouts. When the number of distilled experiences exceeds a preset limit (60, 90, 120, or 150 experiences), we randomly sample experiences up to that budget for injection. The Full setting utilizes the original distilled bank without any explicit cap. As shown in Table \ref{tab:capacity_sensitivity}, RSE remains competitive even with small experience budgets. Performance scales steadily as the budget constraints are relaxed, with the unrestricted Full setting achieving the best results (71.9 on HMMT24 and 81.3 on HMMT25).

\begin{table}[ht]
\centering
\caption{Sensitivity analysis of Experience Bank capacity (budget limits).}
\label{tab:capacity_sensitivity}
\begin{tabular}{lccccc}
\toprule
\textbf{Bank Size} & \textbf{60} & \textbf{90} & \textbf{120} & \textbf{150} & \textbf{Full} \\
\midrule
HMMT24 & 68.9 & 69.8 & 70.8 & 70.9 & \textbf{71.9} \\
HMMT25 & 79.2 & 79.0 & 80.6 & 81.1 & \textbf{81.3} \\
\bottomrule
\end{tabular}
\end{table}

\textbf{Trade-offs in iteration rollout number.} In our main experiments, the default rollout number for RSE is set to 32 per iteration. To study the sensitivity to this choice, we vary the per-iteration rollout number from 16 to 32 to 64. As shown in Table \ref{tab:rollout_sensitivity}, performance improves as the rollout number increases. This parameter reflects a critical engineering trade-off: if it is too large, more computation is spent on redundant within-batch exploration before the Experience Bank can be updated; if it is too small, achieving sufficient search depth requires more sequential rounds, which inherently increases overall latency. We therefore establish 32 rollouts per iteration as our default configuration.

\begin{table}[ht]
\centering
\caption{Performance comparison across different rollout counts per iteration.}
\label{tab:rollout_sensitivity}
\begin{tabular}{lccc}
\toprule
\textbf{Rollout Num} & \textbf{16} & \textbf{32} & \textbf{64} \\
\midrule
HMMT24 & 71.5 & 74.4 & \textbf{74.6} \\
HMMT25 & 81.7 & 83.9 & \textbf{85.5} \\
\bottomrule
\end{tabular}
\end{table}

\subsection{Quality Verification of Distilled Experiences}
\label{app:exp_quality_verification}

To verify the reliability of our distilled experiences, we employed \textsc{Gemini-3-Pro-Preview} \citep{google2025gemini3pro} as an automated validator. 
The model was prompted to evaluate each experience against the original problem, determining whether Positive Experiences are mathematically valid and whether Negative Experiences describe genuine logical flaws. The validation prompt is provided in Figure\ref{fig:validation_prompt}.

Table~\ref{tab:experience_quality} reports the validation results. 
The distilled experiences achieve an overall accuracy of 81.06\% on HMMT24 and 83.89\% on HMMT25. These high validity scores confirm that our extraction module effectively distills correct and meaningful reasoning signals from the raw rollouts.

\begin{table}[h]
    \centering
    \small
    \caption{Quality Verification of Distilled Experiences. Accuracy of extracted experiences as validated by \textsc{Gemini-3-Pro-Preview}.}
    \label{tab:experience_quality}
    \begin{tabular}{lccc}
        \toprule
        \textbf{Dataset} & \textbf{Positive Exp.} & \textbf{Negative Exp.} & \textbf{Overall} \\
        \midrule
        HMMT24 & 84.48 & 77.61 & 81.06 \\
        HMMT25 & 86.25 & 81.51 & 83.89 \\
        \bottomrule
    \end{tabular}
\end{table}

\subsection{Direct Evidence of Meaningful Experience Reuse}
\label{sec:meaningful_experience_reuse}

\begin{table}[ht]
\centering
\small
\caption{LLM-as-Judge evaluation of experience reuse behavior on HMMT24. Values represent the average number of occurrences per rollout.}
\label{tab:experience_reuse_judge}
\begin{tabular}{lc}
\toprule
\textbf{Category} & \textbf{Avg per Rollout} \\
\midrule
AVOID & 2.18 \\
VERIFY & 1.96 \\
SHORTCUT & 1.49 \\
CORRECT & 0.18 \\
\cmidrule(lr){1-2}
\textbf{Positive-use total} & \textbf{5.81} \\
\midrule
IGNORE & 0.57 \\
MISUSE & 0.28 \\
REFUSE & 0.09 \\
\cmidrule(lr){1-2}
\textbf{Non-positive-use total} & \textbf{0.94} \\
\bottomrule
\end{tabular}
\end{table}

To provide direct evidence of meaningful experience reuse, we conduct an LLM-as-Judge evaluation.

Specifically, we sample 5 rollouts for each of 30 randomly selected questions from HMMT24, yielding a total of 150 reasoning traces. For each trace, a judge model (\textsc{Kimi-K2.5} \citep{kimi-k2.5}) examines the interaction between the provided experiences in the prompt and the model's subsequent chain-of-thought. The judge classifies these interactions into four positive-use categories (AVOID, VERIFY, SHORTCUT, and CORRECT) and three non-positive-use categories (IGNORE, MISUSE, and REFUSE).

As detailed in Table~\ref{tab:experience_reuse_judge}, meaningful positive use substantially outweighs non-positive interactions, achieving a positive-to-non-positive ratio of 6.2:1. This direct evaluation confirms that reused experiences effectively shape and guide the reasoning trajectory in the vast majority of cases.

\subsection{Generalization to Instruction-Tuned Models}
\label{app:instruct_results}

\begin{table}[htbp]
    \centering
    \small
    \renewcommand{\arraystretch}{0.85}
    \caption{Results of instruction-tuned models on HMMT24 and HMMT25.}
    \begin{tabular}{lcccccccc}
        \toprule
        \multirow{2}{*}{\textbf{Method}} & \multicolumn{4}{c}{\textbf{HMMT24}} & \multicolumn{4}{c}{\textbf{HMMT25}} \\
        \cmidrule(lr){2-5} \cmidrule(lr){6-9}
         & It0 & It1 & It2 & It3 & It0 & It1 & It2 & It3 \\
        \midrule
        % --- Qwen3-30B-Instruct ---
        \multicolumn{9}{c}{\textit{\textbf{Qwen3-30B-Instruct}}} \\
        \midrule
        Base & 26.6 & \na & \na & \na & 34.0 & \na & \na & \na \\
        MV@128 & 48.5 & \na & \na & \na & 45.9 & \na & \na & \na \\
        Self-Ref & \g{26.6} & 39.5 & 41.6 & 43.7 & \g{34.0} & 45.2 & 48.3 & 50.7 \\
        PaCoRe & \g{26.6} & \textbf{52.8} & 53.6 & 53.7 & \g{34.0} & 55.7 & 57.0 & 57.2 \\
        RSE & \g{26.6} & 50.3 & \textbf{55.4} & \textbf{54.4} & \g{34.0} & \textbf{58.5} & \textbf{62.3} & \textbf{64.4} \\
        \midrule
        % --- Qwen3-4B-Instruct ---
        \multicolumn{9}{c}{\textit{\textbf{Qwen3-4B-Instruct}}} \\
        \midrule
        Base & 21.9 & \na & \na & \na & 26.8 & \na & \na & \na \\
        MV@128 & 31.4 & \na & \na & \na & 32.8 & \na & \na & \na \\
        Self-Ref & \g{21.9} & 26.8 & 28.6 & 29.4 & \g{26.8} & 31.9 & 33.2 & 34.0 \\
        PaCoRe & \g{21.9} & 35.0 & 38.2 & 40.0 & \g{26.8} & 43.2 & 44.4 & 44.8 \\
        RSE & \g{21.9} & \textbf{37.8} & \textbf{40.4} & \textbf{41.5} & \g{26.8} & \textbf{44.1} & \textbf{47.0} & \textbf{49.4} \\
        \bottomrule
    \end{tabular}%
    % }
    \label{tab:instruct_results}
\end{table}

\subsection{Generalization to Deepseek-V3.2} 
\label{app:table-ds-v3.2}
We additionally extend our evaluation to \textsc{Deepseek-V3.2} on the HLE-Math-text benchmark, as detailed in Table~\ref{tab:deepseek_results}. Despite the high intrinsic baseline, RSE achieves a substantial gain of 13.5\% at Iteration 2. These results align well with the trends observed in Table~\ref{tab:benchmark_results}, further corroborating the robustness of our approach across varying model scales and capabilities.

\begin{table}[ht]
    \centering
    \small
    \caption{Performance of \textsc{Deepseek-V3.2} on the HLE-Math-text.}
    \label{tab:deepseek_results}
    \begin{tabular}{lccc}
        \toprule
        \textbf{Method} & It0 & It1 & It2 \\
        \midrule
        \multicolumn{4}{c}{\textit{\textbf{Deepseek-V3.2 on HLE-Math-text}}} \\
        \midrule
        Base & 49.3 & \na & \na \\
        MV@128 & 58.6 & \na & \na \\
        PaCoRe & \g{49.3} & 58.1 & 57.9 \\
        RSE & \g{49.3} & \textbf{59.2} & \textbf{62.8} \\
        \bottomrule
    \end{tabular}
\end{table}

\subsection{System Overhead and Latency Analysis}
\label{sec:overhead_analysis}

A clear understanding of the computational and engineering overhead is essential for evaluating the practical applicability of test-time scaling methods. While Appendix B.4 provides a detailed computational cost estimation for RSE based on token consumption across the distillation and reasoning phases, we further present a fine-grained, stage-level breakdown of system latency and FLOPs here. As detailed in Table \ref{tab:latency_breakdown}. The dominant additional cost comes from the extra experience distillation stage after each round, together with the sequential dependency between rounds. By contrast, experience-bank management itself is relatively lightweight: after distillation and semantic deduplication, the bank remains compact, so storing, sampling, and injecting experiences add limited cost compared with the main LLM stages. Our measurements also show that semantic deduplication contributes negligible FLOPs relative to LLM decoding. Overall, the main overhead of RSE comes from the additional LLM pass and serialized scheduling, rather than bank management itself.

\begin{table}[ht]
\centering
\small
\caption{Stage-level breakdown of computational cost (PFLOPs) and latency (seconds) for RSE across iterations.}
\label{tab:latency_breakdown}
\begin{tabular}{llcc}
\toprule
\textbf{Iteration} & \textbf{Stage} & \textbf{Compute (PFLOPs)} & \textbf{Latency (s)} \\
\midrule
\textbf{Iter-0} 
& Rollout & 8.0 & 363.8 \\
\midrule
\multirow{3}{*}{\textbf{Iter-1}} 
& Reflect & 9.9 & 78.3 \\
& Dedup & $9.8 \times 10^{-5}$ & 24.0 \\
& Rollout & 6.6 & 198.2 \\
\midrule
\multirow{3}{*}{\textbf{Iter-2}} 
& Reflect & 5.1 & 96.3 \\
& Dedup & $1.9 \times 10^{-4}$ & 31.0 \\
& Rollout & 8.4 & 190.1 \\
\midrule
\multirow{3}{*}{\textbf{Iter-3}} 
& Reflect & 4.9 & 92.9 \\
& Dedup & $2.8 \times 10^{-4}$ & 34.0 \\
& Rollout & 7.9 & 181.3 \\
\bottomrule
\end{tabular}
\end{table}

\subsection{Wall-Clock Latency Comparison}
\label{sec:wall_clock_analysis}

To evaluate the sequential dependency of our method, we compare the wall-clock latency (in seconds) of RSE against baseline methods in Table~\ref{tab:wall_clock_latency}. RSE incurs a higher elapsed time than purely parallel sampling methods like Majority Voting (MV@128), since each round must wait for experience distillation and bank updates before the next round can begin. This is a real trade-off of the sequential design. 

However, our primary objective is not to achieve the minimum elapsed time under abundant parallel resources, but to improve test-time search by converting previous rollouts into reusable guidance for subsequent sampling, rather than treating each rollout as an independent trial. In practical settings, available parallelism, batch size, and GPU memory are all finite, so efficiency in test-time scaling is often discussed under a strict compute budget (e.g., FLOPs). From this perspective, the additional sequential cost of RSE is precisely what enables later rollouts to become more informed and reduces redundant exploration.

\begin{table}[ht]
\centering
\small
\caption{Comparison of total wall-clock latency (in seconds) across test-time scaling methods. RSE trades increased sequential latency for higher search efficiency under constrained compute budgets.}
\label{tab:wall_clock_latency}
\begin{tabular}{lcccc}
\toprule
\textbf{Dataset} & \textbf{MV@128} & \textbf{PaCoRe} & \textbf{RSE (Ours)} & \textbf{Self-Ref} \\
\midrule
HMMT24 & 365.93 & 732.66 & 1200.99 & 1382.28 \\
HMMT25 & 347.43 & 631.52 & 1052.16 & 1248.26 \\
\bottomrule
\end{tabular}
\end{table}

\subsection{Lexical Analysis of Reasoning Dynamics}
\label{app:wordcloud_analysis}

To investigate the cognitive mechanisms driving the performance divergence observed in Figure~\ref{fig:analysis_scaling}(a), we conducted a lexical analysis of the generated reasoning content. Figure~\ref{fig:wordcloud} presents the word clouds derived from the reasoning traces of PaCoRe and RSE.

\begin{figure}[ht]
    \centering
    \includegraphics[width=0.80\textwidth]{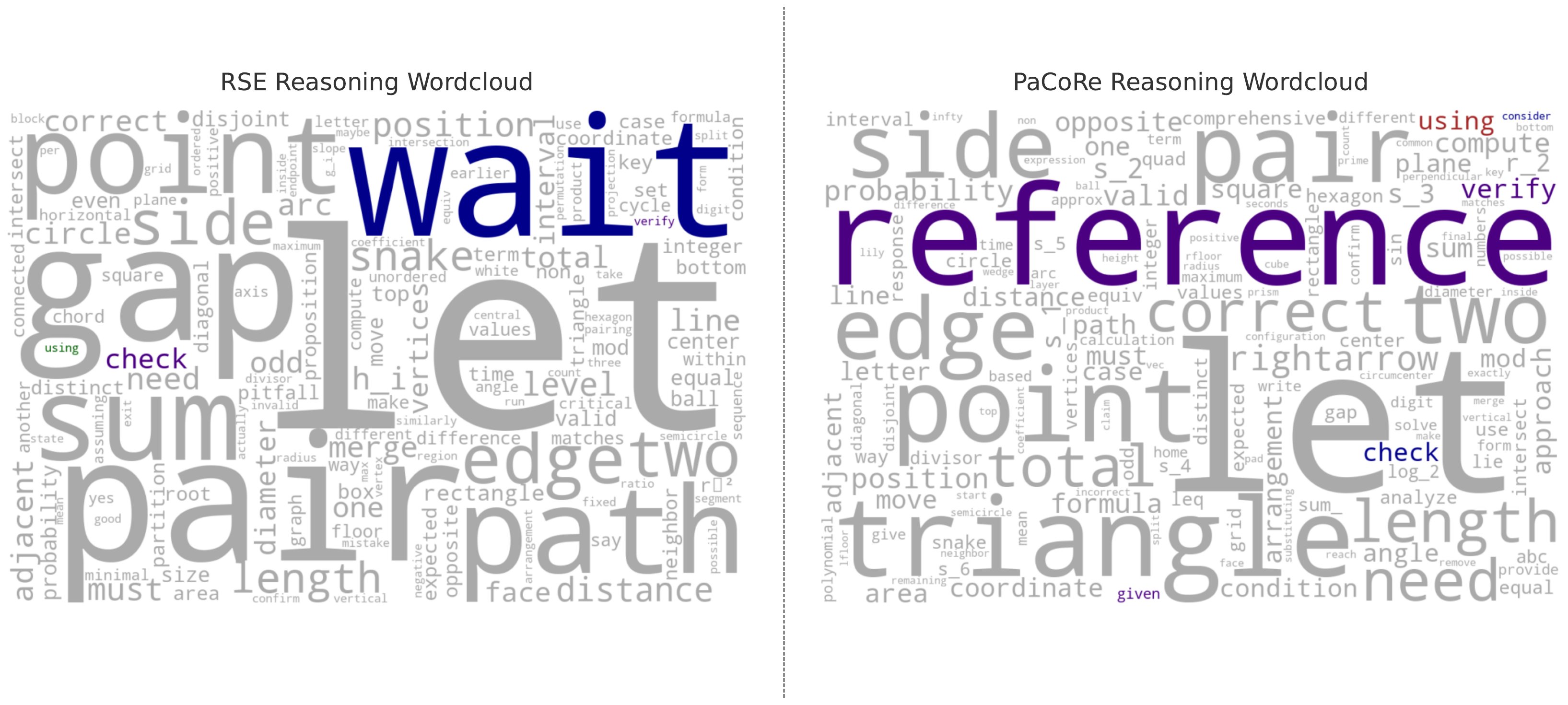}
    \caption{Word Cloud Analysis of Reasoning Content. Left: RSE; Right: PaCoRe.}
    \label{fig:wordcloud}
\end{figure}

\paragraph{The Verification-Centric Bottleneck.}
The PaCoRe word cloud (Right) is dominated by meta-cognitive verification terms such as \texttt{"reference"}, \texttt{"verify"}, and \texttt{"check"}. This lexical distribution reveals a fundamental shift in the model's behavior: instead of engaging in independent problem-solving, the model repurposes its compute budget to validate its generation against the concatenated historical references. 
We term this the \textbf{Verification-Centric Bottleneck}. While this mechanism effectively filters out obvious inconsistencies in early iterations, it creates a closed feedback loop. Once the reference pool stabilizes (even on a suboptimal consensus), the model ceases to generate novel deductive steps, leading to the rapid diminishing returns and premature saturation observed in our scalability experiments.

\paragraph{Preservation of Independent Derivation in RSE.}
In stark contrast, the RSE word cloud (Left) remains dominated by procedural reasoning markers such as \texttt{"wait"} and \texttt{"let"}. 
This lexical distribution confirms that RSE avoids collapsing into a shallow verification of reference content. 
Instead, it preserves the intrinsic reasoning patterns, ensuring that the generation process remains driven by active deduction rather than passive checking.

\subsection{Quantitative Evidence of Mode Collapse.}
The qualitative shift identified in the lexical analysis (Appendix~\ref{app:wordcloud_analysis}) is further corroborated by quantitative metrics on reasoning length (Table~\ref{tab:generation_length}) and answer entropy (Table~\ref{tab:entropy_analysis}).
As shown in Table~\ref{tab:generation_length}, PaCoRe exhibits a precipitous length collapse, with the average reasoning budget shrinking by over 90\% (from 23k to 1.8k tokens) by Iteration 3. 
This drastic reduction aligns with the Verification-Centric Bottleneck: verification is inherently less compute-intensive than generation. Once the model pivots to merely checking against references, it abandons the deep cognitive processes required for independent deduction.
Concurrently, Table~\ref{tab:entropy_analysis} reveals that PaCoRe's answer entropy decays to near-zero (0.0606), quantitatively confirming the occurrence of mode collapse.
In stark contrast, RSE sustains a substantial reasoning length ($\approx$8.4k tokens) and higher entropy (0.3319), demonstrating that it preserves the capacity for active, diverse exploration throughout the iterative process.

\begin{table}[h]
    \centering
    \small
    \caption{Average length of the generated \texttt{reasoning\_content} across different iterations. RSE maintains a substantial reasoning budget, whereas PaCoRe exhibits a sharp length collapse.}
    \label{tab:generation_length}
    \begin{tabular}{lcccc}
        \toprule
        \textbf{Method} & \textbf{Iter 0} & \textbf{Iter 1} & \textbf{Iter 2} & \textbf{Iter 3} \\
        \midrule
        PaCoRe     & \multirow{2}{*}{23,309} & 3,935  & 2,184 & 1,885 \\
        RSE (Ours) &                         & 10,696 & 9,044 & 8,417 \\     
        \bottomrule
    \end{tabular}
\end{table}

\begin{table}[ht]
    \centering
    \small
    \caption{Answer entropy dynamics during search iteration.}
    \begin{tabular}{lcccc}
        \toprule
        \textbf{Method} & \textbf{Iter-0} & \textbf{Iter-1} & \textbf{Iter-2} & \textbf{Iter-3} \\
        \midrule
        RSE & 1.3264 & 0.5047 & 0.3779 & 0.3319 \\
        PaCoRe & 1.3264 & 0.3603 & 0.1359 & 0.0606 \\
        \bottomrule
    \end{tabular}
    \label{tab:entropy_analysis}
\end{table}

\subsection{Impact of Context Composition.}

Table~\ref{tab:consist_vs_distin} provides the detailed numerical breakdown for the ablation study regarding information distinctness versus consistency reinforcement.

\begin{table}[h]
    \centering
    \small
    % \scriptsize
    \caption{We compare context selection strategies under identical shot counts. \textit{Dedup Sampling} uses the set of unique experiences identified by our method ($N$ unique items). \textit{Random Sampling} selects an equal number ($N$) of examples from the raw pool. This setup isolates the effect of information diversity (Distinctness) versus frequency reinforcement (Consistency).}
    % \resizebox{\columnwidth}{!}{%
    \begin{tabular}{lccc}
        \toprule
        \textbf{Dataset} & \textbf{Iter} & \textbf{Random Sampling} & \textbf{Dedup Sampling} \\
         & & \textit{(Consistency-Biased)} & \textit{(Distinctness-Driven)} \\
        \midrule
        \multirow{3}{*}{HMMT24} 
         & It-1 & 69.4 & \textbf{71.9} \\
         & It-2 & 71.0 & \textbf{73.7} \\
         & It-3 & 71.8 & \textbf{74.4} \\
        \midrule
        \multirow{3}{*}{HMMT25} 
         & It-1 & 80.3 & \textbf{81.3} \\
         & It-2 & 81.6 & \textbf{82.9} \\
         & It-3 & 82.4 & \textbf{83.9} \\
        \bottomrule
    \end{tabular}%
    % }
    \label{tab:consist_vs_distin}
\end{table}

\subsection{Truncated Completions Analysis}
\label{app:truncation_analysis}

Figure~\ref{fig:null_text_rate} illustrates the truncated completion rates varying across problem difficulty bins (grouped by pass@1 intervals).

\begin{figure}[htbp]
    \centering
    \includegraphics[width=0.60\textwidth]{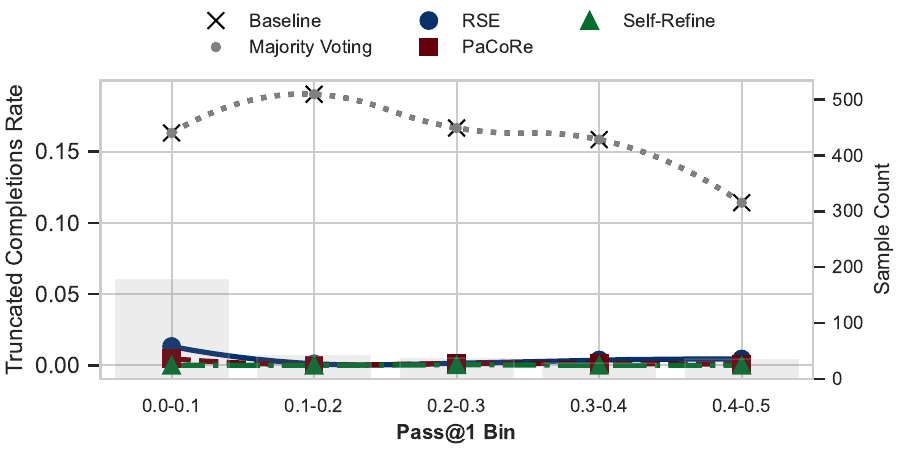}
    \caption{Truncated rate varying difficulty problems.}
    \label{fig:null_text_rate}
\end{figure}

\begin{figure*}[t]
    \centering
    \begin{tcolorbox}[
        colback=gray!5,
        colframe=black,
        boxrule=0.8pt,
        arc=2pt,
        left=10pt, right=10pt, top=10pt, bottom=10pt,
        title=\textbf{Case Analysis}
    ]
        \ttfamily
        \scriptsize
        
        % --- Part 1: Question & Answer ---
        \textbf{Question:} Compute the number of triples $(f, g, h)$ of permutations on $\{1,2,3,4,5\}$ such that
        $f(g(h(x)))=h(g(f(x)))=g(x)$, $g(h(f(x)))=f(h(g(x)))=h(x)$, and $h(f(g(x)))=g(f(h(x)))=f(x)$
        for all $x \in\{1,2,3,4,5\}$.
        
        \vspace{0.3em}
        \textbf{Answer:} 146
        
        \vspace{0.5em}
        \hrule % 水平分割线
        \vspace{0.5em}
        
        % --- Part 2: Conclusions & Failure Patterns ---
        \textbf{(1) Intermediate Conclusions}
        \begin{itemize}
            \setlength\itemsep{0em}
            \setlength\parskip{0em}
            \setlength\leftskip{-1em} % 调整列表缩进
            \item[\textcolor{darkgreen}{\textbullet}] \textcolor{darkgreen}{The number of involutions in $S_5$ is 26.}
            \item[\textcolor{darkgreen}{\textbullet}] \textcolor{darkgreen}{All permutations $f, g, h$ in valid solutions must be involutions, i.e., $f^2 = g^2 = h^2 = \text{id}$.}
            \item[\textcolor{darkgreen}{\textbullet}] \textcolor{darkgreen}{The number of valid triples $(f, g, h)$ equals the number of ordered pairs $(f, g)$ of involutions such that $(fg)^3 = \text{id}$.}
        \end{itemize}

        \vspace{0.3em}
        
        \textbf{(2) Failure Patterns}
        \begin{itemize}
            \setlength\itemsep{0em}
            \setlength\parskip{0em}
            \setlength\leftskip{-1em}
            \item[\textcolor{red}{\textbullet}] \textcolor{red}{Failing to verify that all 6 permutations of the three transpositions on a 3-element set satisfy the equations}
            \item[\textcolor{red}{\textbullet}] \textcolor{red}{Counting only Type 1 solutions (where $f = g = h$) and ignoring Type 2 solutions (where $f, g, h$ are distinct transpositions on a 3-element subset)}
            \item[\textcolor{red}{\textbullet}] \textcolor{red}{Failing to account for the fixed points outside the 3-cycle support}
        \end{itemize}
        
        \vspace{0.5em}
        \hrule % 水平分割线
        \vspace{0.5em}
        
        % --- Part 3: Reasoning Slices ---
        \textbf{Snapshots of the Model's Reasoning Process}
        \vspace{0.5em}

        \vspace{0.3em}
        \centerline{...}
        \vspace{0.3em}

        \greencirc{1} The number of involutions in $S_5$ is known to be 26 (1 identity, 10 transpositions, 15 double transpositions). Let's confirm... 1 + 10 + 15 = 26. Yes, that's correct.
        
        \vspace{0.3em}
        \centerline{...}
        \vspace{0.3em}

        \greencirc{2} Let's assume $f, g, h$ are involutions ($f^2 = g^2 = h^2 = e$)... The positive experience 'All permutations $f, g, h$ in valid solutions must be involutions' is verified, so we can safely assume that.
        
        \vspace{0.3em}
        \centerline{...}
        \vspace{0.3em}

        \redcirc{3} The critical negative experiences mention that 'Failing to verify that all 6 permutations of the three transpositions on a 3-element set satisfy the equations' is a fatal flaw, but we've verified that they do, so we're safe.
        
        \vspace{0.3em}
        \centerline{...}
        \vspace{0.3em}

        \greencirc{4} The positive experience 'The number of valid triples $(f, g, h)$ equals the number of ordered pairs $(f, g)$ of involutions such that $(fg)^3 = id$.' Let's see... The number of ordered pairs $(f, g)$ ... is equal to the number of Case 1 triples... plus the number of Case 2 triples...
        
        \vspace{0.3em}
        \centerline{...}
        \vspace{0.3em}

        \redcirc{5} The negative experience 'Counting only Type 1 solutions ($f = g = h$) and ignoring Type 2 solutions' would lead to 26, but we've counted both.
        
        \vspace{0.3em}
        \centerline{...}
        \vspace{0.3em}

        \redcirc{6} Another negative experience is 'Failing to account for fixed points outside the 3-element subset,' which we've addressed by multiplying by 2 (the number of involutions on the 2-element complement).
        
    \end{tcolorbox}
    \caption{\textbf{Analysis of Reasoning Components.} The figure illustrates the problem statement, key positive constraints (Green), and critical failure modes (Red). The bottom section displays reasoning slices where the model successfully utilizes the intermediate conclusions (marked \textcolor{darkgreen}{\textcircled{\textbf{1}}}, \textcolor{darkgreen}{\textcircled{\textbf{2}}}, \textcolor{darkgreen}{\textcircled{\textbf{4}}}) and actively avoids the identified failure patterns (marked \textcolor{red}{\textcircled{\textbf{3}}}, \textcolor{red}{\textcircled{\textbf{5}}}, \textcolor{red}{\textcircled{\textbf{6}}}).}
    \label{fig:rse_case}
\end{figure*}

\subsection{RSE Case Analysis}
\label{app:rse_case}

We provide an RSE reasoning case in Figure \ref{fig:rse_case}. The model demonstrates a structured reasoning process by actively integrating Intermediate Conclusions to accelerate search efficiency and utilizing Failure Patterns to ensure solution validity, while exhibiting critical discernment to maintain reasoning robustness against misleading cues.

\paragraph{Accelerating Reasoning Efficiency with Intermediate Conclusions.}
The model leverages Intermediate Conclusions (green) as reasoning shortcuts to significantly enhance search efficiency.
\begin{itemize}
    \item \textbf{Direct Knowledge Retrieval (Step \greencirc{1}):} By directly retrieving the constant $|I(S_5)|=26$, the model substitutes a complex combinatorial sub-task with a single memory lookup. This maximizes efficiency by eliminating an error-prone manual derivation chain, allowing the limited inference budget to be focused on the core logic.
    \item \textbf{Search Space Pruning (Step \greencirc{2}):} By enforcing the involution constraint ($f^2=\text{id}$), the model instantly reduces the candidate space from $|S_5|^3$ to $26^3$. This significantly improves reasoning efficiency by preventing the allocation of inference budget to invalid paths, allowing the model to focus its computational resources strictly on the viable subset.
    \item \textbf{Problem Reformulation (Step \greencirc{4}):} By utilizing the bijective mapping property, the model converts the recursive functional verification into a deterministic algebraic calculation ($(fg)^3=\text{id}$). This linearizes the reasoning path, effectively bypassing the high cognitive load and potential confusion associated with verifying nested recursive depths.
\end{itemize}

\paragraph{Ensuring Validity and Completeness via Adversarial Auditing.}
Failure Patterns (red) serve as an adversarial checklist, forcing the model to cover blind spots that standard chain-of-thought often misses.
\begin{itemize}
    \item \textbf{Overcoming Simplicity Bias (Step \redcirc{5}):} By consulting the pattern on solution types, the model actively expands its search beyond the trivial $f=g=h$ case. This mechanism ensures the completeness of the solution set, preventing the common undercounting error where models settle for the most probable/simplest subset and ignore complex valid configurations (Type 2).
    \item \textbf{Global Consistency Check (Step \redcirc{6}):} Alerted by the risk of ignoring fixed points, the model extends its algebraic focus from the local 3-cycle to the full domain $S_5$. This validates the transition from local derivation to global conclusion, ensuring the structural correctness of the final answer by correctly applying the complement multiplier.
\end{itemize}

\paragraph{Critical Discernment: Robustness Against Misleading Experiences.}
Crucially, the model demonstrates robust reasoning by not blindly following retrieved experiences, but actively verifying their applicability to prevent being misled.
\begin{itemize}
    \item \textbf{Active Discernment (Step \redcirc{3}):} Encountering the failure pattern warning against specific permutation sets, the model does not mechanically discard them. Instead, it exercises critical discernment by performing an independent calculation to verify their validity in the current context. This mechanism directly reinforces reasoning robustness, ensuring that the model remains resilient against potentially misleading retrieval cues and anchors its logical progression strictly on verified mathematical truths.
\end{itemize}

In conclusion, this case exemplifies how RSE optimizes the entire reasoning trajectory. By leveraging retrieved knowledge as both computational accelerators and adversarial auditors, the model achieves a synergy of high efficiency and rigorous completeness. Furthermore, its capacity for critical discernment ensures the reasoning process remains robust, effectively filtering noise while correctly incorporating complex truths.

\section{Prompts}
\label{app:prompts}

In this section, we present the full prompt templates used in our experiments. 
% Figure~\ref{fig:system_prompt} shows the default system prompt, and Figure~\ref{fig:pacore_prompt} illustrates the input serialization template for PaCoRe. Regarding our proposed RSE method, Figure~\ref{fig:distiller_prompt} details the Experience Distillation prompt, while Figure~\ref{fig:solver_prompt} presents the Experience-Guided Problem-Solving prompt.

% =========================================
% Figure E.1: Default System Prompt
% =========================================
\begin{figure*}[ht]
    \centering
    \begin{tcolorbox}[
        colback=gray!5,
        colframe=black,
        boxrule=0.8pt,
        arc=2pt,
        left=10pt, right=10pt, top=10pt, bottom=10pt,
        title=\textbf{Default System Prompt for Mathematical Reasoning}
    ]
        \ttfamily
        \scriptsize
        Please reason step by step, and put your final answer within \textbackslash boxed\{\}.
    \end{tcolorbox}
    \caption{\textbf{Default system prompt for mathematical reasoning.} We apply this system instruction across all evaluated models to enforce step-by-step reasoning and standardized answer formatting.}
    \label{fig:system_prompt}
\end{figure*}

% =========================================
% Figure E.2: PaCoRe Prompt
% =========================================
\begin{figure*}[ht]
    \centering
    \begin{tcolorbox}[
        colback=gray!5,
        colframe=black,
        boxrule=0.8pt,
        arc=2pt,
        left=10pt, right=10pt, top=10pt, bottom=10pt,
        title=\textbf{PaCoRe Input Serialization Template for Mathematical Reasoning}
    ]
        \ttfamily
        \scriptsize
        
        You are given a problem and a list of reference responses. Your job is to analyze these references and provide your own response.
        
        \vspace{1em}
        
        Original Problem: \\
        \textcolor{red}{\{\{ original\_prompt \}\}}
        
        \vspace{1em}
        
        Reference Responses: \\
        \textcolor{red}{\{\% for response in ref\_responses \%\}} \\
        Reference \textcolor{red}{\{\{ loop.index \}\}}: \\
        \textcolor{red}{\{\{ response \}\}} \\
        \textcolor{red}{\{\% endfor \%\}}
        
        \vspace{1em}
        
        Now, based on the original problem and reference responses above, please provide your own comprehensive solution.
        
    \end{tcolorbox}
    \caption{\textbf{PaCoRe input serialization template for mathematical reasoning.} Adopted from the PaCoRe implementation, this template embeds the current problem $x$ (denoted as \texttt{original\_prompt}) and the reference message set $\mathcal{M}$ (denoted as \texttt{ref\_responses}) into the model's context via Jinja2 syntax.}
    \label{fig:pacore_prompt}
\end{figure*}

% =========================================
% Figure E.3: Experience Extraction Prompt
% =========================================

\begin{figure*}[ht]
    \centering
    \small
    \begin{tcolorbox}[
        colback=gray!5,
        colframe=black,
        boxrule=0.8pt,
        arc=2pt,
        left=8pt, right=8pt, top=8pt, bottom=8pt,
        title=\textbf{Experience Distillation of RSE for Mathematical Reasoning.}
    ]
        \ttfamily
        \scriptsize
        \setlength{\parskip}{0.5em}
        
        "You are a Strategic Reasoning Distiller. Your goal is to construct a "Experience Bank" that will serve as the foundation for the student's next problem-solving iteration by extracting two specific lists:
        \begin{enumerate}[leftmargin=*, nosep]
            \item \textbf{Verified Positive Experiences:} Irrefutable truths and intermediate conclusions derived correctly.
            \item \textbf{Critical Negative Experiences:} Logical fallacies, dangerous operations, and dead ends to avoid.
        \end{enumerate}
        The student will explicitly reference this data: 
        \begin{itemize}[leftmargin=*, nosep]
            \item Utilizing \textbf{Verified Positive Experiences} as established anchors to accelerate valid reasoning.
            \item Consulting \textbf{Critical Negative Experiences} to proactively avoid repeating previously identified errors, logic gaps, or dead ends.
        \end{itemize}

        \textbf{Constraint: strict\_neutrality} \\
        You have \textbf{NO access} to the golden answer. You must \textbf{NOT} make any assumptions about whether the student's final conclusion is correct or incorrect. Treat the student's work as an unverified hypothesis; verify the validity of each step strictly based on logic and mathematical axioms alone.

        \textbf{Task 1: verified\_positive\_experiences (List[str])} \\
        \textbf{Goal:} Extract \textit{only} mathematically sound, reusable facts (Truth Anchors). \\
        \textbf{Strict Inclusion Rules (Filter Aggressively):}
        \begin{enumerate}[leftmargin=*, nosep]
            \item \textbf{Independent Verification:} You must be able to independently verify the statement is true based on standard mathematical axioms or strictly derived from the previous valid steps.
            \item \textbf{Explicit Conditions:} Every positive experience MUST state its necessary conditions.
            \item \textbf{Atomicity:} Break complex thoughts into the smallest reusable units.
            \item \textbf{No "Lucky Guesses":} Do not include conclusions that are "likely true" but lack logical derivation.
            \item \textbf{Self-Contained:} The string must be understandable without reading the original student text.
        \end{enumerate}
        \textbf{Format:} \texttt{"<Complete Statement with Conditions>. (Source: <Derivation/Method>)"}
        
        \textbf{Task 2: critical\_negative\_experiences (List[str])} \\
        \textbf{Goal:} Identify "Negative Constraints" that serve as warning signs for future explorations. \\
        \textbf{Focus on identifying these specific categories:}
        \begin{enumerate}[leftmargin=*, nosep]
            \item \textbf{Dead Ends (Strategy Failures):} Approaches that are technically valid but lead to unmanageable complexity or circular reasoning.
            \item \textbf{Fatal Logic Flaws (Actual Errors):} Fundamental errors such as non-equivalent transformations.
            \item \textbf{Potential Risks (Unsafe Operations):} Correct-looking steps that lack necessary checks (e.g., dividing by zero).
            \item \textbf{Missing Proof Obligations:} Leaps in logic where a case was ignored.
        \end{enumerate}
        \textbf{Format:} \texttt{"<Context/Step> -> <Type> -> <Explanation: Trigger + Invalid Action + Consequence>"}

        \textbf{Output Requirements} \\
        Output \textbf{ONLY} a raw JSON object. No Markdown formatting. Ensure all LaTeX backslashes are escaped properly.

        \textbf{JSON Structure:}
        \begin{verbatim}
{
    "verified_positive_experiences": [
        "<Complete Statement>. (Source: <Derivation>)",
        "..."
    ],
    "critical_negative_experiences": [
        "<Context> -> <Type> -> <Explanation>",
        "..."
    ]
}
        \end{verbatim}

        \textbf{Input Data} \\
        \textbf{Question:} \\
        \textcolor{red}{\{\{ question \}\}} \\
        \textbf{Student's Attempt:} \\
        \textcolor{red}{\{\{ attempt \}\}}

    \end{tcolorbox}
    \caption{RSE experience distillation prompt  for mathematical reasoning.}
    \label{fig:distiller_prompt}
\end{figure*}

% =========================================
% Figure E.4: Reflection-Guided Solver Prompt
% =========================================

\begin{figure*}[ht]
    \centering
    \begin{tcolorbox}[
        colback=gray!5,
        colframe=black,
        boxrule=0.8pt,
        arc=2pt,
        left=6pt, right=6pt, top=6pt, bottom=6pt,
        title=\textbf{RSE Experience-Guided Problem-Solving Prompt for Mathematical Reasoning}
    ]
        \ttfamily
        \scriptsize
        \setlength{\parskip}{0.3em}
        
        You are an advanced mathematical solver augmented with \textbf{Experience Bank}. 
        You are currently in a \textbf{Test-Time Scaling} loop. Previous attempts on this specific problem have been analyzed to extract useful "Positive Experiences" (Intermediate Results) and "Critical Negative Experiences" (Past Errors).
        Your goal is to solve the problem by starting from the definitions. Use previous memories strictly as a \textbf{navigational aid}.

        \textbf{Operational Guidelines:}
        
        \textbf{1. Accelerate via Verified Positive Experiences (The Anchor):} \\
        \textbf{Rule:} Treat Positive Experiences as \textit{structural hypotheses}, not proven facts. \textbf{Priority:} Prioritize positive experiences that offer abstract insights, simplifications, or identities. \textbf{Skepticism:} Be extremely skeptical of raw numerical positive experiences. NEVER use a specific number from the report unless you have independently derived it. \textbf{Action:} If a positive experience offers a shortcut, verify its premise instantly. If valid, use it; if contradictory, discard it immediately.

        \textbf{2. Navigate via Critical Negative Experiences:} \\
        The provided "Critical Negative Experiences" describe specific logical errors or dead-ends. \textbf{You are STRICTLY FORBIDDEN} from repeating them. If you approach a decision point mentioned in a negative experience, you MUST actively choose an alternative strategy.

        \textbf{3. Conflict Resolution \& Robustness:} \\
        \textbf{Scenario:} If you encounter a contradiction (e.g., conflicting values). \textbf{Constraint:} Do NOT simply choose the "easier" value. \textbf{Action:} A contradiction usually means a foundational assumption is incorrect. Backtrack to the very beginning, re-read the problem statement, and challenge your initial setup.

        \textbf{Context from Previous Attempts:} \\
        \textcolor{red}{\{\{ content\_of\_experience\_bank \}\}}

        \textbf{Instruction:} \\
        Reason step by step. Consult the Experience Bank critically: Avoiding the previous error with negative experiences, and use positive experiences only if they accelerate your work. Put your final answer within \textbackslash boxed\{\}.

    \end{tcolorbox}
    \caption{RSE experience-guided solver prompt for mathematical reasoning.}
    \label{fig:solver_prompt}
\end{figure*}

% =========================================
% Figure E.5: Experience Validation Prompt
% =========================================

\begin{figure*}[ht]
    \centering
    \begin{tcolorbox}[
        colback=gray!5,
        colframe=black,
        boxrule=0.8pt,
        arc=2pt,
        left=6pt, right=6pt, top=6pt, bottom=6pt,
        title=\textbf{Experience Validation Prompt}
    ]
        \ttfamily
        \scriptsize
        \setlength{\parskip}{0.3em}
        
        \textbf{[System Prompt]} \\
        You are a rigorous mathematical validator. Your task is to evaluate whether each given mathematical statement is logically valid and correct in the context of the provided problem.

        \textbf{Instructions:} \\
        1. Carefully read the original problem. \\
        2. Analyze each statement in the provided list. \\
        3. For each statement, determine if it is mathematically correct and logically sound. \\
        4. Output your decisions as a Python-style boolean list in the following format: \\
        <decision>[True, False, True, ...]</decision>

        \textbf{Important:} \\
        - The list must contain exactly the same number of boolean values as the number of statements provided. \\
        - Use True if the statement is CORRECT, False if it is INCORRECT or FLAWED. \\
        - For positive experiences: Check if the intermediate result or insight is mathematically valid. \\
        - For negative experiences: Check if the described error/negative experience is a genuine logical flaw that should be avoided. \\
        - Be rigorous but fair in your evaluation. \\
        - Output ONLY the <decision>[...]</decision> tag with the boolean list after your analysis.

        \vspace{0.5em}
        \hrule
        \vspace{0.5em}

        \textbf{[User Template]} \\
        \textbf{Original Problem:} \\
        \textcolor{red}{\{\{ problem \}\}}

        \textbf{Statement Type:} \textcolor{red}{\{\{ statement\_type \}\}}

        \textbf{Statements to Validate (\textcolor{red}{\{\{ count \}\}} items):} \\
        \textcolor{red}{\{\{ statements\_list \}\}}

        Please analyze each statement and output your decisions as a boolean list with exactly \textcolor{red}{\{\{ count \}\}} values. \\
        Format: <decision>[True/False, True/False, ...]</decision>

    \end{tcolorbox}
    \caption{Prompt for Experience Validation.}
    \label{fig:validation_prompt}
\end{figure*}

\begin{figure*}[ht]
    \centering
    \small
    \begin{tcolorbox}[
        colback=gray!5,
        colframe=black,
        boxrule=0.8pt,
        arc=2pt,
        left=8pt, right=8pt, top=8pt, bottom=8pt,
        title=\textbf{System Prompt for GPQA.}
    ]
        \ttfamily
        \scriptsize
        \setlength{\parskip}{0.5em}
        
        What is the correct answer to this question: \textcolor{red}{\{question\}} \\
        Choices: \\
        (A) \textcolor{red}{\{A\}} \\
        (B) \textcolor{red}{\{B\}} \\
        (C) \textcolor{red}{\{C\}} \\
        (D) \textcolor{red}{\{D\}} \\
        Please show your choice in the `answer` field with only the choice letter, e.g., \{"answer": "C"\}.

    \end{tcolorbox}
    \caption{System prompt for GPQA.}
    \label{fig:prompt_gpqa}
\end{figure*}

% ==========================================
% Part 1: Intro, Constraint & Task 1
% ==========================================
\begin{figure*}[ht]
    \centering
    \small
    \begin{tcolorbox}[
        colback=gray!5,
        colframe=black,
        boxrule=0.8pt,
        arc=2pt,
        left=8pt, right=8pt, top=8pt, bottom=8pt,
        title=\textbf{Experience Distillation Prompt for GPQA (Part 1 of 2)}
    ]
        \ttfamily
        \scriptsize
        \setlength{\parskip}{0.5em}
        
        You are a Strategic Scientific Reasoning Distiller. Your goal is to construct a "Memory Bank" that will serve as the foundation for the student's next problem-solving iteration by extracting two specific lists:
        \begin{enumerate}[leftmargin=*, nosep]
            \item \textbf{Verified Positive Experiences:} Reliable scientific facts, intermediate conclusions, eliminations, and reasoning steps derived correctly from the attempt.
            \item \textbf{Critical Negative Experiences:} Wrong assumptions, invalid eliminations, logical fallacies, unsupported scientific claims, and dead ends to avoid.
        \end{enumerate}
        The student will explicitly reference this data:
        \begin{itemize}[leftmargin=*, nosep]
            \item Utilizing \textbf{Verified Positive Experiences} as established anchors to accelerate valid reasoning
            \item Consulting \textbf{Critical Negative Experiences} to proactively avoid repeating previously identified errors, logic gaps, or dead ends
        \end{itemize}

        \vspace{0.5em}\noindent\textbf{Constraint: strict\_neutrality} \\
        You have \textbf{NO access} to the golden answer. You have \textbf{NO access} to any external feedback such as answer checking, execution, hidden labels, or evaluation outcomes. You must \textbf{NOT} make any assumptions about whether the student's final choice is correct or incorrect. Treat the student's work as an unverified hypothesis; verify the validity of each step strictly based on scientific reasoning, the stated question and options, and the internal consistency of the attempt alone.

        \vspace{0.5em}\noindent\textbf{\#\# Task 1: verified\_positive\_experiences (List[str])}

        \textbf{Goal:} Extract \textit{only} logically sound, reusable facts from the attempt (Truth Anchors).

        \textbf{Strict Inclusion Rules (Filter Aggressively):}
        \begin{enumerate}[leftmargin=*, nosep]
            \item \textbf{Independent Verification:} You must be able to independently verify that the statement is valid based on scientific knowledge, option semantics, definitional facts, internal reasoning consistency, or strictly derived from previous valid steps.
            \item \textbf{Explicit Conditions:} Every positive experience MUST state its necessary conditions when relevant (e.g., "If the molecule is aromatic, then...", "Assuming the process occurs in equilibrium, ...", "Under classical mechanics, ..."). Do not assume hidden conditions.
            \item \textbf{Atomicity:} Break complex thoughts into the smallest reusable units.
            \item \textbf{No "Lucky Guesses":} Do not include conclusions that are merely plausible, pattern-matched, or based on vague intuition without a clear derivation in the text.
            \item \textbf{Self-Contained:} The string must be understandable without reading the original student text. Replace vague references like "this", "it", or "that option" with explicit scientific statements or option labels.
        \end{enumerate}

        \textbf{Content to Extract:}
        \begin{itemize}[leftmargin=*, nosep]
            \item \textbf{Valid Intermediate Scientific Conclusions:} Concrete facts or deductions derived accurately from the attempt (e.g., "Option B is inconsistent with conservation of energy under the stated setup").
            \item \textbf{Correct Eliminations:} Scientifically justified reasons for ruling out one or more answer choices.
            \item \textbf{Correct Equivalences / Reformulations:} Correct reinterpretations of the question, scientific definitions, domain assumptions, or comparisons between options.
            \item \textbf{Constraint-Driven Deductions:} Deductions regarding units, directionality, causality, feasibility, physical plausibility, biochemical compatibility, or domain-specific constraints.
            \item \textbf{Correct Application of Standard Scientific Principles:} Standard laws, definitions, or scientific regularities used where their conditions are visibly met.
        \end{itemize}

        \textbf{Format:}
        \begin{itemize}[leftmargin=*, nosep]
            \item \texttt{"<Complete Statement with Conditions>. (Source: <Derivation/Method>)"}
        \end{itemize}

    \end{tcolorbox}
    \caption{Experience distillation prompt for GPQA (Part 1 of 2).}
    \label{fig:prompt_gpqa_reflection_1}
\end{figure*}

% ==========================================
% Part 2: Task 2, Output Req & Input Data
% ==========================================
\begin{figure*}[ht]
    \centering
    \small
    \begin{tcolorbox}[
        colback=gray!5,
        colframe=black,
        boxrule=0.8pt,
        arc=2pt,
        left=8pt, right=8pt, top=8pt, bottom=8pt,
        title=\textbf{Experience Distillation Prompt for GPQA (Part 2 of 2)}
    ]
        \ttfamily
        \scriptsize
        \setlength{\parskip}{0.5em}

        \vspace{0.5em}\noindent\textbf{\#\# Task 2: critical\_negative\_experiences (List[str])}

        \textbf{Goal:} Identify "Negative Constraints" that serve as warning signs for future explorations.

        \textbf{Focus on identifying these specific categories:}
        \begin{enumerate}[leftmargin=*, nosep]
            \item \textbf{Dead Ends (Strategy Failures):} Approaches that are technically possible but unproductive, overly speculative, or based on irrelevant detail rather than discriminating between answer choices.
            \item \textbf{Fatal Logic Flaws (Actual Errors):} Fundamental errors that invalidate the attempt, such as misreading the question, confusing scientific concepts, invalid option elimination, or contradiction with known principles.
            \item \textbf{Potential Risks (Unsafe Operations):} Correct-looking steps that lack necessary justification, such as eliminating an option without checking all conditions, assuming a mechanism/property without evidence, or treating a heuristic as proof.
            \item \textbf{Missing Proof Obligations:} Leaps in logic where an important comparison between choices, scientific precondition, unit check, or causal justification was ignored.
        \end{enumerate}

        \textbf{Format:}
        \begin{itemize}[leftmargin=*, nosep]
            \item \texttt{"<Context/Step> -> <Type: Dead End / Fatal Flaw / Potential Risk> -> <Explanation: Trigger + Invalid Action + Consequence>"}
        \end{itemize}

        \textbf{Explanation Requirements (The "WHY"):}
        \begin{itemize}[leftmargin=*, nosep]
            \item \textbf{Trigger:} What specific concept, option comparison, scientific assumption, or wording in the attempt caused the issue?
            \item \textbf{Invalid Action:} What did the student fail to check, or do incorrectly?
            \item \textbf{Consequence:} What is the reasoning result? (e.g., "Eliminates the correct option without justification," "Keeps an incompatible option alive," "Confuses correlation with mechanism," "Uses a scientific principle outside its valid regime").
        \end{itemize}

        \textbf{Example:}
        \begin{itemize}[leftmargin=*, nosep]
            \item \texttt{"Ruling out option B because it 'seems too strong' -> Potential Risk -> Trigger: qualitative impression only; Invalid Action: eliminated an option without checking whether the underlying scientific claim is actually false under the stated conditions; Consequence: valid options may be discarded for non-scientific reasons."}
        \end{itemize}

        \vspace{0.5em}\noindent\textbf{\#\# Output Requirements}
        \begin{itemize}[leftmargin=*, nosep]
            \item \textbf{Output ONLY a raw JSON object.}
            \item No Markdown formatting (no \texttt{```json ... ```}), no explanations, no chat.
        \end{itemize}

        \textbf{JSON Structure:}
\begin{verbatim}
{
    "verified_positive_experiences": [
        "<Complete Statement with Conditions>. (Source: <Derivation/Method>)",
        "..."
    ],
    "critical_negative_experiences": [
        "<Context/Step> -> <Type: Dead End/Fatal Flaw/Potential Risk> -> <Explanation>",
        "..."
    ]
}
\end{verbatim}

        \vspace{0.5em}\noindent\textbf{\#\# Input Data}

        \textbf{Question:} \\
        \textcolor{red}{\{\{ question \}\}}

        \textbf{Student's Attempt:} \\
        \textcolor{red}{\{\{ attempt \}\}}

    \end{tcolorbox}
    \caption{Experience distillation prompt for GPQA (Part 2 of 2).}
    \label{fig:prompt_gpqa_reflection_2}
\end{figure*}

\begin{figure*}[ht]
    \centering
    \small
    \begin{tcolorbox}[
        colback=gray!5,
        colframe=black,
        boxrule=0.8pt,
        arc=2pt,
        left=8pt, right=8pt, top=8pt, bottom=8pt,
        title=\textbf{Experience-guided problem-solving prompt for GPQA.}
    ]
        \ttfamily
        \scriptsize
        \setlength{\parskip}{0.5em}
        
        You are an advanced scientific reasoning solver. \\
        You have access to a "Reference Report" from previous attempts, containing "Positive Experiences" (Intermediate Results) and "Critical Negative Experiences" (Past Errors).

        \vspace{0.5em}\noindent\textbf{Core Directive: First-Principles Reasoning with Strategic Consultation} \\
        Your primary goal is to determine the correct answer starting from the scientific content of the question, the semantic differences between the answer choices, and the underlying domain principles. Use the provided context strictly as a \textbf{navigational aid}, not as a definitive source of truth.

        \vspace{0.5em}\noindent\textbf{Operational Guidelines:}

        \textbf{1. Positive Experience Handling (Structure > Surface Form):}
        \begin{itemize}[leftmargin=*, nosep]
            \item \textbf{Rule:} Treat Positive Experiences as \textit{structural hypotheses}, not proven facts.
            \item \textbf{Priority:} Prioritize positive experiences that offer \textbf{scientific insights}, \textbf{correct eliminations}, \textbf{causal or mechanistic distinctions}, \textbf{constraint-driven simplifications}, or \textbf{correct reformulations of the question}.
            \item \textbf{Skepticism:} Be extremely skeptical of \textbf{surface-pattern matches}, \textbf{example-specific conclusions}, \textbf{unsupported scientific claims}, or \textbf{assertions that a particular option must be correct without justification}. NEVER trust a proposed final choice unless you have independently justified why it is supported by the question and why competing options fail.
            \item \textbf{Action:} If a positive experience offers a shortcut, verify its \textit{premise} immediately. If the premise holds and aligns with your reasoning, use it to accelerate. If it contradicts the wording of the question, the scientific constraints, or the option semantics, \textbf{discard it immediately}.
        \end{itemize}

        \textbf{2. Negative Experience Discrimination (Mechanism > Appearance):}
        \begin{itemize}[leftmargin=*, nosep]
            \item \textbf{Insight:} Negative Experiences often describe a \textit{misread condition}, a \textit{false elimination}, a \textit{confused scientific concept}, or an \textit{unsupported leap from intuition to conclusion}. Do not confuse a "nontrivial but valid scientific inference" with a "logic error".
            \item \textbf{Action:} When your reasoning resembles a Negative Experience:
            \begin{itemize}[leftmargin=*, nosep]
                \item \textit{Check:} Are you actually committing the specific scientific or logical error described (e.g., confusing correlation with mechanism, applying a principle outside its regime, eliminating an option without checking all relevant conditions)?
                \item \textit{Or:} Are you performing a valid step that merely \textit{looks} similar to the negative experience?
            \end{itemize}
            \item \textbf{Protocol:} If it is a genuine flaw, \textbf{ABORT} the branch. If it is a valid operation, \textbf{PROCEED} but explicitly verify why your reasoning is sound.
        \end{itemize}

        \textbf{3. Conflict Resolution \& Robustness:}
        \begin{itemize}[leftmargin=*, nosep]
            \item \textbf{Scenario:} You encounter a contradiction (e.g., two incompatible interpretations of the question, two competing principles, or two answer choices that seem plausible for different reasons).
            \item \textbf{Constraint:} Do NOT simply choose the more familiar or more intuitive option.
            \item \textbf{Action:} A contradiction usually means a \textbf{foundational assumption} (e.g., what is being asked, what condition is operative, or which principle actually applies) is incorrect. \textbf{Backtrack to the question wording}, compare the options carefully, and challenge your initial setup.
        \end{itemize}

        \textbf{4. Discrimination over Enumeration:}
        \begin{itemize}[leftmargin=*, nosep]
            \item \textbf{Guideline:} Before diving into vague speculation or excessive detail, pause and ask: "What is the key scientific distinction that separates the correct option from the others?"
            \item \textbf{Goal:} Use the Reference Report to find these \textit{structural discriminators} rather than complicating the problem with ad hoc or weakly supported arguments.
        \end{itemize}

        \vspace{0.5em}\noindent\textbf{Context from Previous Attempts:} \\
        \textcolor{red}{\{\{ experience\_context \}\}}

        \vspace{0.5em}\noindent\textbf{Instruction:} \\
        Reason step by step. Start by identifying the core scientific issue, the critical constraints or definitions, and the key distinctions among the answer choices. Consult the Reference Report critically: verify negative experiences before pruning, and use positive experiences only if they safely accelerate your work. Then produce your final answer as a JSON object with the format \{"answer": "A"\} where the value is a single letter from A, B, C, or D.

    \end{tcolorbox}
    \caption{Experience-guided problem-solving prompt for GPQA.}
    \label{fig:prompt_gpqa_with_reflection}
\end{figure*}

\begin{figure*}[ht]
    \centering
    \begin{tcolorbox}[
        colback=gray!5,
        colframe=black,
        boxrule=0.8pt,
        arc=2pt,
        left=10pt, right=10pt, top=10pt, bottom=10pt,
        title=\textbf{PaCoRe Input Serialization Template for GPQA}
    ]
        \ttfamily
        \scriptsize
        
        You are given a GPQA multiple-choice scientific reasoning problem and a list of reference responses. Your job is to analyze these references and provide your own response.
        
        \vspace{1em}
        
        Original Problem: \\
        \textcolor{red}{\{\{ original\_prompt \}\}}
        
        \vspace{1em}
        
        Reference Responses: \\
        \textcolor{red}{\{\% for response in ref\_responses \%\}} \\
        Reference \textcolor{red}{\{\{ loop.index \}\}}: \\
        \textcolor{red}{\{\{ response \}\}} \\
        \textcolor{red}{\{\% endfor \%\}}
        
        \vspace{1em}
        
        Now, based on the original problem and reference responses above, reason independently and provide your own final answer. \\
        Your final answer must be a JSON object with the format \{"answer": "A"\} where the value is exactly one of A, B, C, or D.
        
    \end{tcolorbox}
    \caption{PaCoRe input serialization template for GPQA.}
    \label{fig:pacore_prompt_gpqa}
\end{figure*}

\begin{figure*}[ht]
    \centering
    \begin{tcolorbox}[
        colback=gray!5,
        colframe=black,
        boxrule=0.8pt,
        arc=2pt,
        left=10pt, right=10pt, top=10pt, bottom=10pt,
        title=\textbf{System Prompt and Input Template for LiveCodeBench}
    ]
        \ttfamily
        \scriptsize
        \setlength{\parskip}{0.5em}
        
        \textbf{[System Message]} \\
        You are an expert Python programmer. You will be given a question (problem specification) and will generate a correct Python program that matches the specification and passes all tests.
        
        \vspace{1em}
        \textbf{[User Message Template]} \\
        \#\#\# Question: \\
        \textcolor{red}{\{\{ question\_content \}\}}
        
        \vspace{0.5em}
        \textit{--- If starter code is provided: ---} \\
        \#\#\# Format: You will use the following starter code to write the solution to the problem and enclose your code within delimiters. \\
        \pyfence\\
        \textcolor{red}{\{\{ starter\_code \}\}}\\
        \codefence
        
        \vspace{0.5em}
        \textit{--- If no starter code is provided: ---} \\
        \#\#\# Format: Read the inputs from stdin solve the problem and write the answer to stdout (do not directly test on the sample inputs). Enclose your code within delimiters as follows. Ensure that when the python program runs, it reads the inputs, runs the algorithm and writes output to STDOUT. \\
        \pyfence\\
        \# YOUR CODE HERE\\
        \codefence
        
        \vspace{0.5em}
        \#\#\# Answer: (use the provided format with backticks)
        
    \end{tcolorbox}
    \caption{\textbf{Prompt template for LiveCodeBench.} The template consists of a static system message establishing the expert programmer persona, followed by a dynamically formatted user message that adapts its instructions based on the availability of starter code.}
    \label{fig:prompt_livecodebench}
\end{figure*}

% ==========================================
% Part 1: Intro, Constraint & Task 1
% ==========================================
\begin{figure*}[ht]
    \centering
    \small
    \begin{tcolorbox}[
        colback=gray!5,
        colframe=black,
        boxrule=0.8pt,
        arc=2pt,
        left=8pt, right=8pt, top=8pt, bottom=8pt,
        title=\textbf{Experience Distillation Prompt for LiveCodeBench (Part 1 of 2)}
    ]
        \ttfamily
        \scriptsize
        \setlength{\parskip}{0.5em}
        
        You are a Strategic Code Reasoning Distiller. Your goal is to construct a "Memory Bank" that will serve as the foundation for the student's next problem-solving iteration by extracting two specific lists:
        \begin{enumerate}[leftmargin=*, nosep]
            \item \textbf{Verified Positive Experiences:} Reliable facts and intermediate conclusions derived correctly from the algorithmic reasoning or implementation attempt.
            \item \textbf{Critical Negative Experiences:} Wrong assumptions, invalid transitions, unsafe operations, implementation bugs, and dead ends to avoid.
        \end{enumerate}
        The student will explicitly reference this data:
        \begin{itemize}[leftmargin=*, nosep]
            \item Utilizing \textbf{Verified Positive Experiences} as established anchors to accelerate valid reasoning and implementation
            \item Consulting \textbf{Critical Negative Experiences} to proactively avoid repeating previously identified errors, code-level bugs, logic gaps, or dead ends.
        \end{itemize}

        \vspace{0.5em}\noindent\textbf{Constraint: strict\_neutrality} \\
        You have \textbf{NO access} to the golden answer. You have \textbf{NO access} to any external feedback such as code execution, test-case results, hidden tests, or judge outcomes. You must \textbf{NOT} make any assumptions about whether the student's final code or conclusion is correct or incorrect based on imagined runtime behavior. Treat the student's work as an unverified hypothesis; verify the validity of each step strictly based on algorithmic logic, code semantics, the stated problem constraints, and the internal consistency of the attempt alone.

        \vspace{0.5em}\noindent\textbf{\#\# Task 1: verified\_positive\_experiences (List[str])}

        \textbf{Goal:} Extract \textit{only} logically sound, reusable facts from the attempt (Truth Anchors).

        \textbf{Strict Inclusion Rules (Filter Aggressively):}
        \begin{enumerate}[leftmargin=*, nosep]
            \item \textbf{Independent Verification:} You must be able to independently verify that the statement is valid based on the algorithm, data structure properties, control flow, complexity reasoning, boundary conditions, or strictly derived from previous valid steps.
            \item \textbf{Explicit Conditions:} Every positive experience MUST state its necessary conditions (e.g., "If the array is sorted, then...", "For \texttt{0 <= i < n}, ...", "Assuming all edge weights are non-negative, ..."). Do not assume global constraints apply unless stated.
            \item \textbf{Atomicity:} Break complex thoughts into the smallest reusable units.
            \item \textbf{No "Lucky Guesses":} Do not include conclusions that are merely plausible, based on a few examples, or asserted without a clear derivation in the text.
            \item \textbf{Self-Contained:} The string must be understandable without reading the original student text. Replace vague references like "this", "it", or "the loop" with explicit variables, arrays, states, transitions, or operations.
        \end{enumerate}

        \textbf{Content to Extract:}
        \begin{itemize}[leftmargin=*, nosep]
            \item \textbf{Valid Intermediate Algorithmic Conclusions:} Concrete properties derived accurately from previous steps and useful for implementation (e.g., "After sorting nums in nondecreasing order, equal values become adjacent").
            \item \textbf{Correct Equivalences / Reformulations:} Correctly transformed problem statements, invariants, recurrence relations, greedy conditions, or state definitions.
            \item \textbf{Constraint-Driven Deductions:} Deductions regarding valid ranges, monotonicity, uniqueness, feasibility, pruning conditions, or boundary behavior.
            \item \textbf{Correct Application of Standard Techniques:} Standard algorithmic facts or implementation principles used where all conditions are visibly met.
            \item \textbf{Implementation-Relevant Invariants:} Facts that help preserve correctness during coding, iteration, recursion, or state updates.
        \end{itemize}

        \textbf{Format:}
        \begin{itemize}[leftmargin=*, nosep]
            \item \texttt{"<Complete Statement with Conditions>. (Source: <Derivation/Method>)"}
        \end{itemize}

    \end{tcolorbox}
    \caption{Experience distillation prompt for livecodebench (Part 1 of 2).}
    \label{fig:prompt_code_reflection_1}
\end{figure*}

% ==========================================
% Part 2: Task 2, Output Req & Input Data
% ==========================================
\begin{figure*}[ht]
    \centering
    \small
    \begin{tcolorbox}[
        colback=gray!5,
        colframe=black,
        boxrule=0.8pt,
        arc=2pt,
        left=8pt, right=8pt, top=8pt, bottom=8pt,
        title=\textbf{Experience Distillation Prompt for LiveCodeBench (Part 2 of 2)}
    ]
        \ttfamily
        \scriptsize
        \setlength{\parskip}{0.5em}

        \vspace{0.5em}\noindent\textbf{\#\# Task 2: critical\_negative\_experiences (List[str])}

        \textbf{Goal:} Identify "Negative Constraints" that serve as warning signs for future explorations.

        \textbf{Focus on identifying these specific categories:}
        \begin{enumerate}[leftmargin=*, nosep]
            \item \textbf{Dead Ends (Strategy Failures):} Approaches that are technically possible but lead to unmanageable complexity, excessive casework, fragile implementation, or a solution path unlikely to satisfy the stated constraints.
            \item \textbf{Fatal Logic Flaws (Actual Errors):} Fundamental errors that invalidate the attempt, such as incorrect state definitions, wrong transitions, invalid greedy assumptions, broken invariants, or reasoning that does not preserve correctness.
            \item \textbf{Potential Risks (Unsafe Operations):} Correct-looking steps that lack necessary checks, such as out-of-bounds access, division by zero, incorrect initialization, integer overflow risk, recursion depth issues, misuse of library behavior, or applying an algorithm without verifying its preconditions.
            \item \textbf{Missing Proof Obligations:} Leaps in logic where an important case, invariant, edge case, complexity condition, or correctness condition was ignored.
        \end{enumerate}

        \textbf{Format:}
        \begin{itemize}[leftmargin=*, nosep]
            \item \texttt{"<Context/Step> -> <Type: Dead End / Fatal Flaw / Potential Risk> -> <Explanation: Trigger + Invalid Action + Consequence>"}
        \end{itemize}

        \textbf{Explanation Requirements (The "WHY"):}
        \begin{itemize}[leftmargin=*, nosep]
            \item \textbf{Trigger:} What specific code structure, assumption, constraint pattern, or input pattern caused the issue?
            \item \textbf{Invalid Action:} What did the student fail to check, or do incorrectly?
            \item \textbf{Consequence:} What is the algorithmic or implementation result? (e.g., "Out-of-bounds access," "State transition becomes invalid").
        \end{itemize}

        \textbf{Example:}
        \begin{itemize}[leftmargin=*, nosep]
            \item \texttt{"Using binary search on the answer -> Potential Risk -> Trigger: binary search requires a monotonic feasibility predicate; Invalid Action: Failed to verify monotonicity of the check function; Consequence: Search logic is not justified and may return an incorrect result."}
        \end{itemize}

        \vspace{0.5em}\noindent\textbf{\#\# Output Requirements}
        \begin{itemize}[leftmargin=*, nosep]
            \item \textbf{Output ONLY a raw JSON object.}
            \item No Markdown formatting (no \texttt{```json ... ```}), no explanations, no chat.
        \end{itemize}

        \textbf{JSON Structure:}
\begin{verbatim}
{
    "verified_positive_experiences": [
        "<Complete Statement with Conditions>. (Source: <Derivation/Method>)",
        "..."
    ],
    "critical_negative_experiences": [
        "<Context/Step> -> <Type: Dead End/Fatal Flaw/Potential Risk> -> <Explanation>",
        "..."
    ]
}
\end{verbatim}

        \vspace{0.5em}\noindent\textbf{\#\# Input Data}

        \textbf{Question:} \\
        \textcolor{red}{\{\{ question \}\}}

        \textbf{Student's Attempt:} \\
        \textcolor{red}{\{\{ attempt \}\}}

    \end{tcolorbox}
    \caption{Experience distillation prompt for livecodebench (Part 2 of 2).}
    \label{fig:prompt_code_reflection_2}
\end{figure*}

% ==========================================
% Part 1: Intro & Guidelines 1-2
% ==========================================
\begin{figure*}[ht]
    \centering
    \small
    \begin{tcolorbox}[
        colback=gray!5,
        colframe=black,
        boxrule=0.8pt,
        arc=2pt,
        left=8pt, right=8pt, top=8pt, bottom=8pt,
        title=\textbf{Experience-Guided Problem-Solving Prompt for LiveCodeBench (Part 1 of 2)}
    ]
        \ttfamily
        \scriptsize
        \setlength{\parskip}{0.5em}
        
        \textbf{[System Message]} \\
        You are an advanced competitive programming and code generation solver. \\
        You have access to a "Reference Report" from previous attempts, containing "Positive Experiences" (Intermediate Results) and "Critical Negative Experiences" (Past Errors).

        \vspace{0.5em}\noindent\textbf{Core Directive: First-Principles Reasoning with Strategic Consultation} \\
        Your primary goal is to derive a correct solution starting from the fundamental constraints, input/output requirements, and algorithmic structure of the problem. Use the provided context strictly as a \textbf{navigational aid}, not as a definitive source of truth.

        \vspace{0.5em}\noindent\textbf{Operational Guidelines:}

        \textbf{1. Positive Experience Handling (Structure > Surface Form):}
        \begin{itemize}[leftmargin=*, nosep]
            \item \textbf{Rule:} Treat Positive Experiences as \textit{structural hypotheses}, not proven facts.
            \item \textbf{Priority:} Prioritize positive experiences that offer \textbf{algorithmic insights}, \textbf{state definitions}, \textbf{invariants}, \textbf{constraint-driven simplifications}, or \textbf{correct problem reformulations}.
            \item \textbf{Skepticism:} Be extremely skeptical of \textbf{raw example-specific conclusions}, \textbf{unverified implementation claims}, or \textbf{assertions that a particular code pattern is correct}. NEVER trust a proposed final approach unless you have independently justified why it satisfies the problem constraints.
            \item \textbf{Action:} If a positive experience offers a shortcut, verify its \textit{premise} immediately. If the premise holds and aligns with your reasoning, use it to accelerate. If it contradicts your understanding of the problem, constraints, or algorithmic logic, \textbf{discard it immediately}.
        \end{itemize}

        \textbf{2. Negative Experience Discrimination (Mechanism > Appearance):}
        \begin{itemize}[leftmargin=*, nosep]
            \item \textbf{Insight:} Negative Experiences often describe a \textit{misread constraint}, a \textit{broken invariant}, an \textit{invalid transition}, or an \textit{unsafe implementation assumption}. Do not confuse a "nontrivial but valid algorithmic step" with a "logic error".
            \item \textbf{Action:} When your reasoning resembles a Negative Experience:
            \begin{itemize}[leftmargin=*, nosep]
                \item \textit{Check:} Are you actually committing the specific logical or implementation error described (e.g., invalid greedy reasoning, incorrect indexing, missing edge-case handling, misuse of complexity assumptions)?
                \item \textit{Or:} Are you performing a valid step that merely \textit{looks} similar to the negative experience?
            \end{itemize}
            \item \textbf{Protocol:} If it is a genuine flaw, \textbf{ABORT} the branch. If it is a valid operation, \textbf{PROCEED} but explicitly verify why your approach is sound.
        \end{itemize}

    \end{tcolorbox}
    \caption{Experience-guided problem-solving prompt for LiveCodeBench (Part 1 of 2).}
    \label{fig:prompt_livecodebench_with_reflection_1}
\end{figure*}

% ==========================================
% Part 2: Guidelines 3-4, Context & User Message
% ==========================================
\begin{figure*}[ht]
    \centering
    \small
    \begin{tcolorbox}[
        colback=gray!5,
        colframe=black,
        boxrule=0.8pt,
        arc=2pt,
        left=8pt, right=8pt, top=8pt, bottom=8pt,
        title=\textbf{Experience-Guided Problem-Solving Prompt for LiveCodeBench (Part 2 of 2)}
    ]
        \ttfamily
        \scriptsize
        \setlength{\parskip}{0.5em}

        \textbf{3. Conflict Resolution \& Robustness:}
        \begin{itemize}[leftmargin=*, nosep]
            \item \textbf{Scenario:} You encounter a contradiction (e.g., two incompatible state definitions, a transition that breaks an invariant, or an approach whose complexity conflicts with the stated constraints).
            \item \textbf{Constraint:} Do NOT simply choose the "simpler-looking" or "more familiar" approach.
            \item \textbf{Action:} A contradiction usually means a \textbf{foundational assumption} (e.g., state meaning, greedy choice, graph/model construction, or boundary handling) is incorrect. \textbf{Backtrack to the very beginning}, re-read the problem statement, and challenge your initial setup.
        \end{itemize}

        \textbf{4. Simplification over Enumeration:}
        \begin{itemize}[leftmargin=*, nosep]
            \item \textbf{Guideline:} Before diving into fragile case-by-case implementation or excessive branching, pause and ask: "Is there a cleaner model, invariant, reduction, monotonicity property, or data-structure view that simplifies this?"
            \item \textbf{Goal:} Use the Reference Report to find these \textit{structural simplifications} rather than using it to complicate the problem with unnecessary cases or ad hoc fixes.
        \end{itemize}

        \vspace{0.5em}\noindent\textbf{Context from Previous Attempts:} \\
        \textcolor{red}{\{\{ experience\_context \}\}}

        \vspace{0.5em}\noindent\textbf{Instruction:} \\
        Reason step by step. Start by establishing the core constraints, the required algorithmic structure, and the implementation obligations. Consult the Reference Report critically: verify negative experiences before pruning, and use positive experiences only if they safely accelerate your work. Then produce a complete Python solution that matches the specification. Enclose your final code within triple backticks.

        \vspace{1em}\noindent\textbf{[User Message Template]}

        \#\#\# Question: \\
        \textcolor{red}{\{\{ question\_content \}\}}

        \vspace{0.5em}
        \textit{--- If starter code is provided: ---} \\
        \#\#\# Format: You will use the following starter code to write the solution to the problem and enclose your code within delimiters. \\
        \pyfence\\
        \textcolor{red}{\{\{ starter\_code \}\}}\\
        \codefence

        \vspace{0.5em}
        \textit{--- If no starter code is provided: ---} \\
        \#\#\# Format: Read the inputs from stdin solve the problem and write the answer to stdout (do not directly test on the sample inputs). Enclose your code within delimiters as follows. Ensure that when the python program runs, it reads the inputs, runs the algorithm and writes output to STDOUT. \\
        \pyfence\\
        \# YOUR CODE HERE\\
        \codefence

        \vspace{0.5em}
        \#\#\# Answer: (use the provided format with backticks)

    \end{tcolorbox}
    \caption{Experience-guided problem-solving prompt for LiveCodeBench (Part 2 of 2).}
    \label{fig:prompt_livecodebench_with_reflection_2}
\end{figure*}

\begin{figure*}[ht]
    \centering
    \begin{tcolorbox}[
        colback=gray!5,
        colframe=black,
        boxrule=0.8pt,
        arc=2pt,
        left=10pt, right=10pt, top=10pt, bottom=10pt,
        title=\textbf{PaCoRe Input Serialization Template for LiveCodeBench}
    ]
        \ttfamily
        \scriptsize
        \setlength{\parskip}{0.5em}
        
        You are given a problem and a list of reference responses. Your job is to analyze these references and provide your own response.
        
        \vspace{1em}
        
        Original Problem: \\
        \textcolor{red}{\{\{ original\_prompt \}\}}
        
        \vspace{1em}
        
        Reference Responses: \\
        \textcolor{red}{\{\% for response in ref\_responses \%\}} \\
        Reference \textcolor{red}{\{\{ loop.index \}\}}: \\
        \textcolor{red}{\{\{ response \}\}} \\
        \textcolor{red}{\{\% endfor \%\}}
        
        \vspace{1em}
        
        Now, based on the original problem and reference responses above, please provide your own comprehensive solution.
        
    \end{tcolorbox}
    \caption{PaCoRe input serialization template for LiveCodeBench}
    \label{fig:prompt_controller}
\end{figure*}

%%%%%%%%%
%%%%%%%%%

\begin{figure*}[ht]
    \centering
    \small
    \begin{tcolorbox}[
        colback=gray!5,
        colframe=black,
        boxrule=0.8pt,
        arc=2pt,
        left=8pt, right=8pt, top=8pt, bottom=8pt,
        title=\textbf{System Prompt for TravelPlanner.}
    ]
        \ttfamily
        \scriptsize
        \setlength{\parskip}{0.5em}
        
        You are a proficient planner. Based on the provided information and query, please give me a detailed plan, including specifics such as flight numbers (e.g., F0123456), restaurant names, and accommodation names. Note that all the information in your plan should be derived from the provided data. You must adhere to the format given in the example. Additionally, all details should align with commonsense. The symbol '-' indicates that information is unnecessary. For example, in the provided sample, you do not need to plan after returning to the departure city. When you travel to two cities in one day, you should note it in the 'Current City' section as in the example (i.e., from A to B).

        ***** Example ***** \\
        \textbf{Query:} Could you create a travel plan for 7 people from Ithaca to Charlotte spanning 3 days, from March 8th to March 14th, 2022, with a budget of \$30,200?

        \textbf{Travel Plan:} \\
        Day 1: \\
        Current City: from Ithaca to Charlotte \\
        Transportation: Flight Number: F3633413, from Ithaca to Charlotte, Departure Time: 05:38, Arrival Time: 07:46 \\
        Breakfast: Nagaland's Kitchen, Charlotte \\
        Attraction: The Charlotte Museum of History, Charlotte \\
        Lunch: Cafe Maple Street, Charlotte \\
        Dinner: Bombay Vada Pav, Charlotte \\
        Accommodation: Affordable Spacious Refurbished Room in Bushwick!, Charlotte

        Day 2: \\
        Current City: Charlotte \\
        Transportation: - \\
        Breakfast: Olive Tree Cafe, Charlotte \\
        Attraction: The Mint Museum, Charlotte;Romare Bearden Park, Charlotte. \\
        Lunch: Birbal Ji Dhaba, Charlotte \\
        Dinner: Pind Balluchi, Charlotte \\
        Accommodation: Affordable Spacious Refurbished Room in Bushwick!, Charlotte

        Day 3: \\
        Current City: from Charlotte to Ithaca \\
        Transportation: Flight Number: F3786167, from Charlotte to Ithaca, Departure Time: 21:42, Arrival Time: 23:26 \\
        Breakfast: Subway, Charlotte \\
        Attraction: Books Monument, Charlotte. \\
        Lunch: Olive Tree Cafe, Charlotte \\
        Dinner: Kylin Skybar, Charlotte \\
        Accommodation: -

        ***** Example Ends *****

        You MUST output the travel plan as a JSON array. Each element is a dict with keys: \\
        "days", "current\_city", "transportation", "breakfast", "attraction", "lunch", "dinner", "accommodation". \\
        Use "-" for fields that are not applicable. Attractions should be separated by semicolons with a trailing semicolon. \\
        Output ONLY the JSON array, no other text.

    \end{tcolorbox}
    \caption{System prompt for TravelPlanner.}
    \label{fig:prompt_travelplanner}
\end{figure*}

%%%%%%%%%
%%%%%%%%%

% ==========================================
% Part 1: Intro & Layer 1
% ==========================================
\begin{figure*}[ht]
    \centering
    \small
    \begin{tcolorbox}[
        colback=gray!5,
        colframe=black,
        boxrule=0.8pt,
        arc=2pt,
        left=8pt, right=8pt, top=8pt, bottom=8pt,
        title=\textbf{Experience Distillation Prompt for TravelPlanner (Part 1 of 4)}
    ]
        \ttfamily
        \scriptsize
        \setlength{\parskip}{0.5em}
        
        You are an expert travel planning analyst. You will be given a travel query, reference data, a generated plan, and evaluation results. Your job is to analyze every constraint violation and extract \textbf{generalizable, actionable rules} that a planner can follow on \textbf{future, unseen queries} — not just to fix this specific plan.

        \vspace{0.5em}\noindent\textbf{TravelPlanner Evaluation Criteria}

        A travel plan is evaluated on \textbf{two layers}. Layer 1 (Commonsense) must ALL pass before Layer 2 (Hard) is even checked.

        \vspace{0.5em}\noindent\textbf{Layer 1 — Commonsense Constraints (8 checks)}

        \textbf{1. is\_reasonable\_visiting\_city — Route \& City Validation} \\
        The evaluator builds a \texttt{city\_list} by scanning each day's \texttt{current\_city}:
        \begin{itemize}[leftmargin=*, nosep]
            \item If \texttt{current\_city} contains \texttt{"from"}, it extracts \textit{both} cities via regex \texttt{from A to B} and appends \texttt{[A, B]} to city\_list.
            \item Otherwise it appends the single city name.
        \end{itemize}
        Then it checks: \\
        (a) \textbf{Closed loop}: \texttt{city\_list[0] == city\_list[-1]} (must both be the origin city). \\
        (b) \textbf{Valid city sequence}: Every intermediate city (not first/last) must appear \textbf{consecutively at least 2 times} in city\_list. A city appearing only once in the middle is invalid. \\
        (c) \textbf{State match}: For trips > 3 days, every non-origin city must belong to the required destination state. \\
        (d) \textbf{Day-1 origin}: If Day 1 uses \texttt{"from A to B"}, then A must equal the origin city.

        \textbf{Implication for \texttt{current\_city} format}: On travel days (Day 1 and return day), you MUST use \texttt{"from OriginCity to DestCity"} so that both cities enter the city\_list. On non-travel days, use a plain city name. This is the ONLY way to form a closed loop while satisfying the consecutive-appearance rule.

        \textbf{2. is\_valid\_restaurants — No Restaurant Repetition} \\
        Every restaurant name (across ALL meals on ALL days) must be unique. The evaluator collects breakfast/lunch/dinner values across all days into a single list and fails if any name appears twice.

        \textbf{3. is\_valid\_attractions — No Attraction Repetition} \\
        Every attraction (split by \texttt{;}) across all days must be unique. Trailing semicolons are required: the evaluator splits by \texttt{;} and discards the last empty element.

        \textbf{4. is\_valid\_accommodation — Minimum Nights \& Presence}
        \begin{itemize}[leftmargin=*, nosep]
            \item Accommodation must be present (non-empty, non-\texttt{"-"}) for every day EXCEPT the return day.
            \item The evaluator groups \textbf{consecutive identical} accommodation entries and checks: \texttt{consecutive\_count >= minimum\_nights} from the database. If you use the same accommodation for fewer consecutive days than its minimum\_nights, this check FAILS.
        \end{itemize}

        \textbf{5. is\_valid\_transportation — Day-1 Required \& No Conflicts}
        \begin{itemize}[leftmargin=*, nosep]
            \item Day 1 MUST have transportation (non-empty, non-\texttt{"-"}).
            \item No mixing of conflicting modes in the entire trip: Self-driving + Flight is forbidden; Taxi + Self-driving is forbidden. Taxi + Flight is allowed.
        \end{itemize}

        \textbf{6. is\_valid\_information\_in\_current\_city — City-Tag Matching} \\
        The evaluator extracts the city(ies) from \texttt{current\_city} (either one or two if \texttt{"from A to B"}), then checks that each restaurant/attraction/accommodation/transportation string on that day \textbf{contains} at least one of those city names as a substring. Format must be \texttt{"ItemName, CityName"} (comma-separated) so the city name appears in the string.

        \textbf{7. is\_valid\_information\_in\_sandbox — Database Existence} \\
        Every item mentioned in the plan must actually exist in the reference database for the specified city:
        \begin{itemize}[leftmargin=*, nosep]
            \item Restaurants: name must match (substring) in the restaurants DB for that city
            \item Attractions: name must match in the attractions DB for that city
            \item Accommodations: name must match in the accommodations DB for that city
            \item Flights: flight number + origin + destination must all match
            \item Taxi/Self-driving: the city pair must have a valid route in the distance matrix
        \end{itemize}

        \textbf{8. is\_not\_absent — Completeness} \\
        Every day must have ALL 7 fields non-empty: \texttt{current\_city}, \texttt{transportation}, \texttt{breakfast}, \texttt{attraction}, \texttt{lunch}, \texttt{dinner}, \texttt{accommodation}. The ONLY exception: the return day's \texttt{accommodation} may be \texttt{"-"}. All other fields on all days (including the return day's breakfast/lunch/dinner/attraction) should be filled with valid data, NOT \texttt{"-"}.

    \end{tcolorbox}
    \caption{Experience distillation prompt for TravelPlanner (Part 1 of 4).}
    \label{fig:prompt_travelplanner_reflection_1}
\end{figure*}

% ==========================================
% Part 2: Layer 2
% ==========================================
\begin{figure*}[ht]
    \centering
    \small
    \begin{tcolorbox}[
        colback=gray!5,
        colframe=black,
        boxrule=0.8pt,
        arc=2pt,
        left=8pt, right=8pt, top=8pt, bottom=8pt,
        title=\textbf{Experience Distillation Prompt for TravelPlanner (Part 2 of 4)}
    ]
        \ttfamily
        \scriptsize
        \setlength{\parskip}{0.5em}
        
        \vspace{0.5em}\noindent\textbf{Layer 2 — Hard Constraints (5 checks, only evaluated if \texttt{is\_not\_absent} AND \texttt{is\_valid\_information\_in\_sandbox} both pass)}

        \textbf{1. valid\_cost — Budget Compliance} \\
        Total trip cost must $\le$ budget. The evaluator calculates cost as:
\begin{verbatim}
For each day:
  + flight_price × N_people                              (if flight)
  + taxi_cost × ceil(N_people / 4)                       (if taxi; capacity = 4)
  + self_driving_cost × ceil(N_people / 5)               (if self-driving; capacity = 5)
  + breakfast_avg_cost × N_people                         (if breakfast exists)
  + lunch_avg_cost × N_people                             (if lunch exists)
  + dinner_avg_cost × N_people                            (if dinner exists)
  + accommodation_price × ceil(N_people / max_occupancy)  (if accommodation exists)
\end{verbatim}
        \textbf{CRITICAL}: Taxi capacity is 4, self-driving capacity is 5. These are hardcoded in the evaluator. The \texttt{cost} value from the distance matrix is per-vehicle, and the evaluator multiplies by \texttt{ceil(N\_people / capacity)}.

        \textbf{2. valid\_room\_rule — House Rule Compliance} \\
        If the query specifies a house rule constraint (e.g., \texttt{"children under 10"}), every accommodation's \texttt{house\_rules} field must NOT contain \texttt{"No children under 10"}. The mapping:
        \begin{itemize}[leftmargin=*, nosep]
            \item constraint \texttt{"smoking"} $\rightarrow$ must not have \texttt{"No smoking"}
            \item constraint \texttt{"pets"} $\rightarrow$ must not have \texttt{"No pets"}
            \item constraint \texttt{"parties"} $\rightarrow$ must not have \texttt{"No parties"}
            \item constraint \texttt{"children under 10"} $\rightarrow$ must not have \texttt{"No children under 10"}
            \item constraint \texttt{"visitors"} $\rightarrow$ must not have \texttt{"No visitors"}
        \end{itemize}

        \textbf{3. valid\_cuisine — Cuisine Coverage} \\
        If cuisine types are specified, at least one meal (in a \textbf{destination city}, not the origin city) must serve each required cuisine. The evaluator checks the \texttt{Cuisines} field in the restaurant DB.

        \textbf{4. valid\_transportation — Mode Restriction} \\
        If \texttt{"no flight"} $\rightarrow$ no flights allowed anywhere. If \texttt{"no self-driving"} $\rightarrow$ no self-driving allowed anywhere.

        \textbf{5. valid\_room\_type — Room Type Matching} \\
        If room type is specified, ALL accommodations must match:
        \begin{itemize}[leftmargin=*, nosep]
            \item \texttt{"entire room"} $\rightarrow$ room type must be \texttt{"Entire home/apt"} in the DB
            \item \texttt{"private room"} $\rightarrow$ room type must be \texttt{"Private room"}
            \item \texttt{"shared room"} $\rightarrow$ room type must be \texttt{"Shared room"}
            \item \texttt{"not shared room"} $\rightarrow$ room type must NOT be \texttt{"Shared room"}
        \end{itemize}

    \end{tcolorbox}
    \caption{Experience distillation prompt for TravelPlanner (Part 2 of 4).}
    \label{fig:prompt_travelplanner_reflection_2}
\end{figure*}

% ==========================================
% Part 3: Task & Format
% ==========================================
% ==========================================
% Part 3: Task & Format
% ==========================================
\begin{figure*}[ht]
    \centering
    \small
    \begin{tcolorbox}[
        colback=gray!5,
        colframe=black,
        boxrule=0.8pt,
        arc=2pt,
        left=8pt, right=8pt, top=8pt, bottom=8pt,
        title=\textbf{Experience Distillation Prompt for TravelPlanner (Part 3 of 4)}
    ]
        \ttfamily
        \scriptsize
        \setlength{\parskip}{0.5em}
        
        \vspace{0.5em}\noindent\textbf{Your Task}

        Analyze the evaluation failures and produce \textbf{two types of reusable knowledge}:

        \textbf{1. Positive Experiences — Actionable rules for future planning} \\
        Each positive experience should be a \textbf{general rule} that applies beyond this specific query. Focus on:
        \begin{itemize}[leftmargin=*, nosep]
            \item \textbf{Format rules}: Exact \texttt{current\_city} format, item naming conventions (\texttt{"Name, City"}), attraction separator (\texttt{;})
            \item \textbf{Cost formulas}: Precise per-category cost calculation with correct capacity divisors (taxi=4, self-driving=5)
            \item \textbf{Data cross-referencing}: How to verify items exist in the reference data for the correct city
            \item \textbf{Structural rules}: Closed-loop routing, consecutive city-stay requirements, return-day field handling
            \item \textbf{Constraint satisfaction}: Room rules, cuisine coverage, min-nights, budget verification
        \end{itemize}

        \textbf{2. Negative Experiences — Specific mistakes to avoid} \\
        Each negative experience should describe a \textbf{concrete error pattern} (not just "be careful"), e.g.:
        \begin{itemize}[leftmargin=*, nosep]
            \item "Writing \texttt{current\_city} as a plain city name on travel days instead of \texttt{from A to B}, which breaks the closed-loop check"
            \item "Assuming taxi can carry 6 people (capacity is 4), causing cost underestimation by a factor of 2"
            \item "Setting meals to \texttt{-} on the return day, which triggers is\_not\_absent failure"
        \end{itemize}

        \vspace{0.5em}\noindent\textbf{Output Format}

\begin{verbatim}
{
    "positive experiences": [
        {
            "statement": "A generalizable, actionable rule (imperative mood)",
            "category": "city_route|restaurant|attraction|...|data_validity",
            "severity": "critical|important|minor",
            "eval_check": "the specific evaluation check this addresses"
        }
    ],
    "negative experiences": [
        {
            "statement": "A concrete error pattern to avoid",
            "category": "city_route|restaurant|attraction|...|data_validity",
            "eval_check": "the specific evaluation check this addresses"
        }
    ],
    "summary": "Brief overall diagnosis: which checks failed, root causes..."
}
\end{verbatim}

        \vspace{0.5em}\noindent\textbf{Guidelines}
        \begin{itemize}[leftmargin=*, nosep]
            \item Generate 3--8 positive experiences and 2--5 negative experiences, focusing on the FAILED checks.
            \item Positive Experiences should be \textbf{imperative} ("Always verify...", "Calculate...", "Ensure...") and \textbf{self-contained} — a planner reading only the positive experience should know exactly what to do.
            \item Reference concrete data patterns from this failure (e.g., actual budget numbers, specific constraint values) as \textbf{examples}, but phrase the rule generally.
            \item Do NOT generate vague rules like "be careful with budget" — instead: "Calculate taxi cost as: per\_vehicle\_cost $\times$ $\lceil$N\_people / 4$\rceil$ (taxi capacity is always 4), then sum across all travel days."
            \item If a check PASSED, you do not need to generate rules for it unless you see a near-miss worth noting.
        \end{itemize}

        Output ONLY the JSON object, no other text.

    \end{tcolorbox}
    \caption{Experience distillation prompt for TravelPlanner (Part 3 of 4).}
    \label{fig:prompt_travelplanner_reflection_3}
\end{figure*}

% ==========================================
% Part 4: Dynamic Input Data
% ==========================================
\begin{figure*}[ht]
    \centering
    \small
    \begin{tcolorbox}[
        colback=gray!5,
        colframe=black,
        boxrule=0.8pt,
        arc=2pt,
        left=8pt, right=8pt, top=8pt, bottom=8pt,
        title=\textbf{Experience Distillation Prompt for TravelPlanner (Part 4 of 4)}
    ]
        \ttfamily
        \scriptsize
        \setlength{\parskip}{0.5em}
        
        \textbf{\#\# Query} \\
        \textcolor{red}{\{step1\_result['query']\}}

        \textbf{\#\# Query Details}
        \begin{itemize}[leftmargin=*, nosep]
            \item Origin city: \textcolor{red}{\{step1\_result['query\_data']['org']\}}
            \item Destination state: \textcolor{red}{\{step1\_result['query\_data']['dest']\}}
            \item Number of days: \textcolor{red}{\{step1\_result['query\_data']['days']\}}
            \item Number of people: \textcolor{red}{\{step1\_result['query\_data']['people\_number']\}}
            \item Budget: \$\textcolor{red}{\{step1\_result['query\_data']['budget']\}}
            \item Number of cities to visit: \textcolor{red}{\{step1\_result['query\_data']['visiting\_city\_number']\}}
            \item Date range: \textcolor{red}{\{step1\_result['query\_data']['date']\}}
            \item Local constraints: \\
            \textcolor{red}{\{constraint\_text\}}
        \end{itemize}

        \textbf{\#\# Reference Information (provided to the planner)} \\
        \textcolor{red}{\{ref\_text\}}

        \textbf{\#\# Generated Plan} \\
        \textcolor{red}{\{plan\_text\}}

        Based on the evaluation results above, analyze each FAILED check, identify the root cause in the generated plan, and extract generalizable positive experiences and negative experiences. Output ONLY the JSON object.

    \end{tcolorbox}
    \caption{Experience distillation prompt for TravelPlanner (Part 4 of 4).}
    \label{fig:prompt_travelplanner_reflection_4}
\end{figure*}

%%%%%%%%%
%%%%%%%%%

% ==========================================
% Part 1: Intro & Example
% ==========================================
\begin{figure*}[ht]
    \centering
    \small
    \begin{tcolorbox}[
        colback=gray!5,
        colframe=black,
        boxrule=0.8pt,
        arc=2pt,
        left=8pt, right=8pt, top=8pt, bottom=8pt,
        title=\textbf{Experience-Guided Problem-Solving Prompt for TravelPlanner (Part 1 of 2)}
    ]
        \ttfamily
        \scriptsize
        \setlength{\parskip}{0.5em}
        
        You are a proficient planner. Based on the provided information, query, and past experiences/lessons learned, please give me a detailed plan, including specifics such as flight numbers (e.g., F0123456), restaurant names, and accommodation names. Note that all the information in your plan should be derived from the provided data. You must adhere to the format given in the example. Additionally, all details should align with commonsense. The symbol '-' indicates that information is unnecessary. For example, in the provided sample, you do not need to plan after returning to the departure city. When you travel to two cities in one day, you should note it in the 'Current City' section as in the example (i.e., from A to B).

        ***** Example ***** \\
        \textbf{Query:} Could you create a travel plan for 7 people from Ithaca to Charlotte spanning 3 days, from March 8th to March 14th, 2022, with a budget of \$30,200?

        \textbf{Travel Plan:} \\
        Day 1: \\
        Current City: from Ithaca to Charlotte \\
        Transportation: Flight Number: F3633413, from Ithaca to Charlotte, Departure Time: 05:38, Arrival Time: 07:46 \\
        Breakfast: Nagaland's Kitchen, Charlotte \\
        Attraction: The Charlotte Museum of History, Charlotte; \\
        Lunch: Cafe Maple Street, Charlotte \\
        Dinner: Bombay Vada Pav, Charlotte \\
        Accommodation: Affordable Spacious Refurbished Room in Bushwick!, Charlotte

        Day 2: \\
        Current City: Charlotte \\
        Transportation: - \\
        Breakfast: Olive Tree Cafe, Charlotte \\
        Attraction: The Mint Museum, Charlotte;Romare Bearden Park, Charlotte; \\
        Lunch: Birbal Ji Dhaba, Charlotte \\
        Dinner: Pind Balluchi, Charlotte \\
        Accommodation: Affordable Spacious Refurbished Room in Bushwick!, Charlotte

        Day 3: \\
        Current City: from Charlotte to Ithaca \\
        Transportation: Flight Number: F3786167, from Charlotte to Ithaca, Departure Time: 21:42, Arrival Time: 23:26 \\
        Breakfast: Subway, Charlotte \\
        Attraction: Books Monument, Charlotte; \\
        Lunch: Taste of Beijing, Charlotte \\
        Dinner: Kylin Skybar, Charlotte \\
        Accommodation: -

        ***** Example Ends *****

        You MUST output the travel plan as a JSON array. Each element is a dict with keys: \\
        "days", "current\_city", "transportation", "breakfast", "attraction", "lunch", "dinner", "accommodation". \\
        Use "-" for fields that are not applicable. Attractions should be separated by semicolons with a trailing semicolon. \\
        Output ONLY the JSON array, no other text.

    \end{tcolorbox}
    \caption{Experience-guided problem-solving prompt for TravelPlanner (Part 1 of 2).}
    \label{fig:prompt_travelplanner_with_reflection_1}
\end{figure*}

% ==========================================
% Part 2: Critical Rules & Reflection Context
% ==========================================
\begin{figure*}[ht]
    \centering
    \small
    \begin{tcolorbox}[
        colback=gray!5,
        colframe=black,
        boxrule=0.8pt,
        arc=2pt,
        left=8pt, right=8pt, top=8pt, bottom=8pt,
        title=\textbf{Experience-Guided Problem-Solving Prompt for TravelPlanner (Part 2 of 2)}
    ]
        \ttfamily
        \scriptsize
        \setlength{\parskip}{0.5em}
        
        \vspace{0.5em}\noindent\textbf{\#\# CRITICAL RULES (violations cause automatic failure)}
        \begin{enumerate}[leftmargin=*, nosep]
            \item \textbf{current\_city format}: On travel days (departure and return), use \texttt{"from CityA to CityB"}. On non-travel days, use a plain city name. The trip MUST form a closed loop: the first city in Day 1 must equal the last city on the return day (both = origin city). Every intermediate city must appear on at least 2 consecutive days.
            \item \textbf{No restaurant repetition}: Every restaurant name must appear EXACTLY ONCE across all meals on all days. Never reuse a restaurant even across different meal types.
            \item \textbf{No attraction repetition}: Every attraction name must appear exactly once across all days. Use semicolons to separate multiple attractions, with a trailing semicolon.
            \item \textbf{Return day completeness}: On the return day, set accommodation to \texttt{"-"}, but you MUST still fill in breakfast, lunch, dinner, attraction, and transportation with valid data from the last destination city — do NOT set them to \texttt{"-"}.
            \item \textbf{Item format}: Always format restaurants, attractions, and accommodations as \texttt{"Name, City"} (name followed by comma and city name). Every item must exist in the provided reference data for that city.
            \item \textbf{Accommodation min-nights}: If you use the same accommodation for N consecutive days, the accommodation's \texttt{minimum nights} in the reference data must be $\le$ N.
            \item \textbf{Cost calculation}: Budget check uses these exact formulas:
            \begin{itemize}[leftmargin=*, nosep]
                \item Flights: price $\times$ N\_people
                \item Taxi: cost\_per\_vehicle $\times$ \texttt{ceil(N\_people / 4)} (taxi capacity = 4)
                \item Self-driving: cost\_per\_vehicle $\times$ \texttt{ceil(N\_people / 5)} (capacity = 5)
                \item Meals: average\_cost $\times$ N\_people
                \item Accommodation: price\_per\_night $\times$ \texttt{ceil(N\_people / max\_occupancy)}
            \end{itemize}
            Verify total $\le$ budget before finalizing.
            \item \textbf{Room constraints}: If house rule constraint is given (e.g., "children under 10"), the accommodation must NOT have "No children under 10" in its house\_rules. If room type is "entire room", the accommodation's room type must be "Entire home/apt" in the DB.
        \end{enumerate}

        \vspace{0.5em}\noindent\textbf{\#\# Past Experiences / Lessons Learned}

        Pay special attention to the following experiences from past attempts on THIS specific query:

        \textcolor{red}{\{\{ experiences \}\}}

        \vspace{0.5em}\noindent\textbf{Given Information:} \\
        \textcolor{red}{\{\{ reference\_info \}\}}

        \vspace{0.5em}\noindent\textbf{Query:} \\
        \textcolor{red}{\{\{ query \}\}}

    \end{tcolorbox}
    \caption{Experience-guided problem-solving prompt for TravelPlanner (Part 2 of 2).}
    \label{fig:prompt_travelplanner_with_reflection_2}
\end{figure*}

% ==========================================
% Part 1: Domain Knowledge & Instructions
% ==========================================
\begin{figure*}[ht]
    \centering
    \small
    \begin{tcolorbox}[
        colback=gray!5,
        colframe=black,
        boxrule=0.8pt,
        arc=2pt,
        left=8pt, right=8pt, top=8pt, bottom=8pt,
        title=\textbf{PaCoRe Input Serialization Template for TravelPlanner (Part 1 of 2)}
    ]
        \ttfamily
        \scriptsize
        \setlength{\parskip}{0.5em}
        
        You are a proficient travel planner.

        \vspace{0.5em}\noindent\textbf{Domain Knowledge:} \\
        When creating a travel plan, you must satisfy ALL of the following constraints:
        \begin{itemize}[leftmargin=*, nosep]
            \item \textbf{Budget}: The total cost (flights $\times$ people + meals $\times$ people + accommodation $\times$ $\lceil$people/rooms$\rceil$ + ground transport) must NOT exceed the given budget.
            \item \textbf{Room Rule}: If "not shared room", each person must have their own room.
            \item \textbf{Cuisine}: If specific cuisine types are requested (e.g., Chinese, Italian), ALL meals (breakfast, lunch, dinner) must match.
            \item \textbf{Room Type}: If a room type is requested (e.g., entire room, private room), ALL accommodations must match.
            \item \textbf{Transportation}: If specific transportation is requested (e.g., no flight, self-driving only), ALL transportation must comply.
            \item \textbf{City Validity}: All restaurants, attractions, and accommodations must actually exist in the city being visited.
            \item \textbf{Completeness}: Every field must be filled for each day. Use "-" only where explicitly appropriate (e.g., accommodation on the last day when returning home).
            \item \textbf{Visiting Cities}: The number of cities visited must match the query requirement.
            \item \textbf{Valid Info}: All flight numbers, restaurant names, accommodation names must come from the provided reference data.
        \end{itemize}

        \vspace{0.5em}\noindent\textbf{Given Information:} \\
        \textcolor{red}{\{\{ reference\_info \}\}}

        \vspace{0.5em}\noindent\textbf{Query:} \\
        \textcolor{red}{\{\{ query \}\}}

        \vspace{0.5em}\noindent\textbf{Reference Plans from Previous Attempts:} \\
        \textcolor{blue}{\{\% for plan\_text in ref\_plans \%\}} \\
        --- Reference \textcolor{red}{\{\{ loop.index \}\}} --- \\
        \textcolor{red}{\{\{ plan\_text \}\}} \\
        \textcolor{blue}{\{\% endfor \%\}}

        \vspace{0.5em}\noindent\textbf{Instructions:} \\
        You are given the query, reference information, and \textcolor{red}{\{\{ ref\_plans|length \}\}} reference plan(s) from previous attempts. These references may contain errors (wrong flights, budget violations, invalid restaurants, missing days, constraint violations, etc.).

        Your task:
        \begin{enumerate}[leftmargin=*, nosep]
            \item Analyze the reference plans critically — identify which parts are correct and which have errors.
            \item Cross-check every item (flights, restaurants, accommodations, attractions) against the provided reference data.
            \item Verify budget constraints: calculate total cost and ensure it does not exceed the budget.
            \item Verify all other constraints (room rule, cuisine, transportation, city validity, etc.).
            \item Provide your own correct and complete plan.
        \end{enumerate}

        Do NOT blindly copy reference plans. Verify each item independently against the provided data.

    \end{tcolorbox}
    \caption{PaCoRe  input serialization template for TravelPlanner (Part 1 of 2).}
    \label{fig:prompt_travelplanner_controller_1}
\end{figure*}

% ==========================================
% Part 2: Example & Output Format
% ==========================================
\begin{figure*}[ht]
    \centering
    \small
    \begin{tcolorbox}[
        colback=gray!5,
        colframe=black,
        boxrule=0.8pt,
        arc=2pt,
        left=8pt, right=8pt, top=8pt, bottom=8pt,
        title=\textbf{PaCoRe Input Serialization Template for TravelPlanner (Part 2 of 2)}
    ]
        \ttfamily
        \scriptsize
        \setlength{\parskip}{0.5em}

        ***** Example ***** \\
        \textbf{Query:} Could you create a travel plan for 7 people from Ithaca to Charlotte spanning 3 days, from March 8th to March 14th, 2022, with a budget of \$30,200?

        \textbf{Travel Plan:} \\
        Day 1: \\
        Current City: from Ithaca to Charlotte \\
        Transportation: Flight Number: F3633413, from Ithaca to Charlotte, Departure Time: 05:38, Arrival Time: 07:46 \\
        Breakfast: Nagaland's Kitchen, Charlotte \\
        Attraction: The Charlotte Museum of History, Charlotte; \\
        Lunch: Cafe Maple Street, Charlotte \\
        Dinner: Bombay Vada Pav, Charlotte \\
        Accommodation: Affordable Spacious Refurbished Room in Bushwick!, Charlotte

        Day 2: \\
        Current City: Charlotte \\
        Transportation: - \\
        Breakfast: Olive Tree Cafe, Charlotte \\
        Attraction: The Mint Museum, Charlotte;Romare Bearden Park, Charlotte; \\
        Lunch: Birbal Ji Dhaba, Charlotte \\
        Dinner: Pind Balluchi, Charlotte \\
        Accommodation: Affordable Spacious Refurbished Room in Bushwick!, Charlotte

        Day 3: \\
        Current City: from Charlotte to Ithaca \\
        Transportation: Flight Number: F3786167, from Charlotte to Ithaca, Departure Time: 21:42, Arrival Time: 23:26 \\
        Breakfast: Subway, Charlotte \\
        Attraction: Books Monument, Charlotte; \\
        Lunch: Taste of Beijing, Charlotte \\
        Dinner: Kylin Skybar, Charlotte \\
        Accommodation: -

        ***** Example Ends *****

        You MUST output the travel plan as a JSON array. Each element is a dict with keys: \\
        "days", "current\_city", "transportation", "breakfast", "attraction", "lunch", "dinner", "accommodation". \\
        Use "-" for fields that are not applicable. Attractions should be separated by semicolons with a trailing semicolon. \\
        Output ONLY the JSON array, no other text.

    \end{tcolorbox}
    \caption{PaCoRe  input serialization template for TravelPlanner (Part 2 of 2).}
    \label{fig:prompt_travelplanner_controller_2}
\end{figure*}

% \clearpage
% \input{checklist}

\end{document}